\newcommand{\vct}{\boldsymbol }
\newcommand{\op}{\mathrm{op}}
\newcommand{\kl}{\mathrm{KL}}
\renewcommand{\tilde}{\widetilde}
\renewcommand{\hat}{\widehat}
\renewcommand{\bar}{\overline}
\newcommand{\mad}{\mathrm{MAD}}
\newcommand{\diag}{\mathrm{diag}}
\newcommand{\poly}{\mathrm{poly}}
\newcommand{\tr}{\mbox{\rm tr}\, }
\renewcommand{\hat}{\widehat}
\renewcommand{\tilde}{\widetilde}
\def\BState{\State\hskip-\ALG@thistlm}
\newcommand{\ud}{\mathrm d}
\newcommand{\mA}{\mathcal A}
\newcommand{\mE}{\mathcal E}
\newcommand{\mX}{\mathcal X}
\newcommand{\mF}{\mathcal F}
\newcommand{\mZ}{\mathcal Z}
\newcommand{\mD}{\mathcal D}
\newcommand{\mO}{\mathcal O}
\newcommand{\mB}{\mathcal{B}}
\newcommand{\reg}{\mathrm{Reg}}
\newcommand{\lap}{\mathrm{Lap}}
\newcommand{\tot}{\mathrm{tot}}
\newcommand{\tv}{\mathrm{TV}}
\begin{document}


\RUNAUTHOR{Li et al.}

\RUNTITLE{Optimal Locally Private Linear Contextual Bandit}

\TITLE{On the Optimal Regret of Locally Private Linear Contextual Bandit}

\ARTICLEAUTHORS{%
\AUTHOR{Jiachun Li}
\AFF{Institute for Data, Systems and Society, Massachusetts Institute of Technology, Cambridge, MA 02142, USA}
\AUTHOR{David Simchi-Levi}
\AFF{Institute for Data, Systems and Society, Operations Research Center, Department of Civil and Environmental Engineering,
Massachusetts Institute of Technology, Cambridge, MA 02142, USA}
\AUTHOR{Yining Wang}
\AFF{Naveen Jindal School of Management, University of Texas at Dallas, Richardson, TX 75080, USA}
} 

\ABSTRACT{Contextual bandit with linear reward functions is among one of the most extensively studied models in bandit and online learning research.
Recently, there has been increasing interest in designing \emph{locally private} linear contextual bandit algorithms, where sensitive information
contained in contexts and rewards is protected against leakage to the general public.
While the classical linear contextual bandit algorithm admits cumulative regret upper bounds of $\tilde O(\sqrt{T})$ via multiple alternative methods,
it has remained open whether such regret bounds are attainable in the presence of local privacy constraints, with the state-of-the-art result being $\tilde O(T^{3/4})$.
In this paper, we show that it is indeed possible to achieve an $\tilde O(\sqrt{T})$ regret upper bound for locally private linear contextual bandit.
Our solution relies on several new algorithmic and analytical ideas, such as the analysis of mean absolute deviation errors and layered principal component regression
in order to achieve small mean absolute deviation errors.

 This version: \today
}

\KEYWORDS{Contextual bandit, differential privacy, principal component regression}


\date{}

\maketitle

\section{Introduction}

Contextual bandit with linear reward functions is among one of the most extensively studied models in bandit and online learning research.
In this model, at every time period an algorithm receives a context and must select among one of the many actions. Given a pair of context and selected action,
the expected reward is modeled by a $d$-dimensional \emph{linear} model.
The objective of the algorithm is to balance estimation of reward models and exploitation of high-reward actions, so that the cumulative reward is maximized (or equivalently,
the cumulative regret is minimized). Sec.~\ref{subsec:model} provides a rigorous mathematical formulation of the linear contextual bandit model studied in this paper.
There has been many algorithms that are designed for the linear contextual bandit problem, such as the LinUCB and SupLinUCB algorithms designed specifically for linear or generalized linear models \citep{abbasi2011improved,rusmevichientong2010linearly,auer2002using,filippi2010parametric} and more general algorithms that apply to general function classes beyond the linear case \citep{auer2002nonstochastic,simchi2022bypassing,agarwal2014taming,foster2020beyond,xu2020upper}.
While there are subtle and nuanced differences in problem settings, assumptions and regret dependency, one common feature of these existing results is that all algorithms enjoy worst-case regret upper bound at the asymptotic order of $\tilde O(\sqrt{T})$
when there are $T$ time periods, which is minimax optimal for gap-free contextual bandit problems \citep{auer2002nonstochastic}.

In recent years, the issue of \emph{data privacy} \citep{dwork2014algorithmic} has received increasing interest in computer science and operations research communities, including in contextual bandit problems.
This is motivated by the observation that, in many real-world applications of contextual bandit, context vectors and actions contain sensitive information such as past purchases, credit worthiness and
offered prices, which should not be disclosed to the general public \citep{chen2023differential,chen2022privacy,fu2022privacy,lei2023privacy}. 
Using the notion of \emph{local privacy} first introduced by \cite{duchi2018minimax}, the work of \cite{zheng2020locally} designed algorithms for linear and generalized linear contextual bandits
with a regret upper bound of $\tilde O(T^{3/4})$.
While improved regret is possible with strong assumptions (especially \emph{eigenvalue-type conditions}, as in \citep{han2021generalized} which we also discuss in more details in Sec.~\ref{subsec:related-works}),
it is not known whether the regret could be improved to match $\tilde O(\sqrt{T})$ in the non-private case without strong eigenvalue conditions.
Indeed, it was suggested in \citep{zheng2020locally} that $\tilde O(\sqrt{T})$ regret may not be possible with local privacy constraints in the general case, and lower bounds should be pursued.

In this paper, we make two contributions towards the locally private linear contextual bandit problem:
\begin{enumerate}
\item On the lower bound side, we formalize the intuitions behind impossibility arguments in \citep{zheng2020locally}, by rigorously showing that 1) no analysis based on the \emph{mean-square error} could achieve a regret upper bound lower than $\tilde O(T^{3/4})$, and 2) no bandit algorithm based on \emph{input perturbation} could achieve a regret upper bound lower than $\tilde O(T^{2/3})$.
This two (partially) negative results explain why it is so challenging to achieve $\tilde O(\sqrt{T})$ regret for locally private linear contextual bandit, because many local privacy mechanisms
for linear models are input perturbation methods, and virtually all existing analysis of linear or even general contextual bandit relies on the mean-square error notion.
\item On the upper bound side, we show that despite the negative results, it is actually \emph{possible} to achieve $\tilde O(\sqrt{T})$ regret with local privacy constraints.
The algorithm and analysis that achieve this result would need to circumvent the negative results established: that they \emph{cannot} be based on the analysis of the mean-square error,
and they must use locally private mechanisms more sophisticated than input perturbation, which essentially rules out non-interactive local privacy mechanisms.
In our analysis, we rely on the \emph{mean-absolute deviation} (MAD) error and use a layered principal component regression approach to \emph{adaptively} partition contextual vectors
into hierarchical bins. Because of the non-standard algorithm and analysis, we believe the results of this paper would have implications to a wider class of privacy-constrained bandit problems as well.
Our results also leave a sequence of follow-up works open, as we remark in the final section of this paper.
\end{enumerate}

The rest of the paper is organized as follows. In the remainder of the introduction section we formally introduce the model formulation, including linear contextual bandit and locally private bandit models.
We also cover some additional related works on contextual bandit and privacy and position our contributions within the literature.
We then proceed to introduce two negative results and afterwards our main algorithm with regret upper bounds.
Finally, we complete the paper by mentioning several interesting future research directions.

\subsection{Model specification and definitions}\label{subsec:model}

We consider a standard linear contextual bandit model with finite number of actions and stochastic contexts. Let $\mathcal X$ be a compact context space and $\mathcal A$ be a discrete action space
such that $|\mathcal A|\leq A<\infty$. Let $P_{\mX}$ be an unknown distribution supported on $\mX$. Let $\mF$ be a hypothesis class such that each function $f\in\mF: \mX\times\mA\to\mathbb R$
maps a pair of context and action into an expected reward. In this paper, we focus exclusively on the \emph{linear} hypothesis class, for which every $f\in\mF$ can be parameterized as
$$
f(x,a) = \langle\phi(x,a),\theta_f\rangle,
$$
where $\theta_f\in\mathbb R^d$, $\|\theta_f\|_2\leq 1$ is a linear model associated with the hypothesis $f\in\mF$ and $\phi: \mX\times\mA\to\{z\in\mathbb R^d: \|z\|_2\leq 1\}$
is a known feature map that maps a context-action pair to a $d$-dimensional vector with bounded norm.

The linear contextual bandit problem consists of $T$ consecutive time periods equipped with unknown context distribution $P_{\mX}$ and an unknown linear model $f^*\in\mF$. 
At the beginning of time period $t\in[T]$, a context $x_t\sim P_{\mX}$ is sampled\footnote{For some contextual bandit algorithms such as LinUCB and EXP4, the contexts do not have to be stochastic
and could even be adaptively adversarial. Our analysis unfortunately does not work for such adversarial context settings. See Sec.~\ref{sec:conclusion} for further discussion on this point.} from $P_{\mX}$ and revealed to the
bandit algorithm. The algorithm then must take an action $a_t\in\mA$. Afterwards, a random reward $y_t\in[-1,1]$ is realized and observed such that
$$
\mathbb E[y_t|x_t,a_t] = f^*(x_t,a_t).
$$
The effectiveness of a policy $\pi$ is measured by its \emph{cumulative regret} against a clairvoyant policy $\pi^*$ which has perfect information of $P_{\mX}$ and $f^*$. 
More specifically, the cumulative regret of policy $\pi$ is defined as
\begin{equation}
\reg(\pi) := \mathbb E^\pi\left[\sum_{t=1}^T f^*(x_t,a_t^*) - f_t(x_t,a_t)\right]\;\;\;\;\;\;\text{where}\;\; a_t^* = \arg\max_{a\in\mA}f^*(x_t,a).
\label{eq:defn-reg}
\end{equation}
It is clear from definition that $\reg(\pi)$ is always non-negative, and smaller regret indicates better performance of a bandit algorithm.

A bandit algorithm or policy $\pi$ cannot make the action decision $a_t$ at time $t$ arbitrarily. In particular, the following two considerations are important:
\begin{enumerate}
\item \emph{admissibility}: the policy $\pi$ with the actual context distribution $P_{\mX}$ and model $f^*$ unknown, must gather data and information sequentially and make decisions
only based on \emph{past} data and observations;
\item \emph{privacy}: the policy $\pi$, while utilizing past data and observations to learn about context distributions or the underlying reward model $f^*$, must respect the \emph{privacy}
of past data involved in such procedures, including both context vectors and realized (observed) rewards.
\end{enumerate}

To formalize the two above-mentioned considerations, we use the following definition to rigorously define the class of bandit algorithms/policies allowed in this paper.
\begin{definition}
Let $\varepsilon\in(0,1]$ be a privacy parameter. A policy $\pi$ is \emph{admissible} and satisfies $\varepsilon$-local differential privacy ($\varepsilon$-LDP) if and only if it can be
parameterized as $\pi=(\mathfrak A_1,\mathfrak Z_1,\cdots,\mathfrak A_T,\mathfrak Z_t)$, such that for every time period $t$, the action $a_t\in\mA$ and a suitably defined internal anonymized
data entry $z_t\in\mZ$ are generated as:
\begin{enumerate}
\item $a_t\sim\mathfrak A_t(\cdot|z_1,\cdots,z_{t-1},x_t)$;
\item $z_t\sim\mathfrak Z_t(\cdot|z_1,\cdots,z_{t-1},x_t,a_t,y_t)$;
\end{enumerate}
furthermore, for every $x,x'\in\mX$, $a,a'\in\mA$, $y,y'\in[-1,1]$, $z_1,\cdots,z_{t-1}\in\mZ$ and every measurable $Z\subseteq \mZ$, it holds that
$$
\frac{\mathfrak Z_t(Z|z_1,\cdots,z_{t-1},x,a,y)}{\mathfrak Z_t(Z|z_1,\cdots,z_{t-1},x',a',y')} \leq e^{\varepsilon}.
$$
\label{defn:admissible}
\end{definition}

Definition \ref{defn:admissible} is similar to the definition of \emph{adaptive} local privacy in \citep{duchi2018minimax}, which is adapted to bandit problems in a ``non-anticipating''
sense standard to differentially private bandit formulations \citep{chen2023differential,chen2022privacy,shariff2018differentially}.
More specifically, when making the decision of $a_t$ the exact context $x_t$ could be utilized, and anonymization of $x_t$ (and $y_t$) only occurs \emph{after} time period $t$.
This is necessary because otherwise a linear regret lower bound applies, as shown in \citep{shariff2018differentially}.

\subsection{Related works}\label{subsec:related-works}

This works studies locally private linear contextual bandit, two very important questions in theoretical computer science, machine learning and operations research.
For contextual bandit, some representative references include \citep{auer2002nonstochastic,foster2020beyond,agarwal2014taming,simchi2022bypassing,beygelzimer2011contextual,xu2020upper}.
When specialized to linear or generalized linear settings, \emph{optimism} is an alternative effective approach which in general could deal with adversarial contexts and potentially very large action
space \citep{auer2002using,rusmevichientong2010linearly,abbasi2011improved,filippi2010parametric}.
The works of \cite{bubeck2012regret,lattimore2020bandit} provide excellent surveys on results of contextual bandit research.

The notion of \emph{differential privacy} was formalized in the seminar paper of \cite{dwork2006differential} and has seen a great amount of research since then.
We refer the readers to \citep{dwork2014algorithmic} for an excellent survey and review of progress in differential privacy research.
\emph{Local differential privacy} was first proposed by \cite{duchi2018minimax} as a strengthening to centralized differential privacy that is more applicable
to learning problems and statistical analysis, and has since then become a popular differential privacy measure for a number of statistics and machine learning problems.
The major difference of local differential privacy as opposed to the classical centralized privacy measure is the notion of an untrusted data aggregator, so that individual data samples must be
anonymized \emph{prior to} any statistical or algorithmic procedures, which is in contrast to centralized privacy that only needs to anonymize any publicized outputs.
The works of \citep{duchi2018minimax,duchi2024right} studied optimal rates of convergence for a number of models subject to local privacy constraints.
\cite{wang2023generalized} studied sample complexity upper bounds for generalized linear model using the criterion of \emph{excess risk}, which is different from
model estimating errors required in the bandit setting.
The work of \cite{chen2023differential} also provides an overview and comparison between local and centralized privacy notions under bandit and pricing settings.

We next discuss in further details of existing works on differentially private contextual bandit research.
The work of \citep{shariff2018differentially} is among one of the first papers studying differentially private linear bandit, by proposing the anticipating differential privacy notion
and analyzing a LinUCB procedure with perturbed sufficient statistics. The paper focuses solely on the centralized privacy setting and achieves a regret upper bound of $\tilde O(\sqrt{T}/\alpha)$
omitting other problem dependent parameters, where $\alpha$ is the privacy parameter.
The algorithm and analysis could be easily adapted to local privacy settings achieving a regret upper bound of $\tilde O(T^{3/4}/\alpha)$.
Going beyond linear models, \cite{chen2022privacy} extends the methods to generalized linear pricing models under centralized privacy settings,
requiring more sophisticated methods such as perturbed MLE and tree-based covariance anonymizations.
\cite{lei2023privacy} studies a similar model under offline settings and both centralized and local privacy notions.
\cite{chen2023differential} studies local privacy algorithms for \emph{non-parametric} contextual pricing models, built upon the models introduced by \cite{chen2021nonparametric}.
The works of \cite{zheng2020locally,han2021generalized} studied local privacy algorithms for generalized linear models.

The work of \cite{han2021generalized} achieves $\tilde O(\sqrt{T}/\alpha)$ regret for locally private generalized linear bandit, under the additional assumption that the context vectors
are relatively ``spread-out'' so that the least eigenvalue of the sample covariance or the projection of the context vectors is not too small.
Such eigenvalue type conditions potentially make the problem much easier.
For example, for purely linear models if the sample covariance $\Lambda$ after $n$ time periods satisfies $\Lambda\succeq\Omega(1)\cdot I$,
then it is not difficult to prove that the anonymized sample covariance $\hat\Lambda$ satisfies \emph{spectral} approximation $\hat\Lambda-\Lambda\preceq \tilde O_P(1/\sqrt{n})\cdot \Lambda$ with high probability, 
which then translates to an $\tilde O(\sqrt{T}/\alpha)$ regret.
When $\Lambda$ is not well-conditioned (e.g.~$\lambda_{\min}(\Lambda)$ growing significantly slower than $O(n)$ while $\lambda_{\max}(\Lambda)$ grows linearly as $\Omega(n)$), however, such spectral approximation may not hold and the best one can obtain from such a procedure is an $\tilde O(T^{3/4}/\alpha)$ regret.
For generalized linear models, while algorithms are different (as we no longer have simple sufficient statistics), similar phenomenon is also observed by \cite{chen2022privacy}
which shows that with additional eigenvalue conditions the regret of a centralized private algorithm could be vastly improved.

A closely related bandit question to what we studied is the so-called \emph{linear bandit}, for each time period a bandit algorithm selects an action $a_t$
from a \emph{fixed} action space $\mA$ each time without observing contexts and the rewards given action $a_t\in\mA$ is modeled by a linear function.
While similar in appearance, such settings are fundamentally different from linear \emph{contextual} bandit because there is no context and therefore actions are usually \emph{not} considered
sensitive since they are selected solely by the algorithm.
\cite{hanna2022differentially} studied local and central differential privacy under this setting and obtain near-optimal regret using variations to experimental design type algorithms.
The work of \cite{hanna2023contexts} shows that linear contextual bandit can be reduced to a sequence of linear bandit problems without contexts.
However, such reduction uses all historical data and is less likely to be amenable to local privacy constraints.

\section{Motivating negative results and examples}\label{sec:negative}

Before presenting our algorithm and analysis, we discuss several motivating negative examples and results which shed light on why the locally private contextual bandit 
problem is technically challenging. For simplicity, in this section all problem setups are offline: they are sufficient to demonstrate that certain important steps in existing
contextual bandit analysis cannot be adapted under local privacy settings, or popular local private estimation techniques cannot be directly applied to deliver tight bounds
for contextual bandit problems.

\subsection{Fundamental limits of the mean-square error}

Most existing works on (linear) contextual bandit rely on estimating the underlying model with small mean-square errors.
More specifically, let $\mD$ be an unknown distribution supported on a compact subset of $\mathbb R^d$ and $\phi_1,\cdots,\phi_n\overset{i.i.d.}{\sim}\mD$.
Let $y_1,\cdots,y_n\in\mathbb R$ be noisy labels such that $\mathbb E[y_i|\phi_i]=\phi_i^\top\theta^*$. It is possible to obtain an estimate $\hat\theta_n$ based on $\{(x_i,y_i)\}_{i=1}^n$
such that with high probability,
\begin{equation}
\mathbb E_{\phi\sim\mD}\left[\big|\phi^\top(\hat\theta_n-\theta^*)\big|^2\right] = (\hat\theta_n-\theta^*)^\top \Lambda_\mD (\hat\theta_n-\theta^*) = \tilde O(n^{-1}),
\label{eq:ols-mse}
\end{equation}
where $\Lambda_{\mD}=\mathbb E_{\mD}[\phi\phi^\top]$ and in the $\tilde O(\cdot)$ notation we omit polynomial dependency on $d$ and $\ln n$.
Without privacy considerations, Eq.~(\ref{eq:ols-mse}) could be attained by simply using the ordinary least-squares, or its penalized version.
The work of \cite{simchi2022bypassing} (Bypassing-the-Monster) and \cite{foster2018practical} (RegCB) specifically rely on such oracles with $\tilde O(n^{-1})$ convergence
rate in order to obtain an $\tilde O(\sqrt{T})$ cumulative regret.
For the more classical LinUCB \citep{abbasi2011improved,rusmevichientong2010linearly}, SupLinUCB \citep{auer2002using} or high-dimensional regression algorithms \citep{chen2022nearly,bastani2020online}
 designed specifically for linear models,
the guarantee of Eq.~(\ref{eq:ols-mse}) is also essential in the regret analysis. All of these existing works on (linear) contextual bandit share the following two important properties:
\begin{enumerate}
\item Eq.~(\ref{eq:ols-mse}) must be established for \emph{any} distribution $\mD$ supported on (a compact subset of) $\mathbb R^d$, because the policy assignments of actions
will alter the distributions of $\phi$ which are essentially out of control of the bandit algorithm;
\item The right-hand side of Eq.~(\ref{eq:ols-mse}) must be on the order of $\tilde O(n^{-1})$, so that after certain Cauchy-Schwarz relaxations the resulting cumulative regret is upper bounded
by $\tilde O(\sqrt{T})$, omitting polynomial dependency on $d$ and $\ln T$ terms.
\end{enumerate}

In the following result, we show, quite surprisingly, that Eq.~(\ref{eq:ols-mse}) is impossible for \emph{any} locally private estimators on general $\mD$ distributions.
\begin{theorem}
Fix $\alpha\in(0,1]$ and $d=2$. There exists a numerical constant $\underline C_1>0$ such that the following holds: for any $n\geq 1$, there exists a distribution $\mD_n$ supported on
$\{x\in\mathbb R^d: \|x\|_2\leq 1\}$ and parameter class $\Theta_n\subseteq\{\theta\in\mathbb R^d:\|\theta\|_2\leq 1\}$ such that for any estimate $\hat\theta_n$ that is $\alpha$-locally private,
$$
\sup_{\theta^*\in\Theta_n}\mathbb E_{\theta^*,\mD_n}\left[(\hat\theta_n-\theta^*)^\top\Lambda_{\mD_n}(\hat\theta_n-\theta^*)\right]\geq \frac{\underline C_1}{(e^\alpha-1)\sqrt{n}},
$$
where $\Lambda_{\mD_n}=\mathbb E_{\mD_n}[\phi\phi^\top]$.
\label{thm:ols-mse}
\end{theorem}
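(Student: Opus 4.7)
The strategy is Le Cam's two-point method combined with the strong data-processing inequality for $\alpha$-LDP channels from \cite{duchi2018minimax}, applied to a ``spiky'' design distribution on which the Cauchy--Schwarz bound relating input total-variation to weighted Euclidean distance is very loose. First I would fix a small parameter $q_n \in (0,1]$ (to be optimized) and take $\mD_n$ to be the two-atom distribution on $\{e_1,e_2\}\subset\R^2$ with masses $(q_n,1-q_n)$, so $\Lambda_{\mD_n}=\diag(q_n,1-q_n)$. I would set $\Theta_n=\{0,\,te_1\}$ for some $t\in(0,1]$, and use a Bernoulli observation model $y\in\{-1,+1\}$ with $\P(y=1\mid\phi,\theta)=(1+\phi^\top\theta)/2$, which is consistent with $y\in[-1,1]$ and $\E[y\mid\phi,\theta]=\phi^\top\theta$. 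This reduces the lower bound to testing $\theta_0=0$ against $\theta_1=te_1$ from the privatized transcript $(z_1,\dots,z_n)$.

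Next I would compute the per-sample input total variation and combine it with the LDP contraction. Since the $\phi$-marginal is common to both hypotheses,
$$\tv(P_{\theta_0},P_{\theta_1}) \;=\; \tfrac{1}{2}\,\E_{\phi\sim\mD_n}\!\bigl[|\phi^\top(\theta_1-\theta_0)|\bigr] \;=\; \tfrac{1}{2}\,q_n t,$$
whereas the weighted separation satisfies $(\theta_1-\theta_0)^\top\Lambda_{\mD_n}(\theta_1-\theta_0)=t^2 q_n$. Crucially, $q_n t\ll t\sqrt{q_n}$ when $q_n\ll 1$; this Cauchy--Schwarz gap is precisely what allows a weighted MSE rate slower than the one-dimensional private-mean rate $\Omega(1/(n\alpha^2))$. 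Invoking the sequentially-interactive form of the DJW contraction---which applies to the policy class in Definition~\ref{defn:admissible} via a per-step chain-rule for KL, since each conditional channel $\mathfrak Z_t(\cdot\mid z_{<t},\cdot)$ is itself $\alpha$-LDP---yields
$$\kl(M_{0}^n\,\|\,M_{1}^n) \;\leq\; 4n(e^\alpha-1)^2\,\tv(P_{\theta_0},P_{\theta_1})^2 \;=\; n(e^\alpha-1)^2\,q_n^2 t^2.$$

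Finally, Pinsker's inequality gives $\tv(M_0^n,M_1^n)\le(e^\alpha-1)q_n t\sqrt{n/2}$, and the standard estimation-to-testing reduction (Markov applied to $(\hat\theta_n-\theta_i)^\top\Lambda_{\mD_n}(\hat\theta_n-\theta_i)$, then Le Cam on the resulting test) yields
$$\sup_{\theta^*\in\Theta_n}\E_{\theta^*,\mD_n}\!\bigl[(\hat\theta_n-\theta^*)^\top\Lambda_{\mD_n}(\hat\theta_n-\theta^*)\bigr] \;\geq\; \tfrac{t^2 q_n}{8}\bigl(1-\tv(M_0^n,M_1^n)\bigr).$$
Choosing $t=1$ and $q_n=1/(2\sqrt{2n}(e^\alpha-1))$ (valid once $(e^\alpha-1)\sqrt n\ge 1/(2\sqrt 2)$) makes the TV at most $1/4$ while maximising $t^2 q_n=\Theta(1/((e^\alpha-1)\sqrt n))$, delivering the claim after absorbing absolute constants into $\underline C_1$; the complementary small-$n$ regime is handled by shrinking $\underline C_1$ until the inequality becomes vacuous there. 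The main technical obstacle is justifying the DJW-type contraction for the fully adaptive privatization channels of Definition~\ref{defn:admissible}, which requires the per-step chain-rule argument above rather than a black-box product-measure tensorization, and also requires checking that the adversary may assume the estimator knows $\mD_n$ (which only strengthens the lower bound).
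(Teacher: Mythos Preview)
Your proposal is correct and follows essentially the same route as the paper: a two-point Le Cam construction on a design that samples one coordinate direction with probability $\Theta(1/((e^\alpha-1)\sqrt{n}))$, separation along that rare direction, the DJW $\alpha$-LDP contraction for KL in terms of input TV, and Pinsker to finish. The only cosmetic differences are a relabeling of coordinates ($e_1$ rare for you, $e_2$ rare in the paper), your explicit Bernoulli reward model and Le Cam bookkeeping versus the paper's slightly looser $\tv\le\Pr[\phi\text{ lies on the rare direction}]$, and your more careful treatment of sequential interactivity and the small-$n$ regime.
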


Theorem \ref{thm:ols-mse} is proved using the information-theoretical tools developed in \citep{duchi2018minimax} and is given in the supplementary material.
Its basic idea is simple: we design the distribution $\mD$ in such a way that one particular direction is explored with only probability $1/\sqrt{n}$. More specifically,\footnote{In the actual proof the $1/\sqrt{n}$ term is decorated with some constants to make probabilistic arguments work. Details in the supplementary material.}
\begin{equation}
\phi\sim\mD_n: \;\;\;\;\;\; \phi = \left\{\begin{array}{ll} e_1&\text{ with probability $1-1/\sqrt{n}$},\\ e_2 & \text{ with probability $1/\sqrt{n}$}.\end{array}\right.
\label{eq:md-1}
\end{equation}
With local privacy constraints, it can be proved rigorously that no procedure can detect such exploration with high probability (think about counting the occurrences of a particular event:
if the event only occurs $O(\sqrt{n})$ times out of $n$ total periods then no locally private method can recover the count accurately).
On the other hand, because the direction is explored with probability at least $1/\sqrt{n}$, failure to estimate along the direction leads to an MSE of at least $\Omega(1/\sqrt{n})$, completing the proof.

Theorem \ref{thm:ols-mse} shows that for general distribution $\mD$, no locally private procedure could attain a mean-square error at the rate of $\tilde O(1/n)$.
Instead, the best one can do is $\tilde O(1/\sqrt{n})$ for the mean-square error. This combined with Cauchy-Schwarz inequalities in existing analysis (e.g.~Bypassing the monster \citep{simchi2022bypassing} or LinUCB \citep{abbasi2011improved,rusmevichientong2010linearly}) leads to a regret of $\tilde O(T^{3/4})$, which is sub-optimal.
Coincidentally, the regret scaling of $\tilde O(T^{3/4})$ is the same with bounds obtained in \citep{zheng2020locally}, which is unsurprising in light of Theorem \ref{thm:ols-mse} because the 
approach and analysis in \citep{zheng2020locally} are founded on mean-square errors, which must suffer from such fundamental limitations.

\subsection{Fundamental limits of input perturbation}

It is tempting to consider the result in Theorem \ref{thm:ols-mse} as an artifact of the mean-square error because the lower bound is constructed for a particular type of distribution
that explores some directions with very small probability. For such distributions, one might conjecture that the mean-absolute deviation
$$
\mathbb E_{\phi\sim\mD}\left[\big|\phi^\top(\hat\theta_n-\theta^*)\big|\right]
$$
is a more proper error measure. By Jensen's inequality if the MSE is $\tilde O(1/n)$ then the above error measure is upper bounded by $\tilde O(1/\sqrt{n})$ but the inverse is not true:
in the constructed counter-examples in Theorem \ref{thm:ols-mse} both error measures evaluate to $\tilde O(1/\sqrt{n})$.
It might be the case that we may still use existing contextual bandit algorithms with simple local privacy procedures such as input perturbation, only changing the analysis that circumvents the MSE
in a smart way and analyzing the mean absolute deviation instead.

Our following theorem shows that this is not the case. In particular, if one uses input perturbation to satisfy local privacy constraints, then no estimator could perform well even with 
the mean absolute deviation measure.

\begin{theorem}
Fix $\alpha\in(0,1]$ and $d=2$. There exists a numerical constant $\underline C_2>0$ such that the following holds: for any $n\geq 1$, there exists a distribution $\mD_n$ supported on 
$\{x\in\mathbb R^d:\|x\|_2\leq 1\}$ and parameter class $\Theta_n\subseteq\{x\in\mathbb R^d:\|x\|_2\leq 1\}$ such that for any estimator $\hat\theta_n$ obtained with input $\{\tilde\phi_i,\tilde y_i\}_{i=1}^n$,
where $\tilde\phi_i=\phi_i+\lap_d(0,\alpha^{-1})$, $\tilde y_i=y_i+\lap_1(0,\alpha^{-1})$, $\phi_i\overset{i.i.d.}{\sim}\mD_n$, $y_i=\phi_i^\top\theta^*$ and $\lap_d(\mu,\sigma)$ is the distribution of a $d$-dimensional vector with i.i.d.~components
that are distributed as a Laplace distribution with mean $\mu$ and scale parameter $\sigma$,
$$
\sup_{\theta^*\in\Theta_n}\mathbb E_{\mD_n,\theta^*}\left[\mE_{\mad}(\hat\theta_n,\theta^*;\mD_n)\right] \geq \frac{\underline C_2}{\sqrt[3]{n}},
$$
where $\mE_{\mad}(\hat\theta,\theta^*;\mD) = \mathbb E_{\phi\sim\mD}[|\phi^\top(\hat\theta-\theta^*)|]$ is the mean absolute deviation error of $\hat\theta$ with respect to $\mD$.
\label{thm:ols-mad}
\end{theorem}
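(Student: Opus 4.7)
The plan is to prove the lower bound via a Le Cam-style two-point hypothesis test, structurally analogous to the proof of Theorem~\ref{thm:ols-mse} but adapted to both the input-perturbation observation model and the mean-absolute-deviation error criterion. Specifically, I would construct $\mD_n$ as the Bernoulli mixture $(1-p)\delta_{e_1}+p\delta_{e_2}$ for a small parameter $p\in(0,1]$ to be tuned, and take $\Theta_n=\{0,\,c\,e_2\}$ for $c\in(0,1]$. Under this construction the MAD gap between the two hypotheses is $\E_\phi[|\phi^\top(c\,e_2)|] = cp$, so Le Cam's lemma yields
\[
\sup_{\theta^*\in\Theta_n}\E\bigl[\mE_{\mad}(\hat\theta_n,\theta^*;\mD_n)\bigr]\;\geq\;\tfrac{cp}{2}\bigl(1 - \tv(P_0^{\otimes n}, P_1^{\otimes n})\bigr),
\]
where $P_j$ denotes the single-sample law of $(\tilde\phi,\tilde y)$ under $\theta^*\in\{0,\,c\,e_2\}$.

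The technical heart of the argument is to control $\tv(P_0^{\otimes n}, P_1^{\otimes n})$. I would do this through the per-sample $\chi^2$-divergence, exploiting the conditional independence of $\tilde y$ and $\tilde\phi$ given $\phi$ under input perturbation. A direct expansion of the likelihood ratio, using that $P_0(\tilde\phi,\tilde y)$ factors as $q(\tilde\phi)f_\zeta(\tilde y)$ (since $y\equiv 0$ under $\theta^*=0$), gives the clean factorization
\[
\chi^2(P_1\,\|\,P_0) \;=\; \E_{\tilde\phi\sim P_0}\bigl[\P(\phi=e_2\mid\tilde\phi)^2\bigr]\cdot\chi^2\bigl(\lap(c,\, 1/\alpha)\;\|\;\lap(0,\, 1/\alpha)\bigr).
\]
The Laplace factor evaluates in closed form to $(e^{-2c\alpha}+2e^{c\alpha})/3-1 = \Theta(c^2\alpha^2)$ for $c\alpha\lesssim 1$, while the posterior-squared factor $\E[\P(\phi=e_2\mid\tilde\phi)^2]$ admits the trivial tower bound $\leq p$ and — more importantly — a sharper bound $\lesssim p^2\, M_\alpha$ in the small-$p$ regime, where $M_\alpha := 1+\chi^2(\lap_2(e_2,\, 1/\alpha)\,\|\,\lap_2(e_1,\, 1/\alpha))$ is the finite $\alpha$-dependent constant obtained by partitioning the $\tilde\phi$ sample space according to which Laplace mode dominates the marginal density. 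Tensorizing and applying Pinsker's inequality, $\tv\leq 1/2$ whenever the aggregate $n\cdot\chi^2(P_1\,\|\,P_0) = O(1)$.

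Balancing the MAD gap $cp$ against this distinguishability constraint and the box constraints $p,c\in(0,1]$ then yields the claimed $\Omega(n^{-1/3})$ lower bound after optimizing the parameters appropriately — the allocation I have in mind places $p$ at a polynomial scale below $1$ and sets $c$ to saturate the constraint $c\leq 1$, with the $\alpha$-dependent constants absorbed into $\underline C_2$. The main obstacle I anticipate is the refined analysis of $\E[\P(\phi=e_2\mid\tilde\phi)^2]$: the naive tower bound $\leq p$ only produces the weaker $\Omega(n^{-1/2})$ rate that a generic Le Cam argument would give, so obtaining the stronger $\Omega(n^{-1/3})$ rate specifically requires exploiting the Laplace-convolution smoothing that is the defining feature of input perturbation — precisely the property that more sophisticated (interactive or discretized) locally-private mechanisms may circumvent, and which makes Theorem~\ref{thm:ols-mad} a strict separation result between input perturbation and general LDP schemes.
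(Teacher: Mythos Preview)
Your proposal has a genuine gap: the two-point construction $\mD_n = (1-p)\delta_{e_1} + p\delta_{e_2}$ with $\Theta_n = \{0, ce_2\}$ cannot deliver the $\Omega(n^{-1/3})$ rate --- it tops out at $\Omega(n^{-1/2})$. Your factorization $\chi^2(P_1\|P_0) = \E_q[\pi^2]\cdot\chi^2(f_c\|f_0)$ is correct, and your refined bound $\E_q[\pi^2]\lesssim p^2 M_\alpha$ is also correct, but plugging in $\chi^2(f_c\|f_0) = \Theta(c^2\alpha^2)$ gives a per-sample $\chi^2$ of order $p^2c^2$. Tensorizing then forces $n\cdot p^2c^2 = O(1)$, i.e.\ $cp = O(n^{-1/2})$ --- and $cp$ is exactly your MAD separation. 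There is no parameter balancing left to do: the indistinguishability constraint \emph{is} the square of the MAD gap. (Your naive tower bound $\E[\pi^2]\leq p$ yields the same $n^{-1/2}$ after optimizing $c\sim n^{-1/2}$, $p=1$, so the refinement buys nothing here.)

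The idea you are missing is that the atoms of $\mD_n$ must themselves be placed at a shrinking scale, and symmetrically. The paper takes $\phi\in\{(1/2,\, cn^{-1/3}),\,(1/2,\,-cn^{-1/3})\}$ with equal weights $1/2$, and compares $\theta^*\in\{(1/2,0),(1/2,1/2)\}$, so the MAD gap is $\Theta(n^{-1/3})$ directly. The crucial gain comes from the \emph{symmetric} two-point mixture with small means: under either hypothesis the observed $(\tilde\phi_2,\tilde y)$ is a mixture $\tfrac12\lap_2(\mu;\alpha^{-1})+\tfrac12\lap_2(-\mu;\alpha^{-1})$ with $\|\mu\|\sim n^{-1/3}$, and for such a symmetric mixture the linear term in the likelihood-ratio expansion against a single centered Laplace cancels, giving $\kl(P_{\pm\mu}\|P_0)=O(\alpha^3\|\mu\|_1^3)$ rather than the generic $O(\|\mu\|^2)$. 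This cubic degeneracy --- the mixture-singularity phenomenon the paper cites --- is exactly what makes $n\cdot(n^{-1/3})^3 = O(1)$ work. Your construction with atoms at the fixed unit vectors $e_1,e_2$ has no such cancellation: the Laplace smoothing you identify is real, but it only buys the posterior-shrinkage factor $p^2$, which is quadratic rather than cubic in the relevant small parameter.
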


Before commenting how Theorem \ref{thm:ols-mad} is proved, we first comment on its implications. In the setup of this theorem there is a noiseless linear regression problem $y_i=\phi_i^\top\theta^*$
but both the feature vectors $\{\phi_i\}_{i=1}^n$ and the labels $\{y_i\}_{i=1}^n$ are anonymized using Laplace mechanism into $\{\tilde\phi_i,\tilde y_i\}_{i=1}^n$, a technique known as
\emph{input perturbation} in differential privacy research \citep{dwork2014algorithmic}. Afterwards any estimator based on $\{\tilde\phi_i,\tilde y_i\}_{i=1}^n$ is automatically differentially private
due to closeness to post-processing.
Theorem \ref{thm:ols-mad} essentially shows that such a straightforward approach of input perturbation cannot succeed, by demonstrating that any estimator based on the anonymized inputs $\{\tilde\phi_i,\tilde y_i\}_{i=1}^n$ must have a mean absolute deviation error of $\Omega(n^{-1/3})$, falling short of the desired $\tilde O(n^{-1/2})$ rate.
As a result, more sophisticated local privacy methods beyond input perturbation must be designed to succeed.

We remark that many existing approaches to locally private contextual bandit are based on or could be easily re-formulated as input perturbation.
For example, for purely linear models a common approach is to maintain the sufficient statistics $\Lambda=\sum_{i=1}^n \phi_i\phi_i^\top$ and $\mu =\sum_{i=1}^ny_i\phi_i$,
which can be done using Wishart and Laplace mechanisms. Such an approach is \emph{not} sufficient to obtain $\sqrt{T}$-regret results, as implied by Theorem \ref{thm:ols-mad}.

The complete proof of Theorem \ref{thm:ols-mad} is given in the supplementary material. At a high level, the proof is based on the following idea: the design distribution $\mD_n$ is constructed in the following manner:\footnote{In the actual proof the $n^{-1/3}$ terms are decorated with some constants to make probabilistic arguments work. Details in the supplementary material.}
$$
\phi\sim\mD_n:\;\;\;\;\;\;\phi = \left\{\begin{array}{ll} (1/2, n^{-1/3})&\text{ with probability $1/2$},\\ (1/2, -n^{-1/3})& \text{ with probability $1/2$}.\end{array}\right.
$$
Then the distribution of the second component of the anonymized $\tilde\phi$ is a \emph{mixture} of Laplace distributions with symmetric means.
Because such mixture distribution is exactly degenerate, there is no statistical methods that could actually estimate the mixture means,
similar to the phenomenon observed in \citep{chen1995optimal} for Gaussian mixture models and the more recent works of \cite{ho2019singularity,ho2016convergece}
for more general finite mixture models. The actual proof is based on the analysis of $\chi^2$-divergence between degenerate mixture models with different number of mixtures,
which explicitly demonstrates the singularity exhibited in the mixture distributions. 

\subsection{Discussion: necessity of input partitioning}

We further deepen the discussion of how to obtain $\tilde O(\sqrt{T})$ regret for locally private linear contextual bandit, in light of the above negative examples.
Let $\delta\asymp 1/\sqrt{n}$ be a small positive number and consider the following two scenarios for $d=2$:
\begin{align}
\text{Case I}:\;\;\phi=\left\{\begin{array}{ll}e_1,\text{ with prob.~$1-\delta$},\\
e_2,\text{ with prob.~$\delta$};\end{array}\right.\quad\text{ vs. }\quad
\text{Case II}:\;\;\phi=\left\{\begin{array}{ll}(1,\sqrt{\delta}),\text{ with prob.~$0.5$},\\
(1,-\sqrt{\delta}),\text{ with prob.~$0.5$}.\end{array}\right.
\end{align}
The two scenarios are similar to the two adversarial examples for Theorems \ref{thm:ols-mse} and \ref{thm:ols-mad}, respectively, and have almost the same covariance matrix
of $\diag(1, \delta)$. However, a locally private algorithm must treat these two scenarios very differently:
\begin{enumerate}
\item In case I, it is not necessary and indeed not possible to do anything on the second direction, because that direction only appears for $O(\sqrt{n})$ times out of $n$ time periods
which cannot be detected with local privacy constraints;
\item In case II, it is mandatory that the algorithm estimate the linear model along the second direction, because otherwise the regret could be $O(n^{3/4})$.
Such estimation cannot be done using simple input perturbation, as demonstrated by Theorem \ref{thm:ols-mad}. However, if we know that only small entries would appear on the second direction,
we could elect to privatize $(\phi_1,\phi_2/\sqrt{\delta})$ which still has constant sensitivity levels, thereby significantly reducing the variance along the second direction.
\end{enumerate}

The above observations motivate us to consider ``input partitioning'' strategies, which partitions the input vectors into several different bins and calibrate heteroskedastic noises into them.
This also helps eliminate infrequent regions that cannot be estimated (so that they will not interfere with model estimations at other regions).
This principle of partitioning of input vectors will be carried out through out our algorithm design in the remainder of this paper.

\section{Action elimination framework}

To give a high-level description of our proposed policy and also isolate the particularly challenging part of locally private linear contextual bandit,
we first describe and analyze an active elimination framework that assumes the access to a locally private offline regression oracle with uniform confidence intervals. 
The algorithm and analysis in this section are relatively simple and inspired heavily by the \texttt{RegCB.Elimination} algorithm in \citep{foster2018practical},
with two important differences. First, in our setting point-wise confidence intervals could be constructed in a uniform manner, which greatly simplifies the algorithm
and attains optimal regret bounds. Second, in our setting only the mean absolute deviation (MAD) error could be used, as discussed in the previous section.
As a result, certain steps of the analysis needs to be revised.

\subsection{Locally private offline regression oracle with confidence intervals}\label{subsec:oracle}

In this section we describe the definition of a locally private online regression oracle with confidence intervals,
which is then used in Algorithm \ref{alg:regcb} to construct a contextual bandit algorithm.
For notational simplicity, we augment $\mX$ to include a ``dummy'' context $\perp\in\mX$ such that $f^*(\perp,a)=0$ for all $a\in\mA$.
Let $\alpha,\delta\in(0,1]$ be the privacy parameter and the failure probability.

\begin{definition}[Locally private offline oracle with confidence intervals]
A sequence of offline oracles $\{\mO_n\}_{n\in\mathbb N}$ is equipped with error measures $\{\mE_{\alpha,\delta}(n)\}\subseteq\mathbb R_+$ with privacy parameter $\alpha\in(0,1]$ and failure probability $\delta\in(0,1]$,
if each $\mO_n$ could be parameterized as $\mO_n=(\mathfrak Z_1,\cdots,\mathfrak Z_n,\mathfrak O)$ such that for every $i\in[n]$, an internal entry $z_i\in\mZ$ is obtained with $z_i\sim \mathfrak Z_i(\cdot|z_1,\cdots,z_{i-1},x_i,a_i,y_i)$, and after $n$ time periods an estimate $\hat f:\mX\times\mA\to[-1,1]$ and confidence interval $\Delta:\mX\times\mA\to[-1,1]$
are obtained by $\hat f,\Delta=\mathfrak O(z_1,\cdots,z_n)$, such that the following hold:
\begin{enumerate}
\item (Privacy). For every $i\in[n]$, $z_1,\cdots,z_{i-1}\in\mZ$, $x,x'\in\mX$, $a,a'\in\mA$, $y,y'\in[-1,1]$ and measurable $Z\subseteq\mZ$, 
$\mathfrak Z_i(Z|z_1,\cdots,z_{i-1},x,a,y)\leq e^{\alpha}\mathfrak Z_i(Z|z_1,\cdots,z_{i-1},x',a',y')$;
\item (Utility). For every distribution $\mD$ supported on $\mX\times\mA$, let $(x_1,a_1),\cdots,(x_n,y_n)\overset{i.i.d.}{\sim}\mD$, and $y_1,\cdots,y_n$ realized according to Eq.~(\ref{eq:defn-reg}).
With probability $1-\delta$, the following hold:
\begin{enumerate}
\item For every $x,a\in\mX\times\mA$, $|\hat f(x,a)-f^*(x,a)|\leq\Delta(x,a)$;
\item $\mathbb E_{(x,a)\sim\mD}[\Delta(x,a)] \leq \mE_{\alpha,\delta}(n)$;
\item $\Delta(\perp,a)=0$ for all $a\in\mA$.
\end{enumerate}
\end{enumerate}
\label{defn:oracle}
\end{definition}

Intuitively, Definition \ref{defn:oracle} defines an oracle that estimates a contextual reward function $f$ from offline data in a locally private fashion.
In addition, the oracle also returns a point-wise confidence interval $\Delta(\cdot,\cdot)$ that is valid with high probability uniformly over all action-state pairs.
Using the confidence intervals, the mean-absolute deviation error of the learning procedure is also upper bounded by $\mE_{\alpha,\delta}(n)$ with high probability,
corresponding to the second property in the utility guarantees of Definition \ref{defn:oracle}.

\subsection{The action elimination algorithm}

\begin{algorithm}[t]
\caption{The action elimination (\texttt{RegCB.Elimination}) algorithm, simplified}
\label{alg:regcb}
\begin{algorithmic}[1]
\State \textbf{Input}: time horizon $T$, oracles $\{\mO_n\}_{n\in\mathbb N}$ with privacy parameter $\alpha$ and failure probability $\delta=1/T^2$, epoch schedule $n_1,n_2,\cdots$.
\State Initialize: $\hat f_1^a(\cdot)\equiv 0$ and $\Delta_1^a(\cdot)\equiv 1$ for all $a\in\mA$.
\For{$\tau=1,2,\cdots$ until $T$ time periods have elapsed}
	\State Initialize $A$ oracles $\{\mO_n^a\}_{a\in\mA}$ with $n=n_\tau$;
	\For{the next $n_\tau$ time periods}
		\State Observe $x_t\sim P_{\mX}$ and initialize $\mA_0(x_t) \gets \mA$;
		\For{$\tau'=1,2,\cdots,\tau$}
			\State $\mA_{\tau'}(x_t)\gets \{a\in\mA_{\tau'-1}: \hat f_{\tau'}^a(x_t)+\Delta_{\tau'}^a(x_t)\geq\max_{a'\in\mA_{\tau'-1}} \hat f_{\tau'}^{a'}(x_t)-\Delta_{\tau'}^{a'}(x_t)\}$;
			\label{line:defn-atau}
		\EndFor
		\State Take action $a_t\sim\mathrm{Uniform}(\mA_\tau(x_t))$ and observe realized reward $y_t$;
		\State Feed $(x_t,a_t,y_t)$ into $\mO_n^{a_t}$ and $(\perp,a,0)$ into $\mO_n^a$ for all $a\neq a_t$;
	\EndFor
	\State For every $a\in\mA$, obtain $\hat f_{\tau+1}^a(\cdot)=\hat f(\cdot,a)$ and $\Delta_{\tau+1}^a(\cdot)=\Delta(\cdot,a)$ from $\mO_n^a$.
\EndFor
\end{algorithmic}
\end{algorithm}

In this section, assuming the availability of an oracle defined in the previous section, we describe the RegCB.Elimination algorithm in \citep{foster2018practical}
and show how an upper bound on its regret could be established.
Algorithm \ref{alg:regcb} describes the pseudocode of a simplified version of the RegCB.Elimination algorithm.
The main difference and simplification comes from the availability of an explicitly constructed point-wise confidence interval $\Delta(\cdot,\cdot)$ from the input oracle.
In contrast, the work of \cite{foster2018practical} assumes no such confidence interval and therefore must resort to more complex procedures to carry out context-specific successive elimination
of actions.

The following theorem upper bounds the cumulative regret of Algorithm \ref{alg:regcb}, using MAD error bounds $\{\mE_{\alpha,\varepsilon}(\cdot)\}$ in the definition of the oracles $\{\mO_n\}$.
The theorem is stated in generality without specifying how epoch lengths $n_1,n_2,\cdots$ are set. The epoch schedules will be specified later when after we describe and analyze
the regression oracles to arrive at a final regret bound.

\begin{theorem}
The policy presented in Algorithm \ref{alg:regcb} satisfies $\alpha$-local differential privacy as defined in Definition \ref{defn:admissible}.
Furthermore, jointly with the success conditions in Definition \ref{defn:oracle} for all $\mO_n^a$ and all epochs $\tau$, the cumulative regret of Algorithm \ref{alg:regcb} is upper bounded by
$$
O\left(n_1 + A^2\sum_{\tau=2}^{\tau_0} n_\tau\mE_{\alpha,\delta}(n_{\tau-1})\right),
$$
where $\tau_0$ is the last epoch in the execution of the algorithm.
\label{thm:regcb}
\end{theorem}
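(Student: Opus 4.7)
My plan splits into privacy and utility. Privacy follows from Definition~\ref{defn:oracle}(1) applied to each of the $A$ oracles active in the current epoch: only the oracle indexed by $a_t$ consumes the actual triple $(x_t,a_t,y_t)$, while the remaining $A-1$ oracles receive the fixed dummy $(\perp,a,0)$ that is functionally independent of the user. Moving the user's data to a neighbour perturbs at most two oracle inputs (those indexed by $a_t$ and the neighbour's action), and each perturbation contributes a ratio of at most $e^{\alpha}$, which is absorbed into the constant (or handled by running each oracle at budget $\alpha/2$). With $\delta=1/T^2$ and at most $AT$ oracle runs, a union bound shows all confidence events in Definition~\ref{defn:oracle}(2) hold jointly with probability $1-O(A/T)$, so the regret analysis proceeds on this event.

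Under this event, an induction on $\tau'=0,1,\ldots,\tau$ establishes that $a_t^{*}\in\mA_{\tau'}(x_t)$ at every layer: if $a_t^{*}\in\mA_{\tau'-1}(x_t)$, then for every $a\in\mA_{\tau'-1}(x_t)$ the pointwise CI yields $\hat f_{\tau'}^{a_t^{*}}(x_t)+\Delta_{\tau'}^{a_t^{*}}(x_t)\geq f^{*}(x_t,a_t^{*})\geq f^{*}(x_t,a)\geq \hat f_{\tau'}^{a}(x_t)-\Delta_{\tau'}^{a}(x_t)$, so $a_t^{*}$ satisfies the survival criterion on Line~\ref{line:defn-atau}. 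Consequently, $a_t,a_t^{*}\in\mA_{\tau}(x_t)$ at every $t$ in epoch $\tau$, and the standard UCB chain
\[
f^{*}(x_t,a_t)\geq \hat f_\tau^{a_t}(x_t)-\Delta_\tau^{a_t}(x_t)\geq \hat f_\tau^{a_t^{*}}(x_t)-\Delta_\tau^{a_t^{*}}(x_t)-2\Delta_\tau^{a_t}(x_t)\geq f^{*}(x_t,a_t^{*})-2\Delta_\tau^{a_t^{*}}(x_t)-2\Delta_\tau^{a_t}(x_t)
\]
delivers the pointwise regret bound $f^{*}(x_t,a_t^{*})-f^{*}(x_t,a_t)\leq 2\Delta_\tau^{a_t^{*}}(x_t)+2\Delta_\tau^{a_t}(x_t)$.

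The final and most delicate ingredient is converting these pointwise CI widths into the oracle's MAD guarantee. Conditioning on the data from epochs $1,\ldots,\tau-2$, the random map $x\mapsto\mA_{\tau-1}(x)$ becomes deterministic, and the training stream of $\mO_{n_{\tau-1}}^{a}$ at epoch $\tau-1$ is i.i.d.\ from the distribution that draws $x\sim P_{\mX}$ and emits $(x,a)$ with probability $\mathbf{1}\{a\in\mA_{\tau-1}(x)\}/|\mA_{\tau-1}(x)|$ and $(\perp,a)$ otherwise. Applying Definition~\ref{defn:oracle}(2) with $\Delta_\tau^{a}(\perp)=0$ gives
\[
\mathbb E_x\!\left[\frac{\mathbf{1}\{a\in\mA_{\tau-1}(x)\}}{|\mA_{\tau-1}(x)|}\,\Delta_\tau^{a}(x)\right]\leq \mE_{\alpha,\delta}(n_{\tau-1}),
\]
so, using $|\mA_{\tau-1}(x)|\leq A$, summing over $a\in\mA$ and invoking $\mA_\tau(x)\subseteq\mA_{\tau-1}(x)$ produces $\mathbb E_{x}\bigl[\sum_{a\in\mA_\tau(x)}\Delta_\tau^{a}(x)\bigr]\leq A^{2}\,\mE_{\alpha,\delta}(n_{\tau-1})$. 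Since $a_t$ is uniform on $\mA_\tau(x_t)$ and $a_t^{*}\in\mA_\tau(x_t)$, this quantity dominates both $\mathbb E[\Delta_\tau^{a_t}(x_t)]$ and $\mathbb E[\Delta_\tau^{a_t^{*}}(x_t)]$, so the epoch-$\tau$ expected regret is at most $4A^{2}n_\tau\,\mE_{\alpha,\delta}(n_{\tau-1})$ for $\tau\geq 2$; bounding epoch one trivially by $2n_1$ and summing closes the proof. The main obstacle is this MAD-translation step: one must carry the $(\perp,a,0)$ injections correctly through Definition~\ref{defn:oracle}(2), and the identity $\Delta(\perp,a)=0$ is precisely what converts an expectation under the mixed training distribution into one under $P_{\mX}$, with the two factors of $A$ tracking respectively the sum over actions and the inverse sampling probability $1/|\mA_{\tau-1}(x)|$.
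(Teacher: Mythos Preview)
Your proposal is correct and follows essentially the same route as the paper's proof: both arguments (i) establish by induction on the elimination layers that $a_t^{*}\in\mA_{\tau'}(x_t)$ for all $\tau'$, (ii) use the standard UCB chain to bound the instantaneous regret by $2(\Delta_\tau^{a_t^{*}}(x_t)+\Delta_\tau^{a_t}(x_t))\leq 2\sum_{a\in\mA_\tau(x_t)}\Delta_\tau^a(x_t)$, and (iii) translate this into the oracle's MAD guarantee by noting that during epoch $\tau-1$ each action $a\in\mA_{\tau-1}(x)$ is sampled with probability at least $1/A$, so the indicator $\mathbf{1}\{a\in\mA_{\tau-1}(x)\}$ is dominated by $A$ times the training density of $\mO_{n_{\tau-1}}^a$; combined with $\mA_\tau(x)\subseteq\mA_{\tau-1}(x)$ and $\Delta_\tau^a(\perp)=0$ this yields the $A^2\mE_{\alpha,\delta}(n_{\tau-1})$ per-step bound. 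Your treatment is in fact slightly more explicit than the paper's on two points the paper glosses over: the need to condition on epochs $1,\ldots,\tau-2$ so that the training stream of each oracle in epoch $\tau-1$ is genuinely i.i.d., and the privacy-composition issue that changing $(x_t,a_t,y_t)$ may perturb the inputs of two oracles, requiring each to be run at budget $\alpha/2$ (or accepting a constant-factor loss).
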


Theorem \ref{thm:regcb} can be proved following roughly the same analysis in \citep{foster2018practical}, with only minor modifications. For completeness, we place a complete rigorous proof
of Theorem \ref{thm:regcb} in the supplementary material.
The high-level idea is to show that (with high probability) the action sets $\mA(x)$ always contain the optimal action $a^*(x)=\arg\max_{a\in\mA}f(x,a)$, and
that because uniform sampling over active actions is employed, an action that has not been eliminated receives at least $1/A$ of exploration probability from previous epochs.

Before proceeding, we remark that Theorem \ref{thm:regcb} only applies to bandit problems with stochastic contexts and a finite number of actions.
Neither the stochasticity nor the polynomial dependency on action space size $A$ could be removed using this algorithm, because the assumed regression oracle only operate on \emph{offline} data
and the minimum exploration probability scales inverse polynomially with $A$.
In Sec.~\ref{sec:conclusion} we further discuss this aspect and potential ways to remove stochasticity or finiteness assumptions.

\section{Layered private linear regression (LPLR)}\label{sec:lplr}

The purpose of this section is to design a locally private offline regression oracle with confidence intervals so that it provably satisfy the properties listed in Definition \ref{defn:oracle}.
As shown by Sec.~\ref{sec:negative}, this is not an easy task and cannot be accomplished by simple approaches such as input perturbation, or vanilla ordinary least squares.

Throughout this section we assume $\phi_1,\phi_2,\cdots\overset{i.i.d.}{\sim}\mD$ where $\mD$ is an unknown distribution supported on $\mX_\phi=\{\phi\in\mathbb R^d:\|\phi\|_2\leq 1\}$.
It is induced by the distribution of $\phi_i=\phi(x_i,a_i)$ where the context-action pairs $\{(x_i,a_i)\}$ are i.i.d.
Given $\phi_i\in\mX_\phi$, the realized reward $y_i$ satisfies $\mathbb E[y_i|\phi_i]=\phi_i^\top\theta^*$. We also assume for convenience that there are $2nd$ samples available 
because our method operations iteratively with $d$ iterations.
Finally, if an oracle received the empty context $x_i=\perp$, we will send to the oracle $\phi_i=(0,\cdots,0)$ and $y_i=0$.

\subsection{Key idea: Adaptive layered partitioning of $d$-dimensional space}

\begin{figure}[t]
\centering
\includegraphics[width=0.95\textwidth]{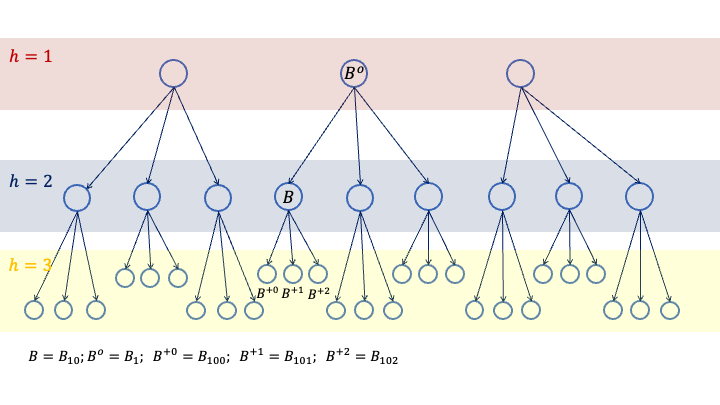}
\caption{Illustration of an example layered partitioning of $3$-dimensional space}
{\small Note: $M=3$, while the inactive bins (with maximum $k_h$ being equal to 3) are neglected. For a particular bin $B=B_{10}$ on the second layer with $k_1=1$, $k_2=0$,
bin $B^o$ is its unique parent bin on the first layer and $B^{+0}$, $B^{+1}$, $B^{+2}$ are three child bins on the immediate next layer (layer 3).}
\label{fig:bins}
\end{figure}

We first introduce the core concept of our method that involves a layered/hierarchical partition of $\mX_\phi$ into small bins.
The partition is constructed adaptively over $d$ iterations, each iteration expanding the hierarchical partition with one more layer based on the distribution $\mD$ of input covariates.
We also introduce notations for such layered partitioning that will be useful later when describing and analyzing our proposed algorithm.

Let $\beta>0$ be a small, positive parameter and denote $\gamma=T^\beta$, $M=\lceil 1/(2\beta)\rceil$.
The hierarchical partitioning tree has $d$ layers in total, {illustrated in Figure \ref{fig:bins}.}
We call a bin $B=B_{k_1\cdots k_h}$ a \emph{partitioning bin} if $k_1,\cdots,k_h<M$, because these are the bins that will be partitioned at further layers
The first layer has $M$ partitioning bins. For each partitioning bin on layer $h$, $h<d$, it is further partitioned into $M$ partitioning bins on layer $h+1$.
Therefore, layer $h$ would have a total of $M^h$ partitioning bins.
Such structure also means that we can uniquely identify any bin $B$ on layer $h$ with $B=B_{\vct k}$ where $\vct k=(k_1,\cdots,k_h)\in \{0,1,\cdots,M\}^h$.
This means that the parent bins of $B$ on layer $h-1$ is $B_{k_1\cdots k_{h-1}}$, on layer $h-2$ is $B_{k_1\cdots k_{h-2}}$, etc., and on the first layer is $B_{k_1}$.
To further simplify notations regarding parents (on previous/higher layers) and children (on later/lower layers) bins as follows. Given a bin $B=B_{\vct k}$, $\vct k=(k_1,\cdots,k_h)$
on the $h$-th layer, define the following:
\begin{enumerate}
\item $B^o=B_{k_1\cdots k_{h-1}}$ is the immediate parent bin of $B$ on the $(h-1)$th layer. $B^{-2}=(B^o)^o$ is the parent bin of $B^o$, $B^{-3}=(B^{-2})^o$ is the parent bin of $B^{-2}$, etc;
\item For $\ell_1,\ell_2,\cdots \in \{0,1,\cdots,M\}$, $B^{+\ell_1}=B_{k_1\cdots k_h\ell_1}$ is a child bin of $B$ on level $h+1$, $B^{+\ell_1\ell_2}=B_{k_1\cdots k_h\ell_1\ell_2}$ is a child bin of $B^{+\ell_1}$ on level $h+2$, etc.
\end{enumerate}

For each bin $B$ on the $h$th layer, we maintain the following statistics. These statistics or quantities are updated in a manner that will satisfy $\alpha$-local differential privacy as defined in property 1 of Definition \ref{defn:oracle}.
\begin{enumerate}
\item $\hat c_B\in\mathbb R$ and $\hat\psi_B = \hat c_B/n$, estimation of frequency and relative frequency of bin $B$;
\item $\hat\Lambda_B\in\mathbb R^{d\times d}$ and $\hat\lambda_B\in\mathbb R^d$, estimates of the conditional covariance and vector in bin $B$;
\item $\hat u_B\in\mathbb R^{d}$ a vector of unit $\ell_2$ norm and $\hat s_B\in\mathbb R_+$.
Furthermore, it is guaranteed that $\hat u_B$ is orthogonal to $\hat u_{B^o}$, $\hat u_{B^{-2}}$, etc. These quantities characterize the eigenspace partitioning;
furthermore, $\hat u_b,\hat s_b$ are set to zero for non-partitioning bins or inactive bins, which will be specified later in the algorithm;
\item $\hat\theta_B\in\mathbb R^d$, an estimated model at bin $B$ obtained via (locally private) principal component regression. It is also guaranteed that $\hat\theta_B$ is on the same
direction of $\hat u_B$, or more specifically $\hat\theta_B=\hat u_B\hat u_B^\top\hat\theta_B$;
\item $\hat\eta_B,\hat\chi_B:\mX_\phi\to[0,1]$ and $\hat\varepsilon_B\in\mathbb R$: upper bounds on partial predictive errors.
\end{enumerate}

For each vector $\phi\in\mX_\phi$ associated with $y\in[-1,1]$, there exists a unique bin at each level $h$ to which $\phi$ belongs,
and for such bin a revised vector $\phi_h\in\mX_\phi$ and a revised reward $y_h\in[-1,1]$ are associated with. They are determined by the following iterative procedure:
\begin{figure}[h]
\centering
\fbox{\begin{minipage}{0.85\textwidth}
$\phi_1=\phi$, $y_1=y$;

\textbf{For} $h=1,2,3,\cdots,d$ do:

\hspace{0.3in} $k_h=\max\{k\leq M: \|\phi_h\|_2\leq \gamma^{-k}\}$ and $\phi$ belongs to $B(h)=B_{k_1\cdots k_h}$ on level $h$;

\hspace{0.3in} $\phi_{h+1}=(I-\hat u_{B(h)}\hat u_{B(h)}^\top)\phi_h$ and $y_{h+1}=y_h-\langle \phi_h, \hat\theta_{B(h)}\rangle$.
\end{minipage}
}
\caption{Iterative procedure of partitioning $(\phi,y)$}
\label{fig:phih}
\end{figure}

We write $k_h(\phi)$ for the unique integer between $0$ and $M$ determined by the above procedure at layer $h$, and $\vct k(\phi)=(k_1(\phi),\cdots,k_d(\phi))$.
Sometimes we also write $\phi\in B$ if $\phi$ belongs to bin $B$ and $\phi\notin B$ otherwise.


\subsection{Overview of the algorithm}

\begin{algorithm}[t]
\caption{The LPLR algorithm with $nd$ samples}
\label{alg:lplr}
\begin{algorithmic}[1]
\For{$h=1,2,\cdots,d$}
	\State Initialize all bins on level $h$ so that all quantities are zero;
	\For{$i=1,2,\cdots,n$}
		\State Observe $\phi_i$, $y_i$ and let $B_{k_1\cdots k_h}$ be the bin on level $h$ determined by Figure \ref{fig:phih}; 
		\State $\textsc{LPLR-Update}(B, \phi_{i},y_{i},\alpha)$ for $B=B_{k_1\cdots k_h}$;
		\State $\textsc{LPLR-Update}(B,\perp,\perp,\alpha)$ for all $B\neq B_{k_1\cdots k_h}$ on level $h$;
	\EndFor
	\State For every $B$ on level $h$, $\hat u_B,\hat s_B,\hat\theta_B\gets\textsc{LPLR-PCR}(B,n)$;
	\State With $n$ additional samples $\{(\phi_i,y_i)\}_{i=1}^n$, $\{\hat\eta_B,\hat\chi_B\}\gets\textsc{LPLR-CI}(h,\{(\phi_i,y_i)\}_{i=1}^n,\alpha)$;
\EndFor
\State $\hat f(\cdot),\hat\Delta(\cdot)\gets\textsc{LPLR-Aggregate}(\{B\}, n)$.
\end{algorithmic}
\end{algorithm}

Algorithm \ref{alg:lplr} gives a pseudocode description of the algorithm that we use to construct the oracle defined in Sec.~\ref{subsec:oracle}.
It involves several sub-routines which will be explained in much more details later.
At a higher level, Algorithm \ref{alg:lplr} uses $2n$ i.i.d.~samples and partitions them into $d$ groups.
For each dimension $i\in[d]$, the algorithm first uses a sample of size $n$ to update cumulative statistics of all bins on layer $i$ in a locally private fashion,
via calls to the $\textsc{LPLR-Update}$ sub-routine. Afterwards, for each bin on layer $i$ that has received a sufficient number of samples,
a (noisy) principal component regression estimator is fit using the anonymized cumulative statistics, via calls to the $\textsc{LPLR-PCR}$ sub-routine.
As a last step, a fresh set of $n$ samples is used to estimate and construct point-wise confidence intervals via the \textsc{LPLR-CI} sub-routine.
After all $d$ iterations and the tree is expanded to a full $d$ layers, the \textsc{LPLR-Aggregate} sub-routine aggregates all principal component regressors
and confidence intervals on all layers to arrive at a final model estimate and confidence interval.

\subsection{The \textsc{LPLR-Update} subroutine.} \label{subsec:lplr-update}

Algorithm \ref{alg:lplr-update} gives a pseudocode description of the \textsc{LPLR-Update} sub-routine.
It updates the important cumulative statistics $\hat c_B$, $\hat\Lambda_B$ and $\hat\lambda_B$ for each bin $B$ on a particular layer in the partition tree.
The update procedure calibrates Laplace or Wishart noises so that it is locally private.
Importantly, the magnitudes of calibrated noises are heterogeneous across the different partitioned bins, which is essential to our later analysis.

The distributions of the calibrated noise variables are rigorous defined as follows. Given $m\in\mathbb N$ and $\mu\in\mathbb R$, $b>0$, a random vector $u\in\mathbb R^m$ is drawn from the vector Laplace distribution $v\sim\lap_m(\mu,b)$
if $v=(v_1,\cdots,v_m)$ and each $v_i$ are independently and identically distributed with probability density function
$$
f(v_i) = \frac{1}{2b}\exp\left\{-\frac{|v_i-\mu|}{b}\right\}, \;\;\;\;\;\;\forall v_i\in\mathbb R.
$$
Given $m\in\mathbb N$, $m>d$ and a positive definite matrix $V\in\mathbb R^{d\times d}$, a random matrix $Z\in\mathbb R^{d\times d}$ is drawn from the $d$-dimensional
Wishart distribution $W_d(m,V)$ if $Z$ is equipped with the following probability density function
$$
f(Z) = \frac{\det(Z)^{(m-d-1)/2}}{2^{md/2}\det(V)^{m/2}\Gamma_d(m/2)}\exp\left\{-\frac{1}{2}\tr(V^{-1}Z)\right\},\;\;\;\;\;\forall Z\text{ positive definite},
$$
where $\Gamma_d(a)=\pi^{d(d-1)/4}\prod_{j=1}^d\Gamma(a+(1-j)/2)$ is the multivariate Gamma function, with $\Gamma(a)=\int_0^{\infty}u^{a-1}e^{-u}\ud u$ being the standard Gamma function.

\begin{algorithm}[t]
\caption{The \textsc{LPLR-Update} subroutine}
\label{alg:lplr-update}
\begin{algorithmic}[1]
\Function{LPLR-Update}{$B,\phi,y,\alpha$}
	\State For $B=B_{k_1\cdots k_h}$, compute $\phi_h,y_h$ with input $\phi,y$ using Figure \ref{fig:phih};
	\State $\hat c_B\gets\hat c_B + \vct 1\{\phi\neq\perp\} + 3\lap_1(0, \alpha^{-1})$;
	\State $\hat\lambda_B\gets\hat\lambda_B + \vct 1\{\phi\neq\perp\}y_h\phi_h + 3\sqrt{d}\gamma^{-k_h}\lap_d(0,\alpha^{-1})$;
	\State $\hat\Lambda_B\gets\hat\Lambda_B + \vct 1\{\phi\neq\perp\}\phi_h\phi_h^\top + 3\gamma^{-2k_h}(W_d(d+1,1.5\alpha^{-1} I)-1.5(d+1)\alpha^{-1}I)$;
\EndFunction
\end{algorithmic}
\end{algorithm}

The following proposition establishes that \textsc{LPLR-Update} conforms with local privacy constraints, which is simple to prove utilizing existing results on 
the Laplace mechanism \citep{dwork2014algorithmic} and the Wishart mechanism \citep{jiang2016wishart}. Due to space constraints, its complete proof is placed in the supplementary material.
\begin{proposition}[Local privacy of \textsc{LPLR-Update}]
With $\{\hat c_B,\hat\lambda_B,\hat\Lambda_B\}_B$ considered internal stored statistics, the \textsc{LPLR-Update} procedure satisfies $\alpha$-local differential privacy
as defined in the first property of Definition \ref{defn:oracle}.
\label{prop:ldp}
\end{proposition}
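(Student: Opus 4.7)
The plan is to reduce the proof to well-known privacy guarantees of the Laplace mechanism \citep{dwork2014algorithmic} and the Wishart mechanism \citep{jiang2016wishart}, combined by basic composition over the three statistics updated per bin.

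First, I would bound the sensitivity of each contribution. For a single bin $B = B_{k_1\cdots k_h}$ on layer $h$, the quantities added to $\hat c_B$, $\hat\lambda_B$, $\hat\Lambda_B$ from a data point $(\phi, y)$ are $\vct 1\{\phi\in B\}$, $y_h\phi_h\vct 1\{\phi\in B\}$, and $\phi_h\phi_h^\top\vct 1\{\phi\in B\}$ respectively. The count contribution is binary, hence has sensitivity at most $1$. For the vector, the definition $k_h = \max\{k\leq M: \|\phi_h\|_2 \leq \gamma^{-k}\}$ in Figure \ref{fig:phih} immediately yields $\|\phi_h\|_2 \leq \gamma^{-k_h}$, and combined with a uniform bound on $|y_h|$ derived inductively from the recursion $y_{h+1} = y_h - \langle\phi_h, \hat\theta_{B(h)}\rangle$, the $\ell_1$-sensitivity of $y_h\phi_h$ is of order $\sqrt{d}\gamma^{-k_h}$. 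The operator-norm sensitivity of $\phi_h\phi_h^\top$ is at most $\gamma^{-2k_h}$.

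Second, given these sensitivity bounds, I would invoke the Laplace mechanism for the count and vector updates: with Laplace scales $3/\alpha$ and $3\sqrt{d}\gamma^{-k_h}/\alpha$, each individual release is $\alpha/3$-LDP. For the matrix update, I would apply the Wishart mechanism of \citep{jiang2016wishart}: the calibration with degrees of freedom $d+1$, scale matrix $1.5\alpha^{-1}I$, and the multiplicative factor $3\gamma^{-2k_h}$ is chosen so that the release of $\hat\Lambda_B$ is also $\alpha/3$-LDP. Basic composition across the three releases within each bin then yields $\alpha$-LDP for the joint update of $(\hat c_B, \hat\lambda_B, \hat\Lambda_B)$.

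Finally, I would extend the analysis to the simultaneous update of every bin on level $h$. Since $\vct 1\{\phi\in B\}$ vanishes for every bin not containing $\phi$, the release in such bins is pure independent noise whose distribution does not depend on $(\phi, y)$; the privacy ratio for the joint output therefore reduces to the at most two bins that respectively contain $(\phi, y)$ and the alternative input $(\phi', y')$, and a brief case analysis (same-bin vs.\ different-bin) confirms that the total privacy loss stays within $\alpha$. The main obstacle I anticipate is the sensitivity bound for $\hat\lambda_B$: it hinges on a uniform bound on $|y_h|$ that must be deduced from boundedness properties of the previous-layer regressors $\hat\theta_{B(h)}$---a property that is only implicit in the algorithm specification---and it is this bound that the constant $3$ inflation in the Laplace scale is tuned to accommodate.
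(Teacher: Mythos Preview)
Your approach is essentially the paper's: Laplace mechanism for $\hat c_B$ and $\hat\lambda_B$, Wishart mechanism for $\hat\Lambda_B$, each calibrated to $\alpha/3$-LDP, then basic composition. The paper handles the simultaneous update across all bins slightly differently, concatenating the (suitably rescaled) per-bin contributions into a single long vector and bounding its global $\ell_1$ sensitivity via the fact that each $\phi$ lies in exactly one bin, which packages your same-bin/different-bin case split more directly. One small correction: the factor $3$ in the noise scales is there for the three-way composition over $(\hat c_B,\hat\lambda_B,\hat\Lambda_B)$, not to absorb slack in the $|y_h|$ bound---the paper simply asserts $|y_h|\le 1$ by definition.
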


We next establish the \emph{utility} of \textsc{LPLR-Update}, showing that after $n$ time periods the maintained anonymized quantities $\hat c_B,\hat\lambda_B$ and $\hat\Lambda_B$
concentrate towards certain underlying conditional expectations in each bin. Recall that for each $\phi\in\mX_\phi$, there is a unique bin $B$ to which $\phi$ belongs and $\phi_h$
is the vector recorded in that particular bin, calculated using Figure \ref{fig:phih}. For each bin $B$ on level $h$, define the following quantities:
\begin{align}
\psi_B &:= \Pr_{\mD}[\phi\in B]\in\mathbb R, \;\;\;\; \lambda_B := \mathbb E_{\mD}[y_h\phi_h|\phi\in B]\in\mathbb R^d,\;\;\;\;
\Lambda_B := \mathbb E_{\mD}[\phi_h\phi_h^\top|\phi\in B]\in\mathbb R^{d\times d}.
\label{eq:true-quantities}
\end{align}
Some remarks are in order. In the definitions of all quantities, $\phi$ is sampled from the underlying distribution $\mD$ and all quantities are defined as the \emph{conditional} expectation
of certain scales, vectors or matrices conditioned on the random event that $\phi\in B$ for the particular level-$h$ bin $B$ in consideration.
Note that the event of $\phi\in B$ depends on the partitioning and calculated quantities from bins on previous levels.
Therefore, the expectations in Eq.~(\ref{eq:true-quantities}) should be understood as conditioned on the execution of Algorithm \ref{alg:lplr} during iterations $1,2,\cdots,h-1$ as well.
The high-probability events in the following Lemma \ref{lem:lplr-update} should be understood as conditioned on the previous $h-1$ iterations too.

\begin{lemma}[Utility of \textsc{LPLR-Update}]
For any $\delta\in(0,1]$, with probability $1-\delta$ the following hold uniformly over all bins $B=B_{k_1\cdots k_h}$ on level $h$ ($\hat\psi_B := \hat c_B/\sqrt{n}$):
\begin{align*}
\big|\hat\psi_B-\psi_B\big| &\;\;\leq\;\; \frac{6.2d \ln(48d/\beta\delta)}{\alpha\sqrt{n}};\\
\left\|\frac{\hat\lambda_B}{\hat\psi_B n} - \lambda_B\right\|_2 &\;\;\leq\;\; \frac{(19d^2+29d) \gamma^{-k_h}  \ln(48d/\beta\delta)}{\alpha\psi_B \sqrt{n}} ,\\
&\;\;\;\;\;\;\text{if }\psi_B\geq  \frac{18d^2 \ln(48d/\beta\delta)}{\alpha\sqrt{n}}\text{ and }n\geq 2d\ln(24d/\beta\delta);\\
\left\|\frac{\hat\Lambda_B}{\hat\psi_B n} - \Lambda_B\right\|_\op &\;\;\leq\;\; \frac{(7(d+1)^3+29d) \gamma^{-2k_h}  \ln(48d/\delta)}{\alpha\psi_B \sqrt{n}} ,\\
&\;\;\;\;\;\; \text{if }\psi_B\geq  \frac{18(d+1)^3\ln(24d/\beta\delta)}{\alpha\sqrt{n}}  \text{ and }n\geq 2d\ln(24d/\beta\delta).\\
\end{align*}
\label{lem:lplr-update}
\end{lemma}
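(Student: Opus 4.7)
My plan is to bound each of the three deviations separately by decomposing every noisy statistic into a clean empirical sum plus an independent noise sum and handling the two pieces with standard concentration, all conditioned on the execution of layers $1,\dots,h-1$ so that the map $\phi\mapsto(\phi_h,y_h)$ and the projections/models from previous layers are deterministic and the samples at layer $h$ are i.i.d. The per-bin concentration bounds will then be combined by a union bound over bins at layer $h$; since there are at most $M^h\leq M^d$ bins with $M=\Theta(1/\beta)$, this contributes only an additive $\ln(d/\beta)$ to the logarithmic factor, explaining the $\ln(48d/\beta\delta)$ appearing in the statement.

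For $|\hat\psi_B-\psi_B|$, write $\hat c_B=C_B+3\sum_{i=1}^n L_i$ where $C_B\sim\mathrm{Bin}(n,\psi_B)$ and $L_i\overset{\mathrm{iid}}{\sim}\lap_1(0,\alpha^{-1})$. Bernstein controls $|C_B/n-\psi_B|$ at scale $\sqrt{\psi_B\ln(1/\delta)/n}+\ln(1/\delta)/n$, and a standard sub-exponential tail bound for a sum of Laplaces gives $|\sum_i L_i|=O(\sqrt{n\ln(1/\delta)}/\alpha+\ln(1/\delta)/\alpha)$; dividing by $n$ and union-bounding over bins gives the first inequality, with the side condition $n\geq 2d\ln(24d/\beta\delta)$ ensuring the linear-in-$n$ deviation term is dominated by the $1/\sqrt n$ one.

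For $\|\hat\lambda_B/(\hat\psi_B n)-\lambda_B\|_2$, the precondition $\psi_B\geq 18d^2\ln(\cdot)/(\alpha\sqrt n)$ combined with the first inequality forces $\hat\psi_B\geq\psi_B/2$ on the good event, so $1/\hat\psi_B\leq 2/\psi_B$. I then use the decomposition
\begin{equation*}
\frac{\hat\lambda_B}{\hat\psi_B n}-\lambda_B=\frac{1}{\hat\psi_B}\!\left(\frac{\hat\lambda_B}{n}-\psi_B\lambda_B\right)+\frac{\psi_B-\hat\psi_B}{\hat\psi_B}\lambda_B.
\end{equation*}
Because $\phi\in B$ forces $\|\phi_h\|_2\leq\gamma^{-k_h}$, the empirical centering error $\sum_{i:\phi_i\in B}y_{h,i}\phi_{h,i}/n-\psi_B\lambda_B$ is a sum of bounded i.i.d.\ vectors and vector Bernstein yields $O(\gamma^{-k_h}\sqrt{\psi_B d\ln(d/\delta)/n})$; the injected Laplace noise $(3\sqrt d\gamma^{-k_h}/n)\sum_i\xi_i$, $\xi_i\sim\lap_d(0,\alpha^{-1})$, has $\ell_2$-norm $O(\gamma^{-k_h}d\sqrt{\ln(d/\delta)/n}/\alpha)$ via a $\chi$-type tail on Laplace sums. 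Combining these with $\|\lambda_B\|_2\leq\gamma^{-k_h}$ and dividing by $\hat\psi_B$ produces the advertised $d^2\gamma^{-k_h}/(\alpha\psi_B\sqrt n)$ prefactor.

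The third claim follows by the same template in operator norm: the empirical piece is controlled by matrix Bernstein since each $\phi_{h,i}\phi_{h,i}^\top$ restricted to $\phi_i\in B$ has $\|\cdot\|_\op\leq\gamma^{-2k_h}$, yielding $O(\gamma^{-2k_h}\sqrt{\psi_B\ln(d/\delta)/n})$. For the centered-Wishart noise $\sum_{i=1}^n 3\gamma^{-2k_h}(W_i-\mathbb E W_i)$ with $W_i\sim W_d(d+1,1.5\alpha^{-1}I)$ i.i.d., I apply Tropp-style matrix Bernstein with $L=O(d/\alpha)$ (a sub-exponential operator-norm tail on a single centered Wishart) and matrix variance $\|\mathbb E[(W-\mathbb E W)^2]\|_\op=O(d^2/\alpha^2)$ (which one computes in closed form using the Wishart fourth-moment identity), delivering $O(\gamma^{-2k_h}d\sqrt{n\ln(d/\delta)}/\alpha)$ for the noise sum. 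Dividing by $n\hat\psi_B$ and invoking the strengthened precondition $\psi_B\geq 18(d+1)^3\ln(\cdot)/(\alpha\sqrt n)$ then yields the third inequality; the conservative $(d+1)^3$ prefactor absorbs the dimension slack from the Wishart tail and the constants in matrix Bernstein. The main technical hurdle is precisely this Wishart step, namely obtaining both the matrix-variance identity and a clean sub-exponential operator-norm tail for $W-\mathbb E W$ that plug into matrix Bernstein with the right dimension scaling; once that is in hand, the rest is a bookkeeping exercise in combining the two sources of error with the division-by-$\hat\psi_B$ trick and the union bound over bins.
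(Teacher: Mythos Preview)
Your proposal is correct and follows essentially the same approach as the paper: decompose each noisy statistic into an empirical sum plus an independent noise sum, control each piece with standard (scalar, vector, or matrix) concentration, and union-bound over the at most $M^h$ bins on layer $h$. The two minor differences are cosmetic. First, for $\hat\lambda_B/(\hat\psi_B n)-\lambda_B$ the paper uses the triangle-inequality split $\|\hat\lambda_B/(\hat\psi_B n)-\hat\lambda_B/(\psi_B n)\|_2+\|\hat\lambda_B/(\psi_B n)-\lambda_B\|_2$ (which requires a high-probability bound on $\|\hat\lambda_B\|_2$), whereas you use the algebraic identity with the additive $\frac{\psi_B-\hat\psi_B}{\hat\psi_B}\lambda_B$ term (which only requires the deterministic $\|\lambda_B\|_2\leq\gamma^{-k_h}$); both routes give the same rate. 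Second, for the Wishart noise the paper simply invokes Lemma~1 of \cite{jiang2016wishart}, which is exactly the Gaussian-covariance concentration you would recover from your Tropp-style matrix Bernstein calculation, so your ``main technical hurdle'' is already packaged as a black-box citation there and no fourth-moment computation is actually needed.
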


Lemma \ref{lem:lplr-update} is proved using standard concentration inequalities for Laplace random variables (a special case of sub-exponential random variables)
and Wishart-type quantities \citep{jiang2016wishart}. The complete proof is rather technical and is deferred to the supplementary material.

While the complete upper bounds in Lemma \ref{lem:lplr-update} are quite complex, we make two important observations.
First, both upper bounds for the concentration of $\lambda_B$ and $\Lambda_B$ is a function of $\gamma^{-k_h}$, the upper bound on the $\ell_2$ norm of all vectors
belonging to bin $B$.
This means that for bins containing weaker signals, the estimations are also more accurate because the calibrated noise also has smaller variance.
Second, the estimation errors of $\lambda_B$ and $\Lambda_B$ degrade when the sampling probability $\psi_B$ decreases, and become entirely unreliable if $\psi_B\lesssim 1/\sqrt{n}$.
This is the unique property of local privacy constraints because one copy of fresh noise variable must be incorporated regardless of whether a sample appeared in a particular bin.
It also matches the intuition behind negative result Theorem \ref{thm:ols-mse}, that no event occurring only with probability $O(1/\sqrt{n})$ could be detected with local privacy constraints.

\subsection{The \textsc{LPLR-PCR} subroutine.}

\begin{algorithm}[t]
\caption{The \textsc{LPLR-PCR} subroutine constructing principal component regression estimates}
\label{alg:lplr-pcr}
\begin{algorithmic}[1]
\Function{LPLR-PCR}{$B=B_{k_1\cdots k_h}$,$n$} \Comment{{\color{blue}with also algorithmic parameter $\kappa_1>0$}}
	\If{$B$ is inactive or {$\hat\psi_B\leq \kappa_1\gamma^2(d+1)^3/\sqrt{n}$} or $k_h=M$}
		\State Mat $B$ as inactive and \textbf{return} $\hat u_B,\hat s_B,\hat\theta_B=0$;
	\Else
		\State $\hat U_{B^o}\gets[\hat u_{B_{k_1}},\cdots,\hat u_{B_{k_1\cdots k_{h-1}}}]\in\mathbb R^{d\times(h-1)}$; \Comment{{\color{blue}concatenated orthonormal basis vectors}}
		\State $\tilde\Lambda_B\gets \arg\min_{\Lambda\in\mathbb S_d^+,\Lambda\hat U_{B^o}=0}\|\Lambda-\hat\Lambda_B/(\hat\psi_Bn)\|_\op$; \Comment{{\color{blue}$\mathbb S_d^+$ contains all $d\times d$ PSD matrices}}\label{line:proj-pcr}
		\State Let $\hat u_B\in\mathbb R^d$, $\|\hat u_B\|_2=1$, $\hat s_B\in\mathbb R^+$ be the largest eigenvector and eigenvalue of $\tilde\Lambda_B$; 
		\State $\hat\theta_B\gets\hat u_B\hat s_B^{-1}\hat u_B^\top\hat\lambda_B/(\hat\psi_Bn)$; \Comment{{\color{blue}principal component regression}}
		\State \textbf{return} $\hat u_B,\hat s_B,\hat\theta_B$;
	\EndIf
\EndFunction
\end{algorithmic}
\end{algorithm}

Algorithm \ref{alg:lplr-pcr} gives a pseudo-code description of the \textsc{LPLR-PCR} sub-routine.
The sub-routine fits a principal component regression estimator with one principal component for every bin $B$ on layer $h$ that has received a sufficient number of samples
(that is, $\hat\psi_B$ is not too small).
Note that, because $\hat\Lambda_B$ contains of Wishart noise matrices, it may not be strictly orthogonal to $\hat U_{B^o}$ and may not even be positive semi-definite.
Therefore, an additional step is carried out in Algorithm \ref{alg:lplr-pcr} to project the estimated sample covariance to the positive semi-definite cone and make sure that its span
is orthogonal to all principle directions identified in previous rounds of iterations.
Because the \textsc{LPLR-PCR} subroutine only uses $\{\hat c_B,\hat\Lambda_B,\hat\lambda_B\}$ which are already anonymized, we do not have to worry about its privacy implications thanks to the closedness-to-post-processing
property of differential privacy. 

To understand the utility of \textsc{LPLR-PCR} it is important to clarify what the returned $\{\hat\theta_B\}$ are estimating.
For bin $B=B_{k_1\cdots k_h}$ on level $h$, define the following model quantities, all of which being $d$-dimensional vectors: 
\begin{align}
\theta_B^* := \hat u_B\hat u_B^\top\theta^*, \;\;\;\; \theta_B^{\tot} := \sum_{\ell=0}^{h-1}\theta_{B^{-\ell}}^*, \;\;\;\;
\hat\theta_B^{\tot} := \sum_{\ell=0}^{h-1}\hat\theta_{B^{-\ell}}.
\label{eq:defn-thetaB}
\end{align}
More specifically, $\theta_B^*$ is the projection of the true linear model $\theta^*$ on the principle direction of $\hat u_B$,
and $\theta_B^\tot,\hat\theta_B^\tot$ are the aggregated linear models and their estimates from all past layers/iterations.
The following lemma analyzes the estimation error $\hat\theta_B-\theta_B^*$.
\begin{lemma}
Suppose Algorithm \ref{alg:lplr-pcr} is executed with parameter $\kappa_1\geq 43\alpha^{-1}\ln(48d/\beta\delta)$
and that $n\geq 2d\ln(24d/\beta\delta)$, where $\delta\in(0,1]$ is the failure probability in Lemma \ref{lem:lplr-update}.
For every bin $B=B_{k_1\cdots k_h}$ on level $h$, the estimation error can be decomposed as
$$
\hat\theta_B-\theta_B^* = -\hat u_B\hat s_B^{-1}\hat u_B^\top\mathbb E_{\mD}[\eta_{B^o}(\phi)\phi_h|\phi\in B] + \omega_B,
$$
where $\eta_{B^o}(\phi) = \phi^\top(\hat\theta_{B^o}^\tot-\theta_{B^o}^\tot)$ for $\phi\in B^o$, and $\phi_h\in\mathbb R^d$ is determined using Figure \ref{fig:phih} for $\phi\in B$.
Furthermore, jointly with the success event that all inequalities in Lemma \ref{lem:lplr-update} hold for all bins on level $h$,
we have for all bin $B=B_{k_1\cdots k_h}$ on level $h$ such that $\hat\psi_B\geq\kappa_1\gamma^2 (d+1)^3/\sqrt{n}$:
$$
\hat s_B^{-1} \leq 2d\gamma^{2k_h+2} \;\;\;\;\;\;\text{and}\;\;\;\;\;\;
\|\omega_B\|_2 \leq \frac{120(d+1)^4 \gamma^{k_h+2}  \ln(48d/\beta\delta)}{\alpha\psi_B \sqrt{n}}.
$$
\label{lem:lplr-pcr}
\end{lemma}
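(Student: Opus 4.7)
The plan is to cast $\hat\theta_B - \theta_B^*$ as a ``noise-free'' signal term plus residual pieces driven by the Lemma~\ref{lem:lplr-update} perturbations, then verify the lower bound on $\hat s_B$ separately. Set $\Sigma_B = \E_{\mD}[\phi\phi^\top\mid\phi\in B]$, $P = I - \hat U_{B^o}\hat U_{B^o}^\top$, $E_\Lambda = \tilde\Lambda_B - \Lambda_B$, and $E_\lambda = \hat\lambda_B/(\hat\psi_B n) - \lambda_B$. Unwinding Figure~\ref{fig:phih} by induction, using that each $\hat\theta_{B^{-\ell}}$ lies along $\hat u_{B^{-\ell}}$ and that these directions are mutually orthogonal, shows $y_h = y - \phi^\top\hat\theta_{B^o}^\tot$ and $\phi_h = P\phi$ for every $\phi\in B$, so $\lambda_B = P\Sigma_B(\theta^* - \hat\theta_{B^o}^\tot)$ and $\Lambda_B = P\Sigma_B P$.

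The heart of the argument is an eigenvector identity that isolates the claimed signal term. Because $\Lambda_B$ is feasible for the projection in Line~\ref{line:proj-pcr}, the triangle inequality combined with $\tilde\Lambda_B$ being the $\op$-norm minimizer gives $\|E_\Lambda\|_\op \le 2\|\hat\Lambda_B/(\hat\psi_B n) - \Lambda_B\|_\op$, so $E_\Lambda$ inherits the Lemma~\ref{lem:lplr-update} bound up to a factor of $2$. From $\tilde\Lambda_B\hat u_B = \hat s_B\hat u_B$, $P\hat u_B = \hat u_B$, and $Pv = v$ for any $v\perp\hat U_{B^o}$, a short computation yields
\begin{equation*}
\hat u_B^\top \Sigma_B v \;=\; v^\top P\Sigma_B\hat u_B \;=\; v^\top \Lambda_B\hat u_B \;=\; \hat s_B(\hat u_B^\top v) - v^\top E_\Lambda\hat u_B.
\end{equation*}
Applying this to the critical vector $v = \theta^* - \theta_B^\tot$, which by construction of $\theta_B^\tot$ is orthogonal to all of $\hat u_B, \hat u_{B^o},\ldots,\hat u_{B^{-(h-1)}}$, kills the leading $\hat s_B(\hat u_B^\top v)$ term. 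Expanding $\theta^* = \theta_B^* + \theta_{B^o}^\tot + (\theta^* - \theta_B^\tot)$ inside the scalar $\hat u_B^\top\hat\lambda_B/(\hat\psi_B n) - \hat s_B\hat u_B^\top\theta^*$ and grouping terms produces $-\hat u_B^\top\E_{\mD}[\eta_{B^o}(\phi)\phi_h\mid\phi\in B]$ plus three residuals $\hat u_B^\top E_\lambda$, $-(\hat u_B^\top E_\Lambda\hat u_B)(\hat u_B^\top\theta^*)$, and $-(\theta^* - \theta_B^\tot)^\top E_\Lambda\hat u_B$. Multiplying by $\hat u_B\hat s_B^{-1}$ delivers the displayed decomposition, with $\omega_B$ equal to $\hat u_B\hat s_B^{-1}$ times this scalar sum.

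For the lower bound $\hat s_B^{-1}\le 2d\gamma^{2k_h+2}$, the partitioning rule in Figure~\ref{fig:phih} forces $\|\phi_h\|_2 > \gamma^{-k_h-1}$ whenever $\phi\in B$ and $k_h<M$ (the $k_h=M$ branch is marked inactive in Algorithm~\ref{alg:lplr-pcr}), so $\tr(\Lambda_B) > \gamma^{-2k_h-2}$ and $\lambda_{\max}(\Lambda_B)\ge\gamma^{-2k_h-2}/d$. Weyl's inequality on $\tilde\Lambda_B = \Lambda_B + E_\Lambda$, together with the admission condition $\hat\psi_B\ge\kappa_1\gamma^2(d+1)^3/\sqrt{n}$ transferred to $\psi_B$ via the $\hat\psi_B$ concentration in Lemma~\ref{lem:lplr-update} (which is precisely why $\kappa_1$ must dominate $43\alpha^{-1}\ln(48d/\beta\delta)$ up to absorbed constants), then guarantees $\|E_\Lambda\|_\op\le\gamma^{-2k_h-2}/(2d)$, yielding the claim. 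For the $\omega_B$ bound, the decomposition above combined with $\|\theta^*\|_2\le1$ and $\|\theta^* - \theta_B^\tot\|_2\le1$ gives $\|\omega_B\|_2\le\hat s_B^{-1}(\|E_\lambda\|_2 + 2\|E_\Lambda\|_\op)$; substituting the Lemma~\ref{lem:lplr-update} bounds, the $E_\lambda$ piece contributes $\gamma^{2k_h+2}\cdot\gamma^{-k_h} = \gamma^{k_h+2}$ and the $E_\Lambda$ pieces contribute $\gamma^{2k_h+2}\cdot\gamma^{-2k_h} = \gamma^2\le\gamma^{k_h+2}$, which collapses into the advertised scaling.

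The hard part is the eigenvector identity in the second paragraph: three geometric ingredients must align simultaneously---the PSD-and-orthogonality projection in Line~\ref{line:proj-pcr} makes $\hat u_B$ an \emph{exact} eigenvector of a usable surrogate for $\Sigma_B$; the layered orthogonalization makes $\theta^* - \theta_B^\tot$ perpendicular to every previously discovered direction including $\hat u_B$ itself; and $\phi_h = P\phi$ ties $\phi_h\phi^\top$ back to $\Sigma_B$ in a way compatible with the $P$-preserving eigenstructure. Without these features one would be forced into a Davis--Kahan-style perturbation bound with eigen-gap dependence, which appears uncontrollable for bins whose underlying covariance is highly degenerate along the sought direction, so the algorithmic design choices feed directly into the feasibility of the analysis.
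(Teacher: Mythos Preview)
Your proposal is correct and follows essentially the same route as the paper. The only cosmetic difference is that the paper collapses the $\Lambda_B$-versus-$\tilde\Lambda_B$ discrepancy in one line via the identity $\hat u_B\hat s_B^{-1}\hat u_B^\top\tilde\Lambda_B\theta^*=\hat u_B\hat u_B^\top\theta^*$, obtaining $\omega_B=\hat u_B\hat s_B^{-1}\hat u_B^\top[(\Lambda_B-\tilde\Lambda_B)\theta^*+E_\lambda]$ directly, whereas you first split $\theta^*=\theta_B^*+\theta_{B^o}^\tot+(\theta^*-\theta_B^\tot)$ and apply your eigenvector identity piecewise; since $E_\Lambda\theta_{B^o}^\tot=0$ (both $\tilde\Lambda_B$ and $\Lambda_B$ annihilate $\hat U_{B^o}$), the two expressions for $\omega_B$ coincide, and your extra factor of~$2$ on the $\|E_\Lambda\|_\op$ term is harmless for the final constant.
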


Lemma \ref{lem:lplr-pcr} shows that the estimation error $\hat\theta_B-\theta_B^*$ could be decomposed into two parts.
The first part is a ``bias'' term from the estimation errors from previous iterations, which is implicitly defined through the $\eta_{B^o}(\cdot)$ error function that concerns only the estimation errors
from bins $B^o$ and its parents.
The second term $\omega_B$ arises from the estimation errors of $\hat\Lambda_B,\hat\lambda_B$ and the expected quantities they estimate,
which can be upper bounded with high probability by using Lemma \ref{lem:lplr-update}.
Because we are only estimating the linear model along the principal component and all vectors belonging to bin $B$ have similar norms, it can be shown that $\hat s_B$ is not too small
and therefore inverting the eigenvalue will not degrade the estimation error significantly, unlike the negative example in Theorem \ref{thm:ols-mad}
where inverting noisy ill-conditioned sample covariance matrices leads to much larger error.

Due to space constraints, the complete proof of Lemma \ref{lem:lplr-pcr} is given in the supplementary material.
Its proof follows roughly the discussion given above, by first separating bias from previous estimates $\hat\theta_{B^o}^\tot$
and then using Lemma \ref{lem:lplr-update} together with the observation that $\hat s_B$ is not too small to establish an upper bound on $\|\omega_B\|_2$ with high probability.

\subsection{The \textsc{LPLR-CI} subroutine.}

\begin{algorithm}[t]
\caption{The \textsc{LPLR-CI} subroutine constructing point-wise confidence intervals}
\label{alg:lplr-ci}
\begin{algorithmic}[1]
\Function{LPLR-CI}{$h$,$\{(\phi_i,y_i)\}_{i=1}^n$,$\alpha$} \Comment{{\color{blue}with also algorithmic parameters $\kappa_1',\kappa_2,\kappa_3>0$}}
	\For{$i=1,2,\cdots,n$}
		\State Let $B=B_{k_1\cdots k_h}$ be determined by Figure \ref{fig:phih} with input $\phi_i,y_i$;
		\State $\varepsilon_i\gets \hat\eta_{B^o}(\phi)|\hat u_B^\top\phi_i|/\sqrt{\hat s_B}$; \Comment{{\color{blue}if $h=1$ then $\eta_{B^o}(\cdot)\equiv 0$}}
		\For{each active bin $B'=B_{k_1'\cdots k_h'}$ on level $h$}
			\State $\hat\varepsilon_{B'}\gets\hat\varepsilon_{B'} + \vct 1\{B=B'\}\varepsilon_i + \hat s_{B'}^{-1/2}\gamma^{-k_h'}\lap_1(0,1/\alpha)$;
		\EndFor
	\EndFor
	\For{all bins $B=B_{k_1\cdots k_h}$ on level $h$}
		\If{$B$ is inactive or $\hat\psi_B\leq\kappa_1'\gamma d^{3/2}/\sqrt{n}$}
			\State Set $\hat\eta_B(\cdot),\hat\chi_B(\cdot)\equiv \gamma^{-k_h}$ and mark all bins $B'$ children of $B$ as inactive;
		\Else
			\State $\bar\varepsilon_{B}\gets\frac{\hat\varepsilon_{B}}{\hat\psi_B n} + \frac{\kappa_2\gamma d^{3/2}}{\hat\psi_B n}$;
			\State $\forall\phi\in B$, $\hat\chi_B(\phi)\gets \frac{\kappa_3\gamma^{2}(d+1)^4}{\hat\psi_B\sqrt{n}} + \frac{\bar\varepsilon_B|\hat u_B^\top\phi|}{\sqrt{\hat s_B}}$, $\hat\eta_B(\phi)\gets\min\{1,\hat\eta_{B^o}(\phi)+\hat\chi_B(\phi)$\};
		\EndIf
	\EndFor
	\State \textbf{return} $\{\hat\eta_B(\cdot),\hat\chi_B(\cdot)\}$; \Comment{{\color{blue}functions for inactive bins can be arbitrary}}
\EndFunction
\end{algorithmic}
\end{algorithm}

Algorithm \ref{alg:lplr-ci} gives a pseudo-code description of the \textsc{LPLR-CI} sub-routine.
In this sub-routine, the main objective is to estimate the errors of $\hat\theta_B-\theta_B^*$, so that point-wise confidence intervals could be constructed.
When invoking Algorithm \ref{alg:lplr-ci}, a fresh set of data samples is used in order to avoid potential statistical correlation between estimates and their error estimates.
With fresh samples, when analyzing \textsc{LPLR-CI} the estimates $\hat\theta_B$ and (empirical) principle directions $\hat u_B$ could be considered as fixed and independent
from the samples collected in this sub-routine.

Let $\mB_h$ be the set of all bins on level $h$.
For a bin $B\in\mB_h$ and $\phi\in B$, let $\eta_B(\phi) =\phi^\top(\hat\theta_B^{\tot}-\theta_B^\tot)$ and $\chi_B(\phi)=\phi^\top(\hat\theta_B-\theta_B^*)$ be the cumulative
and partial predictive errors. 
The main idea that Algorithm \ref{alg:lplr-ci} construct upper bounds of $\eta_B(\cdot)$ and $\chi_B(\cdot)$ is to rely on Lemma \ref{lem:lplr-pcr},
where the first term involves historical errors that could be estimated using new samples and the second term could be upper bounded directly.
The following lemma establishes the validity of $\hat\eta_B(\cdot),\hat\chi_B(\cdot)$ error upper bounds constructed in Algorithm \ref{alg:lplr-ci} with high probability,
and also reveals an important recursion formula between average cumulative errors on neighboring layers.
\begin{lemma}
Algorithm \ref{alg:lplr-ci} satisfies $\alpha$-local differential privacy.
Fix level $h$ and assume that $|\eta_B(\phi)|\leq\hat\eta_B(\phi)$ for all $\phi\in B$ for bin $B$ on levels $1,2,\cdots,h-1$. 
Assume also that all inequalities in Lemmas \ref{lem:lplr-pcr} and \ref{lem:lplr-update} hold, which occurs with probability $1-\delta$.
Algorithm \ref{alg:lplr-ci} executed with parameters $\kappa_1'\geq 15\alpha^{-1}\ln(48d/\beta\delta)$, $\kappa_2\geq 118\alpha^{-1}\ln(24d/\beta\delta)$, $\kappa_3\geq 300\alpha^{-1}\ln(48d/\beta\delta)$ 
and $n\geq 2d\ln(24d/\beta\delta)$ satisfies the following with probability $1-\delta$ uniformly over all bins $B\in\mB_h$:
\begin{align*}
&\big|\eta_B(\phi)\big|\leq\hat\eta_B(\phi), \;\;\;\;\big|\chi_B(\phi)\big|\leq\hat\chi_B(\phi),\;\;\;\;\forall\phi\in B;\\
&\big|\phi^\top(\hat\theta_B^\tot-\theta^*)\big|\leq\hat\eta_B(\phi),\;\;\;\;\;\;\forall\phi\in B\text{ such that $B$ is inactive}.
\end{align*}
Furthermore, with probability $1-\delta$ it holds that
\begin{align*}
\sum_{B\in\mB_h}\psi_B\sqrt{\mathbb E_\mD[\hat\eta_B(\phi)^2|\phi\in B]}& \leq 2.54 \left[\sum_{B'\in\mB_{h-1}}\psi_{B'}\sqrt{\mathbb E_\mD[\hat\eta_{B'}(\phi)^2|\phi\in B']}\right] \\
&\;\;\;\;+ \frac{12M^h(\kappa_3\gamma^2(d+1)^4+\kappa_2\gamma d^2+\max\{\kappa_1\gamma^2(d+1)^3,\kappa_1'\gamma d^{3/2}\})}{\sqrt{n}},
\end{align*}
where the first term of right-hand side of the inequality is $0$ if $h=1$, and $M=\lceil 1/(2\beta)\rceil$.
\label{lem:lplr-ci}
\end{lemma}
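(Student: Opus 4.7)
The plan is to establish the four assertions in the order they appear: local privacy, the pointwise upper bounds $|\eta_B(\phi)|\leq\hat\eta_B(\phi)$ and $|\chi_B(\phi)|\leq\hat\chi_B(\phi)$, the inactive-bin statement, and finally the weighted recursion. For privacy, each user $i$ contributes to $\{\hat\varepsilon_{B'}\}_{B'\in\mB_h}$ a sparse vector whose only non-zero entry is $\varepsilon_i=\hat\eta_{B^o}(\phi_i)|\hat u_B^\top\phi_i|/\sqrt{\hat s_B}$ in the coordinate of the unique bin $B$ containing $\phi_i$. Since $|\hat u_B^\top\phi_i|\leq\gamma^{-k_h}$ and $\hat\eta_{B^o}\leq 1$, this entry is bounded by $\gamma^{-k_h}/\sqrt{\hat s_B}$, exactly matching the Laplace noise scale calibrated to that bin. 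A standard Laplace-mechanism calculation, carefully accounting for the case where a change in the user's input moves their bin membership (so that at most two coordinates shift), then yields $\alpha$-LDP per user; everything downstream depends only on privatized statistics and is free by post-processing.

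For the pointwise $\chi_B$ bound, I would start from Lemma \ref{lem:lplr-pcr}. Because $\hat u_B\perp\hat U_{B^o}$ we have $\hat u_B^\top\phi_h=\hat u_B^\top\phi$, giving
$$
\chi_B(\phi)=-\frac{\hat u_B^\top\phi}{\hat s_B}\,\hat u_B^\top\mathbb E_\mD[\eta_{B^o}(\phi')\phi'_h\mid\phi'\in B]+\phi^\top\omega_B.
$$
Jensen combined with the inductive hypothesis $|\eta_{B^o}|\leq\hat\eta_{B^o}$ bounds the inner expectation by $\mathbb E_\mD[\hat\eta_{B^o}(\phi')|\hat u_B^\top\phi'|\mid\phi'\in B]$, which is exactly what the privatized empirical average $\hat\varepsilon_B/(\hat\psi_B n)$ estimates once rescaled by $\sqrt{\hat s_B}$. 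A sub-exponential concentration bound for the $n$ Laplace-perturbed summands, combined with the $\hat\psi_B$-versus-$\psi_B$ control from Lemma \ref{lem:lplr-update}, shows the deviation is $O(\gamma d^{3/2}\ln(1/\delta)/(\alpha\hat\psi_B n))$, absorbed by $\kappa_2\gamma d^{3/2}/(\hat\psi_B n)$ for the prescribed $\kappa_2$. Hence $\bar\varepsilon_B$ upper bounds the inner expectation divided by $\sqrt{\hat s_B}$. The remainder $\phi^\top\omega_B$ is at most $\|\omega_B\|_2\cdot|\hat u_B^\top\phi|$, absorbed into $\kappa_3\gamma^2(d+1)^4/(\hat\psi_B\sqrt n)$ via Lemma \ref{lem:lplr-pcr} for the prescribed $\kappa_3$. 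Multiplying through by $|\hat u_B^\top\phi|/\sqrt{\hat s_B}$ and adding the offset recovers $|\chi_B(\phi)|\leq\hat\chi_B(\phi)$. The bound $|\eta_B(\phi)|\leq\hat\eta_B(\phi)$ is then immediate from $\eta_B=\eta_{B^o}+\chi_B$, triangle inequality, the inductive hypothesis, and clipping by $1$. For inactive bins, $\hat\theta_B=0$ forces $\hat\theta_B^{\tot}=\hat\theta_{B^o}^{\tot}$; combining the inductive bound on $\eta_{B^o}$ with the residual $\|\phi_h\|\leq\gamma^{-k_h}$ and projection inequalities yields $|\phi^\top(\hat\theta_B^{\tot}-\theta^*)|\leq\gamma^{-k_h}$, matching the assignment $\hat\eta_B\equiv\gamma^{-k_h}$.

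For the weighted recursion, Minkowski's inequality on $\hat\eta_B=\hat\eta_{B^o}+\hat\chi_B$ splits the analysis. For the $\hat\eta_{B^o}$ contribution, regrouping children of each parent $B'$ and applying Cauchy--Schwarz against $(\sum_{B:B^o=B'}\psi_B)^{1/2}$ and $(\sum_{B:B^o=B'}\psi_B\mathbb E_\mD[\hat\eta_{B'}^2\mid\phi\in B])^{1/2}$ collapses the inner sum to $\psi_{B'}\sqrt{\mathbb E_\mD[\hat\eta_{B'}^2\mid\phi\in B']}$ with constant $1$ under the true weights; the multiplier $2.54$ then arises when converting between the algorithm's $\hat\psi_B$ threshold and $\psi_B$ via the multiplicative approximation in Lemma \ref{lem:lplr-update}. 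The $\hat\chi_B$ contribution is separated into the offset $\kappa_3\gamma^2(d+1)^4/(\hat\psi_B\sqrt n)$, whose $\psi_B$-weighted sum over $M^h$ bins feeds the $M^h\gamma^2(d+1)^4/\sqrt n$ additive term, and the variable part $\bar\varepsilon_B|\hat u_B^\top\phi|/\sqrt{\hat s_B}$, whose conditional second moment is controlled by the $\hat s_B$ lower bound implicit in Lemma \ref{lem:lplr-pcr} and again reduces to a bounded multiple of the level-$(h{-}1)$ weighted sum plus another $M^h$-type remainder. Inactive or sub-threshold bins are handled directly: $\hat\eta_B\equiv\gamma^{-k_h}$ there, while such bins have $\psi_B=O(d^{3/2}\gamma/\sqrt n)$ each, feeding only into the additive term.

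The main obstacle is the bookkeeping required to obtain the specific constant $2.54$ with matching additive remainder: children of the same parent must be regrouped \emph{before} Cauchy--Schwarz to avoid a stray $\sqrt M$ factor, the $\hat\psi_B\leftrightarrow\psi_B$ conversion must be applied only where $\psi_B$ is large enough for the Lemma \ref{lem:lplr-update} multiplicative error to be $o(1)$, and sub-threshold bins must be diverted to the inactive branch rather than the main recursion. A secondary technical point is that the sub-exponential concentration of $\hat\varepsilon_B$ uses a per-bin variance proxy scaling like $\gamma^{-2k_h}/\hat s_B$; calibrating $\kappa_1',\kappa_2,\kappa_3$ so this absorbs into $1/\sqrt n$ is what produces the precise parameter lower bounds stated in the lemma.
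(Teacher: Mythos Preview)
Your proposal tracks the paper's proof closely in structure: privacy via the Laplace mechanism with per-bin sensitivity $\gamma^{-k_h}/\sqrt{\hat s_B}$; the pointwise bound $|\chi_B(\phi)|\leq\hat\chi_B(\phi)$ from the decomposition in Lemma~\ref{lem:lplr-pcr} combined with concentration of $\hat\varepsilon_B/(\hat\psi_B n)$ around $\varepsilon_B:=\mathbb E_B[\hat\eta_{B^o}(\phi)\,|\hat s_B^{-1/2}\hat u_B^\top\phi|\,]$; and the recursion via Cauchy--Schwarz regrouping of children under each parent. These are exactly the ingredients the paper uses.

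Your explanation of where the constant $2.54$ comes from, however, is incorrect. It does \emph{not} arise from the $\hat\psi_B\leftrightarrow\psi_B$ conversion; those multiplicative factors (between roughly $0.4$ and $2.5$) only inflate the additive noise terms. In the paper the constant is obtained \emph{within each bin, before} the child-regrouping step: one expands $\mathbb E_B[(\hat\eta_{B^o}+\hat\chi_B)^2]$, replaces $\bar\varepsilon_B$ by $\varepsilon_B$ plus a noise correction, and invokes the key bound
\[
\varepsilon_B \;\leq\; \sqrt{\mathbb E_B[\hat\eta_{B^o}^2]}\,\sqrt{\mathbb E_B[|\hat u_B^\top\phi|^2]/\hat s_B}\;\leq\;\sqrt{2}\,\sqrt{\mathbb E_B[\hat\eta_{B^o}^2]},
\]
using $\mathbb E_B[|\hat u_B^\top\phi|^2]\leq s_B\leq 2\hat s_B$ from Lemma~\ref{lem:lplr-pcr}. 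This yields $\mathbb E_B[\hat\eta_B^2]\leq 6.42\,\mathbb E_B[\hat\eta_{B^o}^2]+\text{noise}$, and $\sqrt{6.42}\approx 2.54$. Only \emph{after} taking square roots does the Cauchy--Schwarz regrouping across siblings collapse $\sum_{B:B^o=B'}\psi_B\sqrt{\mathbb E_B[\cdot]}$ into $\psi_{B'}\sqrt{\mathbb E_{B'}[\cdot]}$ with factor~$1$, exactly as you describe. Your Minkowski-first route is still valid but produces a constant closer to $1+\sqrt{2}\cdot\sqrt{2}=3$ (one from the $\hat\eta_{B^o}$ branch plus $2$ from the $\bar\varepsilon_B$ part of $\hat\chi_B$), so it cannot recover the stated $2.54$; the direct expansion of the square is what the paper actually uses.

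A smaller slip: the concentration rate you quote for $\hat\varepsilon_B/(\hat\psi_B n)$ is $O(1/(\hat\psi_B n))$, but a sum of $n$ bounded summands perturbed by $n$ independent Laplace noises fluctuates at order $\sqrt{n}$, so the normalized deviation is $O(\gamma\sqrt{d}\ln(1/\delta)/(\alpha\psi_B\sqrt{n}))$, not $O(1/n)$; the correction in $\bar\varepsilon_B$ should be read with $\sqrt{n}$ in the denominator.
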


\begin{remark}
With $\kappa_1,\kappa_1',\kappa_2,\kappa_3$ appropriately chosen and $M\leq2/\beta$, the second term on the right-hand side of the inequality in Lemma \ref{lem:lplr-ci} can be upper bounded by 
$$
C\times \frac{\gamma^2d^4\ln(48d/\beta\delta)}{\alpha\beta^h\sqrt{n}},
$$
where $C<10^4$ is an absolute numerical constant independent of any problem dependent parameters or constants.
\end{remark}

The conclusion of Lemma \ref{lem:lplr-ci} has two parts. The first part establishes that, with high probability, the constructed $\hat\chi_B(\cdot)$ and $\hat\eta_B(\cdot)$
functions are point-wise upper bounds of the errors $\chi_B(\cdot)$ and $\eta_B(\cdot)$.
The second part of Lemma \ref{lem:lplr-ci} then establishes a recursive formula regarding the ``average'' of estimation errors across all bins (weighted properly with respect to the input desnity
$\{\psi_B\}$) between two neighboring layers. This recursive formula could then be used in an iterative fashion to establish an upper bound on the mean-absolute deviation error
of the model estimates we constructed.

Due to space constraints, proof of Lemma \ref{lem:lplr-ci} is given in the supplementary material.
Its proof largely follows the error decomposition in Lemma \ref{lem:lplr-pcr} and shows that the several bias terms could be consistently estimated with data.

\subsection{The \textsc{LPLR-Aggregate} subroutine.}

\begin{algorithm}[t]
\caption{The $\textsc{LPLR-Aggregate}$ subroutine}
\label{alg:lplr-aggregate}
\begin{algorithmic}[1]
\Function{LPLR-Aggregate}{$\{B\}$,$n$}
	\For{every $\phi\in\mX_\phi$}
		\State $\hat y \gets 0$, $\phi_1\gets\phi$;
		\For{$h=1,2,\cdots,d$ or until $\hat s_{B_{h}}=0$}
			\State $k_h\gets\max\{k\leq M: \|\phi_h\|_2\leq\gamma^{-k}\}$, $B_h\gets B_{k_1\cdots k_h}$;
			\State $\hat y\gets\hat y + \langle\phi_h, \hat\theta_{B_h}\rangle$, $\phi_{h+1}\gets (I-\hat u_{B_h}\hat u_{B_h}^\top)\phi_h$;
		\EndFor
		\State $\hat f(\phi)\gets \hat y$, $\hat\Delta(\phi)\gets\hat\eta_{B_h}(\phi)$;
	\EndFor
	\State \textbf{return} $\{\hat f(\cdot),\hat\Delta(\cdot)\}$;
\EndFunction
\end{algorithmic}
\end{algorithm}

Algorithm \ref{alg:lplr-aggregate} gives a pseudo-code description of the \textsc{LPLR-Aggregate} sub-routine
which aggregates all model estimates and constructed confidence intervals on the partition tree to arrive at a final estimate.
The following lemma describes properties of the functions returned by Algorithm \ref{alg:lplr-aggregate}.

\begin{lemma}
Fix $\delta\in(0,1]$.
Let $\{\hat f(\cdot),\hat\Delta(\cdot)\}$ be the function estimates and confidence intervals returned by Algorithm \ref{alg:lplr-aggregate}.
With probability $1-2d\delta$ the following hold:
\begin{enumerate}
\item $|\hat f(\phi)-\phi^\top\theta^*|\leq\hat\Delta(\phi)$ for all $\phi\in\mX_\phi$;
\item The MAD error $\mathbb E_{\phi\sim\mD}[\Delta(\phi)]$ satisfies
$$
\mathbb E_{\phi\sim\mD}[\Delta(\phi)] \leq (2.54M)^d\times \frac{12(\kappa_3\gamma^2(d+1)^4+\kappa_2\gamma d^2+\max\{\kappa_1\gamma^2(d+1)^3,\kappa_1'\gamma d^{3/2}\})}{\sqrt{n}}.
$$
\end{enumerate}
Furthermore, with $\kappa_1,\kappa_1',\kappa_2$ and $\kappa_3$ appropriately chosen, the right-hand side of the above inequality can be upper bounded by
$$
C\times \left(\frac{2.54}{\beta}\right)^d\frac{\gamma^2d^3\ln(48d/\beta\delta)}{\alpha\sqrt{n}},
$$
where $C<\infty$ is an absolute numerical constant independent of any problem parameters.
\label{thm:lplr}
\end{lemma}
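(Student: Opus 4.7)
The plan is to prove both claims by combining the orthogonality structure built into the adaptive partition with iterated applications of Lemma~\ref{lem:lplr-ci} (and the background Lemmas~\ref{lem:lplr-update} and \ref{lem:lplr-pcr}).

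For the pointwise confidence statement, I would first verify the algebraic identity $\hat f(\phi)=\phi^\top\hat\theta_{B_H}^{\tot}$, where $B_H$ is the bin visited at the stopping layer $H\in\{1,\dots,d\}$ in Algorithm~\ref{alg:lplr-aggregate}. Because \textsc{LPLR-PCR} enforces $\tilde\Lambda_B\hat U_{B^o}=0$, the unit vectors $\hat u_{B_1},\dots,\hat u_{B_H}$ along the path from the root to $B_H$ are mutually orthogonal; by induction one obtains $\phi_h=(I-\sum_{\ell<h}\hat u_{B_\ell}\hat u_{B_\ell}^\top)\phi$ and hence $\hat u_{B_h}^\top\phi_h=\hat u_{B_h}^\top\phi$. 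Combined with $\hat\theta_{B_h}=\hat u_{B_h}\hat u_{B_h}^\top\hat\theta_{B_h}$, summing $\langle\phi_h,\hat\theta_{B_h}\rangle$ over $h=1,\dots,H$ telescopes to $\phi^\top\hat\theta_{B_H}^{\tot}$. The bound $|\hat f(\phi)-\phi^\top\theta^*|\leq\hat\Delta(\phi)$ then follows from a case split: if the loop exits because $B_H$ is inactive, the second bullet of Lemma~\ref{lem:lplr-ci} applies directly and yields $|\phi^\top(\hat\theta_{B_H}^{\tot}-\theta^*)|\leq\hat\eta_{B_H}(\phi)=\hat\Delta(\phi)$; if instead $H=d$ and $B_d$ is active, the $d$ pairwise-orthogonal unit vectors $\{\hat u_{B_\ell}\}_{\ell=1}^d$ span $\mathbb R^d$, so $\theta_{B_d}^{\tot}=\theta^*$, and the first bullet of Lemma~\ref{lem:lplr-ci} gives $|\eta_{B_d}(\phi)|=|\phi^\top(\hat\theta_{B_d}^{\tot}-\theta^*)|\leq\hat\eta_{B_d}(\phi)=\hat\Delta(\phi)$. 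A union bound over the $d$ layer-wise invocations of Lemmas~\ref{lem:lplr-pcr} and \ref{lem:lplr-ci} (each failing with probability at most $\delta$) yields the stated $1-2d\delta$ confidence.

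For the MAD bound, the starting point is Cauchy--Schwarz,
\begin{equation*}
\mathbb E_{\phi\sim\mD}[\hat\Delta(\phi)]\;\leq\;\sum_{B\in\mB_d}\psi_B\sqrt{\mathbb E_\mD[\hat\eta_B(\phi)^2\mid\phi\in B]},
\end{equation*}
where the sum is taken over layer-$d$ bins, using that \textsc{LPLR-CI} recursively marks every descendant of an inactive bin as inactive and assigns consistent $\hat\eta_B=\gamma^{-k_h}$ values along the way, so that $\hat\Delta(\phi)$ is dominated by the layer-$d$ quantity on its branch up to correction terms already captured by the noise in Lemma~\ref{lem:lplr-ci}. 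I would then unroll the recursion of Lemma~\ref{lem:lplr-ci} $d$ times, noting that the right-hand side at $h=1$ has no previous-layer contribution, producing
\begin{equation*}
\sum_{B\in\mB_d}\psi_B\sqrt{\mathbb E_\mD[\hat\eta_B(\phi)^2\mid\phi\in B]}\;\leq\;\sum_{h=1}^d(2.54)^{d-h}\,N_h,
\end{equation*}
with $N_h=\tfrac{12M^h(\kappa_3\gamma^2(d+1)^4+\kappa_2\gamma d^2+\max\{\kappa_1\gamma^2(d+1)^3,\kappa_1'\gamma d^{3/2}\})}{\sqrt{n}}$. Using the elementary bound $(2.54)^{d-h}M^h\leq(2.54M)^d$ for every $1\leq h\leq d$ (and absorbing the factor of $d$ from the $d$-term sum into the polynomial-in-$d$ prefactor) yields the first claimed form $(2.54M)^d\cdot 12(\cdots)/\sqrt{n}$. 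Substituting the prescribed constants $\kappa_1,\kappa_1',\kappa_2,\kappa_3=\Theta(\alpha^{-1}\ln(48d/\beta\delta))$ and $M\leq 2/\beta$ then produces the asymptotic form $C(2.54/\beta)^d\gamma^2 d^3\ln(48d/\beta\delta)/(\alpha\sqrt{n})$.

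The main obstacle I anticipate is the careful bookkeeping around early termination: $\phi$ may land in an inactive bin at some layer $H<d$, so $\hat\Delta(\phi)$ is literally a layer-$H$ quantity and does not appear in the layer-$d$ sum used by the recursion. Handling this cleanly hinges on the descendant-propagation convention of \textsc{LPLR-CI}, which keeps the layer-$d$ values of $\hat\eta$ aligned with the terminal values along each branch up to exactly the layer-$h$ noise terms $N_h$ that the recursion already absorbs. Once this alignment is verified, the remainder of the proof is a mechanical geometric summation of $(2.54)^{d-h}M^h$ and substitution of the algorithmic constants.
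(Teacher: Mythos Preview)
Your proposal is correct and follows essentially the same approach as the paper's proof: the same case split on inactive versus layer-$d$-active bins for the pointwise statement (using orthogonality of the $\hat u_{B_\ell}$ to get $\theta_{B_d}^{\tot}=\theta^*$), and the same Cauchy--Schwarz step followed by unrolling the recursion of Lemma~\ref{lem:lplr-ci} for the MAD bound. You are in fact more explicit than the paper on two points---the identity $\hat f(\phi)=\phi^\top\hat\theta_{B_H}^{\tot}$ and the early-termination alignment between $\hat\Delta(\phi)$ and the layer-$d$ sum---while your termwise bound $(2.54)^{d-h}M^h\le(2.54M)^d$ summed over $h$ leaves an extra factor of $d$ that a geometric-series argument would avoid, but this is harmless for the final asymptotic form.
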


The consequences of Lemma \ref{thm:lplr} shows that the returned $\{\hat f(\cdot),\hat\Delta(\cdot)\}$ satisfy all properties listed in Definition \ref{defn:oracle},
and therefore the algorithm designed in this section could be used as a legitimate oracle that would imply regret upper bounds when incorporated in Algorithm \ref{alg:regcb}.
A more careful observation of Lemma \ref{thm:lplr} shows that the mean-absolute value error $\mathbb E_{\phi\sim\mD}[\Delta(\phi)]$ is on the order of $\tilde O(1/\sqrt{n})$,
which should lead to a regret upper bound of $\tilde O(\sqrt{T})$. This is rigorously proved in Theorem \ref{thm:final} in the next section.

Proof of Lemma \ref{thm:lplr} is given in the supplementary material. The proof builds on two important facts.
First, for any $\phi\in B$, if it belongs to active bins on all the $d$ layers, then at the last layer we must have already covered the entire linear model (that is, $\theta_B^\tot=\theta^*$
for $B$ on layer $d$ such that $\phi\in B$) because $\hat U_B^\tot$ consists of $d$ orthonormal vectors and therefore its projection operator must be the identity.
Second, iteratively applying the recursive formula in Lemma \ref{lem:lplr-ci} we can upper bound the estimation error on the last layer sequentially,
which together with all inactive layers whose cumulative errors are at most $\tilde O(1/\sqrt{n})$ completes the proof of Lemma \ref{thm:lplr}.

\subsection{Consequences for locally private linear contextual bandit}

The following theorem shows that using Algorithm \ref{alg:lplr} as the oracle satisfying Definition \ref{defn:oracle} and selecting epoch schedules $n_1,n_2,\cdots$
in Algorithm \ref{alg:regcb} carefully with geometrically increasing scalings:

\begin{theorem}
Fix arbitrary $\beta>0$ and failure probability $\delta=1/T^2$.
Instantiate the active elimination framework in Algorithm \ref{alg:regcb} with epoch lengths $n_\tau = 2^\tau n_0$ such that $n_0\asymp d^2\ln(dT/\beta)$ and the LPLR algorithm described in Algorithm \ref{alg:lplr} with appropriate algorithmic parameters $\kappa_1,\kappa_1',\kappa_2,\kappa_3$. Then the cumulative regret is upper bounded by 
$$
C\times \left(\frac{2.54}{\beta}\right)^d\frac{A^2d^5\ln^2(48dT/\beta)}{\alpha}\times T^{\frac{1}{2}+2\beta},
$$
where $C<\infty$ is a absolute numerical constant independent of all problem parameters.
\label{thm:final}
\end{theorem}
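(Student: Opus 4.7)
The plan is to combine the abstract regret bound from Theorem \ref{thm:regcb} with the MAD oracle guarantee from Lemma \ref{thm:lplr}, instantiated with the geometric epoch schedule $n_\tau = 2^\tau n_0$; once these two ingredients are in hand, the argument reduces to bookkeeping. First I would dispatch the privacy claim: in each epoch $\tau$ the algorithm initializes $A$ fresh LPLR oracles (one per action), and Proposition \ref{prop:ldp} together with Lemma \ref{lem:lplr-ci} and closedness-under-post-processing imply that each oracle is $\alpha$-LDP. Since every sample $(x_t,a_t,y_t)$ is fed into exactly one oracle per epoch and $a_t$ depends on past data only through previously released anonymized outputs together with the current context $x_t$, the composition yields $\alpha$-LDP in the sense of Definition \ref{defn:admissible}. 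Then I would apply a union bound over all $A\tau_0$ oracles, with $\tau_0 \leq \log_2(T/n_0)$, setting $\delta = 1/T^2$ so that the total failure probability $A\tau_0/T^2$ multiplied by the worst-case regret $T$ is negligible. On the corresponding good event Lemma \ref{thm:lplr} gives
$$
\mE_{\alpha,\delta}(n) \leq C\left(\frac{2.54}{\beta}\right)^d \frac{\gamma^2 d^3 \ln(48d/\beta\delta)}{\alpha \sqrt{n}}, \qquad \gamma = T^\beta.
$$

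Next I would substitute into the bound from Theorem \ref{thm:regcb}. The key computation is the geometric sum
$$
\sum_{\tau=2}^{\tau_0} \frac{n_\tau}{\sqrt{n_{\tau-1}}} \;=\; \sqrt{2 n_0}\sum_{\tau=2}^{\tau_0} 2^{\tau/2} \;\asymp\; \sqrt{n_{\tau_0}} \;\asymp\; \sqrt{T},
$$
which is dominated by its last term since $n_{\tau_0} \asymp T$. Multiplying by the $T^{2\beta}$ factor from $\gamma^2$, the $(2.54/\beta)^d$ factor, the $d^3 \ln(48dT/\beta)/\alpha$ coefficient from $\mE_{\alpha,\delta}$, and the $A^2$ prefactor, and absorbing $n_0 \asymp d^2 \ln(dT/\beta)$ together with the remaining $\ln(48dT/\beta)$ into a single $d^5 \ln^2(48dT/\beta)$ factor, yields the advertised bound
$$
C\left(\frac{2.54}{\beta}\right)^d \frac{A^2 d^5 \ln^2(48dT/\beta)}{\alpha} \, T^{1/2 + 2\beta}.
$$

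The main subtlety, rather than a serious obstacle, is to verify that the internal sample-splitting inside Algorithm \ref{alg:lplr} (which subdivides its input into groups across the $d$ layers for \textsc{LPLR-Update} and a final group for \textsc{LPLR-CI}) is compatible with treating $n_\tau$ as the effective oracle sample size in Theorem \ref{thm:regcb}; this costs only factors of order $d$, comfortably absorbed into the $d^5$ polynomial. Importantly, the $(2.54/\beta)^d$ factor appears exactly once because it enters only the per-epoch MAD bound and does not cascade across the outer geometric sum over epochs. The prescribed choice $n_0 \asymp d^2 \ln(dT/\beta)$ is in turn just large enough to simultaneously satisfy the sample-size prerequisites of Lemmas \ref{lem:lplr-update}, \ref{lem:lplr-pcr}, \ref{lem:lplr-ci}, and \ref{thm:lplr} when $\delta = 1/T^2$, and to ensure the initial term $n_1$ is dominated by the final $T^{1/2+2\beta}$ bound.
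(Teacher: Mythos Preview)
Your proposal is correct and follows exactly the approach the paper indicates: the paper explicitly omits the proof of Theorem \ref{thm:final}, stating only that ``it can be proved directly by incorporating Lemma \ref{thm:lplr} directly into Theorem \ref{thm:regcb} with the appropriate choices of parameter values and epoch lengths,'' which is precisely what you do. Your treatment is in fact more detailed than the paper's own, correctly handling the geometric sum $\sum_\tau n_\tau/\sqrt{n_{\tau-1}}\asymp\sqrt{T}$, the $\gamma^2=T^{2\beta}$ contribution, the union bound over $A\tau_0$ oracles, and the sample-splitting overhead inside Algorithm \ref{alg:lplr}.
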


We omit the proof of Theorem \ref{thm:final} because it can be proved directly by incorporating Lemma \ref{thm:lplr} directly into Theorem \ref{thm:regcb} with the appropriate choices of parameter values and epoch lengths.
To further clarify on the regret upper bound, we make the following remark:

\begin{remark}
Tracking only dependency on $T$, the upper bound in Theorem \ref{thm:final} could be simplified to
$$
\mathrm{poly}(d,\alpha^{-1},A,\ln(dT))\times (2/\beta)^{-d} T^{1/2+2\beta}
$$
for any $\beta>0$. Specifically, with the choice of $\beta=1/\ln T$, the regret upper bound could be simplified to
$$
\mathrm{poly}(d,\alpha^{-1},A)\times \sqrt{T}\ln^d T.
$$
\end{remark}

The above remark clearly demonstrates that the regret upper bound could be arbitrarily close to the optimal scaling of $\tilde O(\sqrt{T})$, at the cost of a term that could grow
exponentially with dimension $d$ but is otherwise a poly-logarithmic term when $d$ is an arbitrarily fixed constant and $T$ grows to infinity.
It is an interesting question of whether the regret upper bound could be further improved so that the dependency on dimension $d$ is no longer exponential,
which is much less clear and may very well be impossible. We discuss this aspect in the conclusion section of this paper.

\section{Conclusion and future directions}\label{sec:conclusion}


To conclude this paper, we mention several future directions of research regarding the optimal regret of locally private contextual bandits.

\subsection{Dimension dependency} 
Because our regret upper bounds grow exponentially with the model dimension $d$, our first question regards whether such dependency could be 
improved to that of a polynomial function, or is a fundamental limit for locally private contextual bandit.
\begin{question}[polynomial dimensional dependency]
Is it possible to design a locally private linear contextual bandit algorithm with local privacy parameter $\alpha>0$,  such that for any $\delta>0$, the cumulative regret of the algorithm
is upper bounded with probability $1-\delta$ by
$$
\poly(d, 1/\alpha, \ln(T/\delta))\times \sqrt{T}?
$$
\label{que:polyd-1}
\end{question}
As a weaker version of Question \ref{que:polyd-1}, it might be easier to study whether $\sqrt{T}$ could be arbitrarily approximated with polynomial dimension dependency:
\begin{question}[weaker polynomial dimension dependency]
Is it possible to design a locally private linear contextual bandit with local privacy parameter $\alpha>0$, such that for any $\delta,\beta>0$, the cumulative regret of the algorithm is upper bounded 
with probability $1-\delta$ by
$$
\poly(d,1/\alpha,\ln(T/\delta))\times T^{1/2+\beta}?
$$
\label{que:polyd-2}
\end{question}
Note that Question \ref{que:polyd-2} allows for terms such as $\beta^{-1/\beta}$, $d^{-1/\beta}$, etc.~which is otherwise not allowed by Question \ref{que:polyd-1} since $\beta\asymp 1/\ln T$
and these terms cease to be polynomial in both $d$ and $\ln T$. Note that neither questions allow for terms like $\ln^d T$ or $\beta^{-d}$, which currently exist in our regret upper bounds.

Our current algorithmic framework is unable to solve either questions, because the number of constructed bins over $d$ layers already scale exponentially with $d$,
and the special case that each bin gets $\sim 1/\sqrt{T}$ samples would destroy the hope of any polynomial dependency on $d$.
If it is possible to extend the idea of successive principal component regression to solve either question, one must analyze bins with \emph{mixed} residue covariances from multiple bins on the previous layer, which would be very complicated because the bins no longer contain rank reduced eigenspaces and therefore it may no longer be sufficient to construct only $d$ layers in the partition tree.

On the other hand, it is possible that polynomial dimension dependency and near-optimal asymptotic scaling in $T$ cannot be obtained simultaneously.
In the work of \citep{wang2023generalized} there are also some results related to statistical learning of models with local privacy constraints that do not have polynomial dimension upper bounds,
suggesting that this is a possibility.
If a lower bound exists, it is also an interesting question to ask for which parameter $\beta_0>0$ there exists a bandit algorithm satisfying regret upper bounds in Question \ref{que:polyd-2}
for all $\beta\geq \beta_0$. Since the results of \cite{zheng2020locally} are indeed polynomial in dimension, we must have that $\beta_0\leq 1/4$.

\subsection{Large action spaces and adversarial contexts}

Our algorithm and regret analysis apply to bandit settings with stochastic contexts and finite action spaces where the regret upper bound scales polynomially with the action size.
In contrast, without privacy constraints, the LinUCB or SupLinUCB algorithm applies to large action spaces (infinite for LinUCB and $\ln|\mA|$ dependency for SupLinUCB) and adversarial contexts
(adaptively adversarial for LinUCB and obliviously adversarial for SupLinUCB).
It is interesting to study whether either or both could be relaxed:
\begin{question}
Is it possible to design a locally private linear contextual bandit algorithm with local privacy parameter $\alpha>0$, such that for any $\delta>0$, the cumulative regret of the algorithm is upper
bounded with probability $1-\delta$ by
$$
\poly(1/\alpha,\ln(|\mA|T/\delta))\times \sqrt{T}?
$$
Furthermore, is it possible to achieve the above where $x_1,\cdots,x_T\in\mX$ are selected in an adaptively or obliviously adversarial manner?
\label{que:las}
\end{question}

To address Question \ref{que:las}, it might be necessary to deal with online and adaptively generated contexts like LinUCB or SupLinUCB does.
This is a challenging task because in our current multi-layer regression framework the partitioning of hierarchical bins depends on the input distribution
and therefore cannot automatically deal with eigenspace shifts as results of distribution shifts.
In general, it is an open question whether $\tilde O(\sqrt{T})$ mean-absolute deviation error could be achieved with online data subject to local privacy constraints.

\subsection{Generalized linear models}

As a direct generalization of linear contextual models, consider the generalized linear model (GLM) where $f(x,a)=\eta(\langle\phi(x,a),\theta^*\rangle)$ for some known link function $\eta$
that satisfies $|\eta'(u)|\in[\kappa_\eta,K_\eta]$ uniformly for some constants $0<\kappa_\eta\leq K_\eta<\infty$.
The question is whether the algorithm and analysis in this paper could be extended to GLMs to achieve similar $\tilde O(\sqrt{T})$ regret upper bounds.

Unlike in the works of \cite{filippi2010parametric,li2017provably} where extension of LinUCB or SupLinUCB to generalized linear models is relatively easier,
the algorithmic framework in our paper relies on sequential principal component regression which might be challenging to extend to generalized linear models,
because the estimation errors of past principal component directions may not be completed eliminated.
The correct way to carry out successive elimination of principal components of context vectors would be the main technical challenging extending this paper to generalized linear bandits.

\subsection{General models with offline regression oracles}

The exciting recent work of \cite{simchi2022bypassing} shows that contextual bandit with finite action space and stochastic contexts could be solved
with access to an offline \emph{least-squares} regression oracle, which is much easier to construct for general function classes via standard learning theory.
No point-wise confidence intervals are needed, whose construction relies heavily on the function class and is only doable for linear or close-to-linear function classes.
It is an interesting question to extend \citep{simchi2022bypassing} to locally private bandit settings.

In \citep{simchi2022bypassing} the offline regression oracle is defined so that the mean-square error is small. As shown by Theorem \ref{thm:ols-mse} of this paper,
the mean-square error is not a good measure with local privacy constraints, yielding sub-optimal regret even for linear functions.
Unfortunately, the work of \citep{simchi2022bypassing} as well as the prior work of \cite{agarwal2012contextual} rely on the following Cauchy-Schwarz inequality to 
decompose errors under distribution shift:
$$
\mathbb E\left[\Delta(x)\right] \leq \sqrt{\mathbb E\left[\frac{1}{\pi(x)}\right]}\times \sqrt{\mathbb E\left[\pi(x)\Delta(x)^2\right]},
$$
which must give rise to the mean-square error measure on the right-hand side of the inequality.
It is unclear whether this fundamental step could be revised so that the mean-absolute deviation measure could be used as an upper bound.

\bibliographystyle{ormsv080}
\bibliography{refs}

\newpage
\ECSwitch

\ECHead{Supplementary materials}

\section{Proofs of lower bounds}

\subsection{Proof of Theorem \ref{thm:ols-mse}}

For $d=2$ and a particular $n\geq 1$, construct $\phi\sim\mD$ as follows:
$$
\phi = \left\{\begin{array}{ll} e_1,& \text{with probability $1-c/\sqrt{n}$},\\
e_2,& \text{with probability $c/\sqrt{n}$},\end{array}\right.
$$
where $e_1,e_2$ are the standard basis vectors of two dimensions, and $c>0$ is a parameter to be determined later. 
It then follows that
$$
\Lambda_{\mD} = \left[\begin{array}{cc} 1-\frac{c}{\sqrt{n}}& 0\\ 0& \frac{c}{\sqrt{n}}\end{array}\right].
$$

Consider two hypothesis below:
\begin{align}
H_0:&\;\;\;\; \theta_0^* = (0, 0)\\
H_1:&\;\;\;\; \theta_1^* = (0, 1).
\end{align}
Let $P_{\nu,i}$, $\nu\in\{0,1\}$, $i\in[n]$ be the distribution of $(\phi_i,y_i)$ under $\mD$ and hypothesis $\nu$. Because the distribution of $\phi$ is the same and the distribution of $y$ is only different
when $\phi_i=e_2$, it holds that
$$
\|P_{0,i}-P_{1,i}\|_{\tv} \leq \frac{c}{\sqrt{n}}.
$$
Now let $Q_\nu^\pi$ be the distribution of internal storage of any (potentially adaptive) $\alpha$-locally private policy $\pi$ over $n$ samples under hypothesis $\nu\in\{0,1\}$.
Theorem 1 of \citep{duchi2018minimax} and the additivity of the KL-divergence implies that
\begin{align}
\kl(Q_0^\pi\|Q_1^\pi)+\kl(Q_1^\pi\|Q_0^\pi) &\leq n\times \min\{4,e^{2\alpha}\}(e^\alpha-1)^2 \|P_{0,1}-P_{1,1}\|_\tv^2 \leq {4c^2(e^\alpha-1)^2}.
\end{align}
The Pinsker's inequality then yields that
\begin{align}
\|Q_0^\pi-Q_1^\pi\|_{\tv}\leq \sqrt{\frac{1}{2}\kl(Q_0^\pi\|Q_1^\pi)} \leq {\sqrt{2}c(e^\alpha-1)}.
\label{eq:proof-mse-1}
\end{align}
With the selection that $c=1/(4\sqrt{2}(e^{\alpha}-1))$, the right-hand side of Eq.~(\ref{eq:proof-mse-1}) is upper bounded by $1/4$, which means that any $\alpha$-locally private
procedure fails to distinguish between $H_0$ and $H_1$ with constant probability. This implies a lower bound of 
$$
(\theta_0^*-\theta_1^*)^\top\Lambda_{\mD}(\theta_0^*-\theta_1^*) = \frac{c}{\sqrt{n}} = \Omega\left(\frac{1}{(e^\alpha-1)\sqrt{n}}\right),
$$
which is to be proved.

\subsection{Proof of Theorem \ref{thm:ols-mad}}

Fix $d=2$ and any $n\geq 1$. Construct the distribution $\phi\sim\mD$ as follows:
$$
\phi_i = \left\{\begin{array}{ll} (1/2, c n^{-1/3})& \text{ with probability $1/2$},\\ (1/2, -cn^{-1/3})& \text{ with probability $1/2$},\end{array}\right.
$$
where $c\in(0,1/2]$ is a parameter to be determined later. Consider the following two hypothesis:
\begin{align*}
H_1:& \;\;\;\;\;\; \theta_1^* = (1/2, 0);\\
H_2:& \;\;\;\;\;\; \theta_2^* = (1/2,1/2).
\end{align*}
Let $\tilde\phi_i = \phi_i + \lap_2(0, \alpha^{-1})$ and $\tilde y_i=y_i+\lap_1(0,\alpha^{-1})$ be the anonymized statistics using input perturbation and the Laplace mechanism,
where $y_i=\phi_i^\top\theta^*$. Let $z_i = (\tilde\phi_i,\tilde y_i)\in\mathbb R^3$ be the concatenated observation vectors. Then we have the following:
\begin{align*}
\text{Under $H_1$:}& \;\;\;\; z_i \overset{i.i.d.}{\sim} \frac{1}{2}\lap_3\left(\frac{1}{2},cn^{-1/3},0; \alpha^{-1}\right) + \frac{1}{2}\lap_3\left(\frac{1}{2},-cn^{-1/3},0;\alpha^{-1}\right);\\
\text{Under $H_2$:}&\;\;\;\; z_i \overset{i.i.d.}{\sim} \frac{1}{2}\lap_3\left(\frac{1}{2},cn^{-1/3}, cn^{-1/3}; \alpha^{-1}\right) + \frac{1}{2}\lap_3\left(\frac{1}{2},-cn^{-1/3},-cn^{-1/3};\alpha^{-1}\right).
\end{align*}
More specifically, under either $H_1$ or $H_2$, the observed vectors $\{z_i\}_{i=1}^n$ follow a \emph{mixture} of Laplace distributions. To facilitate our analysis, we also introduce the following
auxiliary distribution, though it does not correspond to any hypothesis or difficult examples constructed in this proof:
$$
\text{Under $H_0$:}\;\;\;\; z_i\overset{i.i.d.}{\sim} \lap_3\left(\frac{1}{2},0,0;\alpha^{-1}\right).
$$

The following technical lemma is essential to our proof:
\begin{lemma}
Let $\mu\in\mathbb R^2$ be a 2-dimensional vector such that $\|\mu\|_1\leq 1$. Let $P_0=\lap_2(0,0;\alpha^{-1})$ be a centered bivariate Laplace distribution with independent components,
and $P_{\pm\mu}=0.5\lap_2(\mu;\alpha^{-1})+0.5\lap_2(-\mu;\alpha^{-1})$ be a mixture of bivariate Laplace distributions with the same variance and symmetric centers.
Then
$$
\kl(P_{\pm\mu}\|P_0) \leq 16.7\alpha^3\|\mu\|_1^3.
$$
\label{lem:mixture-lap}
\end{lemma}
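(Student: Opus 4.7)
My plan is to split into two regimes depending on the size of $\alpha\|\mu\|_1$. In the \textbf{large-parameter regime} $\alpha\|\mu\|_1 \ge 1$, convexity of the KL divergence together with additivity over product measures reduces the bound to the one-dimensional calculation $\kl(\lap_1(\mu,\alpha^{-1})\|\lap_1(0,\alpha^{-1})) = \alpha|\mu| - 1 + e^{-\alpha|\mu|} \le \alpha|\mu|$ (which follows from a direct computation using the piecewise-linear structure of $|\cdot|$), giving
$$\kl(P_{\pm\mu}\|P_0)\;\le\;\tfrac{1}{2}\kl(P_\mu\|P_0)+\tfrac{1}{2}\kl(P_{-\mu}\|P_0)\;\le\;\alpha\|\mu\|_1\;\le\;(\alpha\|\mu\|_1)^3,$$
which is well within $16.7\alpha^3\|\mu\|_1^3$. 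The rest of the proof focuses on the small-parameter regime $\alpha\|\mu\|_1<1$.

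In the small-parameter regime I would use the elementary inequality $\kl(P\|Q)\le\chi^2(P\|Q)$ (from $\log(1+x)\le x$) and compute $\chi^2$ in closed form. Writing the likelihood ratio as
$$r(x,y)=\tfrac{1}{2}e^{\alpha(g_{\mu_1}(x)+g_{\mu_2}(y))}+\tfrac{1}{2}e^{\alpha(g_{-\mu_1}(x)+g_{-\mu_2}(y))},\quad g_\mu(z):=|z|-|z-\mu|,$$
and exploiting the product structure of $P_0$ together with the symmetry $g_{-\mu}(Z)\stackrel{d}{=}g_\mu(Z)$ under symmetric $Z$, one obtains $\chi^2=\tfrac{1}{2}(M_1M_2+K_1K_2)-1$, where $M_i:=\E_{P_0}[e^{2\alpha g_{\mu_i}(Z)}]$ and $K_i:=\E_{P_0}[e^{\alpha(g_{\mu_i}(Z)+g_{-\mu_i}(Z))}]$. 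Piecewise integration of $Z\sim\lap_1(0,\alpha^{-1})$ over the four intervals cut by $0$ and $\pm|\mu_i|$ yields the closed forms $M_i=\tfrac{1}{3}e^{-2a_i}+\tfrac{2}{3}e^{a_i}$ and $K_i=1-(1-e^{-a_i})^2$, with $a_i:=\alpha|\mu_i|$.

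The cubic rate then arises from a precise cancellation: $M_i-1$ expands as $a_i^2+O(a_i^3)$ while $K_i-1$ expands as $-a_i^2+O(a_i^3)$, so $M_i+K_i-2$ has vanishing $0$th, $1$st, and $2$nd derivatives at $a_i=0$ and third derivative $4$. By Taylor's remainder theorem this gives $M_i+K_i-2\le\tfrac{2}{3}a_i^3$ on $[0,1]$. Writing
$$\chi^2=\tfrac{1}{2}\big[(M_1+K_1-2)+(M_2+K_2-2)+(M_1-1)(M_2-1)+(K_1-1)(K_2-1)\big],$$
bounding the first two summands by $\tfrac{2}{3}a_i^3$, and handling the cross terms via $a_i^2 a_j^2\le (a_1+a_2)^4/16\le(a_1+a_2)^3/16$ (valid because $a_1+a_2<1$), one obtains $\chi^2\le C\alpha^3\|\mu\|_1^3$ with $C$ well under $16.7$. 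The main technical obstacle is the piecewise evaluation of the integrals defining $M_i$ and $K_i$, and the careful tracking of Taylor remainders through the cross-term expansion; the two-regime split insulates the argument from any difficulty that would otherwise arise when $\alpha\|\mu\|_1$ is large.
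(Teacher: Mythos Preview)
Your proof is correct but proceeds quite differently from the paper's. Both arguments pass through the bound $\kl\le\chi^2$, but diverge in how $\chi^2(P_{\pm\mu}\|P_0)$ is controlled.

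The paper partitions the $(u,v)$-plane according to whether $|u|\le|\mu_1|$ and $|v|\le|\mu_2|$. In the ``outer'' region it shows the likelihood ratio satisfies $|r-1|\le\tfrac{e}{2}\alpha^2\|\mu\|_1^2$ via the exact cancellation of first-order terms in $\tfrac12 e^{\pm\alpha(\mu_1+\mu_2)}$; in the ``inner'' region it uses only the crude bound $|r-1|\le e\alpha\|\mu\|_1$ but observes that this region has $P_0$-mass at most $2\alpha\|\mu\|_1$. Combining these gives $\chi^2\le\tfrac{e^2}{4}\alpha^4\|\mu\|_1^4+2e^2\alpha^3\|\mu\|_1^3$, and the $\alpha\|\mu\|_1\le 1$ hypothesis (using $\alpha\le 1$ from the ambient setup) absorbs the quartic term.

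You instead compute $\chi^2=\tfrac12(M_1M_2+K_1K_2)-1$ in closed form via piecewise integration, and extract the cubic rate from the algebraic cancellation $M_i+K_i-2=O(a_i^3)$ after Taylor expansion. This is sharper and more systematic, and your two-regime split makes the argument self-contained without the hypothesis $\alpha\le 1$ (which the lemma statement omits but the paper's proof tacitly uses). One minor caution: the inequality $M_i+K_i-2\le\tfrac{2}{3}a_i^3$ on $[0,1]$ does not follow from Taylor's theorem alone, since the fourth derivative of $M+K$ changes sign in $[0,1]$; it is nonetheless true, as one checks by verifying that the third derivative of $\tfrac{2}{3}a^3-(M+K-2)$ stays nonnegative on $[0,1]$. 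The paper's region-based argument trades this delicate Taylor bookkeeping for a coarser but more transparent estimate.
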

\begin{proof}{Proof of Lemma \ref{lem:mixture-lap}.}
Let $\mu=(\mu_1,\mu_2)$. Following the definition of Laplacian random variables, the PDF of $\lap_2(\mu;\alpha^{-1})$ can be written as
$$
p(u,v) = \frac{\alpha^2}{4}\exp\left\{-\alpha|u-\mu_1|-\alpha|v-\mu_2|\right\},\;\;\;\;\;\;\forall u,v\in\mathbb R.
$$
The likelihood ratio between $P_0$ and $P_{\pm\mu}$ can be analyzed as
\begin{align}
\frac{\ud P_{\pm\mu}(u,v)}{\ud P_0(u,v)} &= \frac{\frac{1}{2}\exp\{-\alpha|u-\mu_1|-\alpha|v-\mu_2|\} + \frac{1}{2}\exp\{-\alpha|u+\mu_1|-\alpha|v+\mu_2|\}}{\exp\{-\alpha|u|-\alpha|v|\}}\nonumber\\
&= \frac{1}{2}\exp\left\{-\alpha\big(|u-\mu_1|-|u|\big)-\alpha\big(|v-\mu_2|-|v|\big)\right\} \nonumber\\
&\;\;\;\;+\frac{1}{2}\exp\left\{-\alpha\big(|u+\mu_1|-|u|\big)-\alpha\big(|v+\mu_2|-|v|\big)\right\}.\label{eq:proof-mixture-1}
\end{align}

Discuss two cases.
\paragraph{Case 1: $u\geq |\mu_1|\wedge v\geq|\mu_2|$ or $u\leq-|\mu_1|\wedge v\leq -|\mu_2|$.} In this case, the right-hand side of Eq.~(\ref{eq:proof-mixture-1}) is reduced to
$$
\frac{1}{2}\exp\{-\alpha\mu_1-\alpha\mu_2\} + \frac{1}{2}\exp\{\alpha\mu_1+\alpha\mu_2\}.
$$
Because $\|\mu\|_1=|\mu_1|+|\mu_2|\leq 1\leq\alpha^{-1}$ (since $\alpha\in(0,1]$), by Taylor expansion with Lagrangian remainders it holds that
\begin{align*}
\left|e^{-\alpha\mu_1-\alpha\mu_2}-(1-\alpha\mu_1-\alpha\mu_2)\right| &\leq \frac{e}{2}\alpha^2(\mu_1+\mu_2)^2;\\
\left|e^{\alpha\mu_1+\alpha\mu_2}-(1+\alpha\mu_1+\alpha\mu_2)\right| &\leq \frac{e}{2}\alpha^2(\mu_1+\mu_2)^2.
\end{align*}
Subsequently, in this case
\begin{align}
\left|\frac{\ud P_{\pm\mu}(u,v)}{\ud P_0(u,v)}- 1\right| \leq \frac{e}{2}\alpha^2(\mu_1+\mu_2)^2.\label{eq:proof-mixture-2}
\end{align}

\paragraph{Case 2: $u\geq |\mu_1|\wedge v\leq -|\mu_2|$ or $u\leq-|\mu_1|\wedge v\geq |\mu_2|$.} In this case, the right-hand side of Eq.~(\ref{eq:proof-mixture-1}) is reduced to 
$$
\frac{1}{2}\exp\{-\alpha\mu_1+\alpha\mu_2\} + \frac{1}{2}\exp\{\alpha\mu_1-\alpha\mu_2\}.
$$
Using the same analysis as the previous case, we can show that Eq.~(\ref{eq:proof-mixture-2}) holds too in this case.

\paragraph{Case 3: $|u|\leq|\mu_1|$ or $|v|\leq|\mu_2|$.} In this case, define the following quantities:
$$
\delta_1^+ := |u-\mu_1|-|u|, \;\;\;\;\delta_2^+ := |v-\mu_2|-|v|,\;\;\;\;\delta_1^- := |u+\mu_1|-|u|,\;\;\;\;\delta_2^- := |v+\mu_2|-|v|.
$$
It is easy to verify that $|\delta_1^+|,|\delta_1^-|\leq |\mu_1|$ and $|\delta_2^+|,\delta_2^-|\leq|\mu_2|$ hold for all $u,v\in\mathbb R$.
Taylor expansion and the fact that $|\mu_1|+|\mu_2|\leq1\leq \alpha^{-1}$ then yield
\begin{align}
\big|e^{-\alpha\delta_1^+-\alpha\delta_2^+}-1\big|\;\;&\leq\;\; e\alpha(|\mu_1|+|\mu_2|);\nonumber\\
\big|e^{-\alpha\delta_1^--\alpha\delta_2^-}-1\big|\;\;&\leq\;\; e\alpha(|\mu_1|+|\mu_2|).\nonumber
\end{align}
Subsequently, in this case
\begin{align}
\left|\frac{\ud P_{\pm\mu}(u,v)}{\ud P_0(u,v)}- 1\right| \leq e\alpha(|\mu_1|+|\mu_2|).
\label{eq:proof-mixture-3}
\end{align}
Additionally, using the union bound and the PDF of Laplace distributions, we have that
\begin{align}
P_0[\text{Case 3}] &\leq P_0[|u|\leq|\mu_1|] + P_0[|v|\leq|\mu_2|] \leq 2\times \frac{\alpha^2}{4}\times (2|\mu_1|+2|\mu_2|)\leq 2\alpha(|\mu_1|+|\mu_2|).
\label{eq:proof-mixture-4}
\end{align}

Combining Eqs.~(\ref{eq:proof-mixture-2},\ref{eq:proof-mixture-3},\ref{eq:proof-mixture-4}), the $\chi^2$-divergence between $P_{\pm\mu}$ and $P_0$ can be upper bounded as
\begin{align}
\chi^2(P_{\pm\mu},P_0) &=\int_{\mathbb R^2}\left(\frac{\ud P_{\pm\mu}}{\ud P_0}-1\right)^2\ud P_0(u,v)\nonumber\\
&\leq \frac{e^2}{4}\alpha^4(\mu_1+\mu_2)^4 + 2\alpha(|\mu_1|+|\mu_2|)\times e^2\alpha^2(|\mu_1|+|\mu_2|)^2\nonumber\\
&= \frac{e^2\alpha^4}{4}\|\mu\|_1^4 + 2e^2\alpha^3\|\mu\|_1^3\leq 16.7\alpha^3\|\mu\|_1^3,\label{eq:proof-mixture-5}
\end{align}
where the last inequality holds because $\|\mu_1\|=|\mu_1|+|\mu_2|\leq 1\leq \alpha^{-1}$.
Finally, it is well-known that 
$$
\kl(P\|Q) \leq \chi^2(P,Q)
$$
for any $P\ll Q$. This completes the proof. $\square$
\end{proof}

We are now ready to prove Theorem \ref{thm:ols-mad}. Let $P_0,P_1,P_2$ be the distributions of $z_i$ under $H_0,H_1$ and $H_2$, respectively.
Let $P_0^n,P_1^n,P_2^n$ be the product measure of the distributions of $\{z_i\}_{i=1}^n$. By Pinsker's inequality, sub-additivity of TV-distance and additivity of the KL-divergence, it holds that
\begin{align}
\|P_1^n-P_2^n\|_\tv &\leq \|P_1^n-P_0^n\|_\tv + \|P_0^n-P_2^n\|_\tv\leq \sqrt{\frac{n}{2}}\left(\sqrt{\kl(P_0\|P_1)} + \sqrt{\kl(P_0\|P_2)}\right).
\label{eq:proof-mad-2}
\end{align}
Now apply Lemma \ref{lem:mixture-lap} with $\mu=(cn^{-1/3},0)$ for $\kl(P_0\|P_1)$ and $\mu=(cn^{-1/3},cn^{-1/3})$ for $\kl(P_0\|P_2)$, we have that
\begin{align}
\|P_1^n-P_2^n\|_\tv &\leq 11.1\sqrt{n}\times \sqrt{\frac{\alpha^3 c^3}{n}} = 11.1c^{3/2}\alpha^{3/2}.
\label{eq:proof-mad-3}
\end{align}
With the parameter $c$ chosen as $0.07$, the right-hand side of Eq.~(\ref{eq:proof-mad-3}) is upper bounded by $1/4$, meaning that any procedure would fail to distinguish between $P_1$ and $P_2$
with constant probability on $n$ samples. Subsequently, the mean-absolute deviation error is lower bounded by
$$
\Omega(1)\times \mathbb E_{\phi\sim\mD}\left[|\phi^\top(\theta_1^*-\theta_2^*)\big|\right] = \Omega(cn^{-1/3}),
$$
which is to be proved.

\section{Proof of Theorem \ref{thm:regcb}}

Throughout this proof we shall operate jointly with the success conditions in Definition \ref{defn:oracle} for all $\mO_n^a$ and all epochs $\tau$. 
We shall also drop the subscripts of $\alpha,\delta$ and simply abbreviate $\mE(n)$ for $\mE_{\alpha,\delta}(n)$.

For every epoch $\tau$ and action $a\in\mA$, let $\mO_\tau^a$ be the regression oracle initialized and operated during epoch $\tau$ for $n_\tau$ time periods.
Let $\mD_\tau^a$ be the distribution of the context-action pairs received by $\mO_\tau^a$. More specifically, $\mD_\tau^a$ is the law of the random variable
$$
\vct 1\{a_t=a\}(x_t,a) + \vct 1\{a_t\neq a\}(\perp,a).
$$
Let also $\tilde\mD_\tau^a$ be the distribution of the random variable
$$
\vct 1\{a\in\mA_\tau(x_t)\}(x_t,a) + \vct 1\{a_t\notin\mA_\tau(x_t)\}(\perp,a).
$$

For every $x\in\mX$ let $a^*(x)=\arg\max_{a\in\mA} f^*(x,a)$ be the optimal action. We first establish the following lemma, showing that $a^*(x)$ will never be eliminated from the active set
and the active set only contains actions that are approximately optimal.
\begin{lemma}
For every epoch $\tau$ and every context $x\in\mX$, the following hold:
\begin{enumerate}
\item $a^*(x)\in\mA_\tau(x)$;
\item For every $a\in\mA_\tau(x)$, $f^*(x,a^*(x))-f^*(x,a)\leq 2(\Delta_\tau^a(x)+\Delta_\tau^{a^*(x)}(x)) \leq 2\sum_{a\in\mA_\tau(x)}\Delta_\tau^a(x)$.
\end{enumerate}
\label{lem:regcb-valid}
\end{lemma}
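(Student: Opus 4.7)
The plan is to prove both conclusions together by induction on the sub-epoch index $\tau' \in \{1,\ldots,\tau\}$ using the definition of $\mA_{\tau'}(x_t)$ on Line \ref{line:defn-atau} of Algorithm \ref{alg:regcb} and the uniform validity of the confidence intervals guaranteed by the utility property of Definition \ref{defn:oracle}. Throughout, condition on the high-probability event that $|\hat f_{\tau'}^a(x) - f^*(x,a)| \leq \Delta_{\tau'}^a(x)$ holds for every epoch $\tau' \in \{1,\ldots,\tau\}$, every action $a \in \mA$, and every context $x \in \mX$.

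For part 1, I will prove by induction on $\tau' = 0,1,\ldots,\tau$ that $a^*(x) \in \mA_{\tau'}(x)$. The base case $\tau'=0$ is immediate since $\mA_0(x) = \mA$. For the inductive step, assume $a^*(x) \in \mA_{\tau'-1}(x)$. By the pointwise confidence interval bound,
\[
\hat f_{\tau'}^{a^*(x)}(x) + \Delta_{\tau'}^{a^*(x)}(x) \;\geq\; f^*(x,a^*(x)) \;\geq\; f^*(x,a') \;\geq\; \hat f_{\tau'}^{a'}(x) - \Delta_{\tau'}^{a'}(x)
\]
for every $a' \in \mA_{\tau'-1}(x)$. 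Taking the maximum on the right-hand side over $a' \in \mA_{\tau'-1}(x)$ shows that $a^*(x)$ satisfies the condition defining $\mA_{\tau'}(x)$, and hence $a^*(x) \in \mA_{\tau'}(x)$.

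For part 2, fix any $a \in \mA_\tau(x)$. By part 1 we know $a^*(x) \in \mA_{\tau-1}(x)$, so the elimination rule at sub-epoch $\tau$ gives
\[
\hat f_\tau^a(x) + \Delta_\tau^a(x) \;\geq\; \hat f_\tau^{a^*(x)}(x) - \Delta_\tau^{a^*(x)}(x),
\]
which rearranges to $\hat f_\tau^{a^*(x)}(x) - \hat f_\tau^a(x) \leq \Delta_\tau^a(x) + \Delta_\tau^{a^*(x)}(x)$. Combining this with the two confidence interval bounds $f^*(x,a^*(x)) \leq \hat f_\tau^{a^*(x)}(x) + \Delta_\tau^{a^*(x)}(x)$ and $f^*(x,a) \geq \hat f_\tau^a(x) - \Delta_\tau^a(x)$ yields
\[
f^*(x,a^*(x)) - f^*(x,a) \;\leq\; 2\bigl(\Delta_\tau^a(x) + \Delta_\tau^{a^*(x)}(x)\bigr).
\]
Since both $a$ and $a^*(x)$ lie in $\mA_\tau(x)$ (by part 1) and the $\Delta_\tau^{a'}(x)$ are non-negative, the two-term bound is at most $2\sum_{a' \in \mA_\tau(x)}\Delta_\tau^{a'}(x)$, completing the proof.

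This argument is entirely standard for optimism-based successive elimination; the only subtlety is to verify that the elimination step at sub-epoch $\tau$ legitimately compares against $a^*(x)$, which requires part 1 at sub-epoch $\tau-1$, and this forces the two parts to be proved inductively in tandem rather than independently. No substantive obstacle is anticipated beyond bookkeeping and making explicit that we are conditioning on the joint success event for all oracles $\mO_n^a$ and all epochs indexed in the statement of Theorem \ref{thm:regcb}.
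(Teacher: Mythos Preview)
Your proposal is correct and follows essentially the same approach as the paper: induction to show $a^*(x)$ survives every elimination step via the confidence-interval validity, then the standard two-sided bound for part 2. The only cosmetic difference is that you induct explicitly on the inner-loop index $\tau'$ starting from $\mA_0(x)=\mA$, whereas the paper phrases the induction on the epoch $\tau$ with base case $\tau=1$; the underlying argument is identical.
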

\begin{proof}{Proof of Lemma \ref{lem:regcb-valid}.}
We use mathematical induction on $\tau$ to prove this lemma. The base case is $\tau=1$, for which Lemma \ref{lem:regcb-valid} trivially holds because $\mA_1(x)\equiv \mA$
and $\Delta_1^a(x)\equiv 1$.
We now assume the lemma holds for epoch $\tau-1$, and prove that it also holds for $\tau$.

By inductive hypothesis, $a^*(x)\in\mA_{\tau-1}(x)$. Furthermore, for every $a\in\mA_{\tau-1}(x)$, using property 2-(a) in Definition \ref{defn:oracle}, it holds that
\begin{equation}
\big|\hat f_\tau^a(x)-f^*(x,a)\big| \leq \Delta_\tau^a(x).
\label{eq:proof-regcb-1}
\end{equation}
Subsequently, 
\begin{align}
\hat f_\tau^{a^*(x)}(x)+\Delta_\tau^{a^*}(x) &\geq f^*(x,a^*(x))
= \max_{a\in\mA_{\tau-1}(x)}\{f^*(x,a)\}\nonumber\\
&\geq \max_{a\in\mA_{\tau-1}(x)}\{\hat f_\tau^a(x)-\Delta_\tau^a(x)\},
\end{align}
which implies that $a^*(x)\in\mA_\tau(x)$ from the definition of $\mA_\tau(x)$ on Line \ref{line:defn-atau} from Algorithm \ref{alg:regcb}.
This proves the first property for epoch $\tau$.

We next turn to the second property. For every $a\in\mA_\tau(x)$, it holds that
\begin{align}
f^*(x,a) & \geq \hat f_\tau^a(x)-\Delta_\tau^a(x) {\geq} \max_{a'\in\mA_{\tau-1}(x)} \hat f_\tau^{a'}(x)-\Delta_\tau^{a'}(x)-2\Delta_\tau^a(x)\label{eq:proof-regcb-2}\\
&\geq \hat f_{\tau}^{a^*(x)}(x)-\Delta_\tau^{a^*(x)}(x)-2\Delta_\tau^a(x)\label{eq:proof-regcb-3}\\
&\geq f^*(x,a^*(x)) - 2\Delta_\tau^{a^*(x)}(x) - 2\Delta_\tau^a(x),\label{eq:proof-regcb-4} 
\end{align}
where the first inequality in Eq.~(\ref{eq:proof-regcb-2}) and Eq.~(\ref{eq:proof-regcb-4}) hold by property 2-(a) in Definition \ref{defn:oracle};
the second inequality in Eq.~(\ref{eq:proof-regcb-2}) holds from the definition of $\mA_\tau(x)$;
Eq.~(\ref{eq:proof-regcb-3}) holds because $a^*(x)\in\mA_{\tau-1}(x)$ thanks to the inductive hypothesis. This proves the second property in Lemma \ref{lem:regcb-valid}
in epoch $\tau$. $\square$
\end{proof}

Now consider a particular epoch $\tau$ and let $p_\tau(\cdot|x)$ be the policy implemented for all time periods in epoch $\tau$. Using the second property of Lemma \ref{lem:regcb-valid}, the regret at each time period is upper bounded by
\begin{align}
\mathbb E_{x_t\sim P_{\mX}}&\left[f^*(x_t,a^*(x_t))-f^*(x_t,a_t)|a_t\sim p_\tau(\cdot|x_t)\right]
\leq \mathbb E_{x_t\sim P_{\mX}}\left[\sum_{a\in\mA_\tau(x_t)}\Delta_\tau^a(x_t)\right]\nonumber\\
&= \sum_{a\in\mA}\mathbb E_{x_t\sim P_{\mX}}\left[\vct 1\{a\in\mA_\tau(x_t)\}\Delta_\tau^a(x_t)\right]
= \sum_{a\in\mA}\mathbb E_{\tilde\mD_\tau^a}[\Delta_\tau^a(x)]\label{eq:proof-regcb-4}\\
&\leq \sum_{a\in\mA}\mathbb E_{\tilde\mD_{\tau-1}^a}[\Delta_\tau^a(x)] = \sum_{a\in\mA}\mathbb E_{x_t\sim P_{\mX}}\left[\vct 1\{a\in\mA_{\tau-1}(x_t)\}\Delta_\tau^a(x_t)\right]\label{eq:proof-regcb-5}\\
&\leq \sum_{a\in\mA} A\times \mathbb E_{x_t\sim P_{\mX}}\left[p_{\tau-1}(a|x_t)\Delta_\tau^a(x_t)\right]\label{eq:proof-regcb-6}\\
&= A\sum_{a\in\mA}\mathbb E_{\mD_{\tau-1}^a}\left[\Delta_\tau^a(x_t)\right] \leq A\sum_{a\in\mA}\mE(n_{\tau-1}) = A^2\mE(n_{\tau-1}).\label{eq:proof-regcb-7}
\end{align}
Here, the second inequality in Eq.~(\ref{eq:proof-regcb-4}) holds by following the definition of $\tilde\mD_\tau^a$ and the fact that $\Delta_\tau^a(\perp,a)=0$ thanks to property 2-(c)
of Definition \ref{defn:oracle}; 
the first inequality in Eq.~(\ref{eq:proof-regcb-5}) holds because $\mA_\tau(x)\subseteq\mA_{\tau-1}(x)$ and therefore $a\in\mA_\tau(x)$ implies $a\in\mA_{\tau-1}(x)$ by construction of these active sets;
Eq.~(\ref{eq:proof-regcb-6}) holds because for every $x$ and active action $a\in\mA_{\tau-1}(x)$, the probability of taking action $a$ is at least $1/|\mA_{\tau-1}(x)|\leq 1/A$.
The second inequality in Eq.~(\ref{eq:proof-regcb-7}) holds by property 2-(b) of Definition \ref{defn:oracle}.
Theorem \ref{thm:regcb} is then proved by summing over all epochs $\tau$.

\section{Proofs of results regarding the LPLR method}

This section presents complete proofs to all technical lemmas and propositions omitted in Sec.~\ref{sec:lplr}.

\subsection{Proof of Proposition \ref{prop:ldp}}
Recall that on level $h$ there are a total of $L=M^h$ bins, where $M=\lceil 1/2\beta\rceil$.
For every $\phi$, let $\vct c\in\{0,1\}^L$ be such that $c_B=\vct 1\{\phi\in B\}$ for bin $B$. Because each $\phi$ belongs to one and exactly one bin on level $h$,
the $\ell_1$-sensitivity of $\vct c$ is one. Hence, $\hat c_B$ satisfies $\alpha/3$-differential privacy thanks to the Laplace mechanism \citep{dwork2014algorithmic}.

In a similar vein, define $\vct\mu\in\mathbb R^{Ld}$ such that $\vct\lambda_B=\gamma^{-k_h}y_h\phi_h$ if $\phi\in B$ and $\vct\lambda_B=0$ otherwise.
By definition, $|y_h|\leq 1$, $\|\phi_h\|_2\leq \gamma^{-k_h}$ and therefore $\|\vct\mu\|_1\leq \|\vct\lambda_B\|_1\leq \sqrt{d}\|\vct\lambda_B\|_2\leq \sqrt{d}$.
This means the $\ell_1$-sensitivity of $\vct\mu$ is $\sqrt{d}$. Hence, $\hat\lambda_B$ satisfies $\alpha/3$-differential privacy again thanks to the Laplace mechanism.

Finally, for $\hat\Lambda_B$, note that $\|\gamma^{-k_h}\phi_h\|_2\leq 1$. The $\alpha/3$-differential privacy of $\hat\Lambda_B$ is then a consequence
of the Wishart mechanism proposed and analyzed in \citep{jiang2016wishart}.

\subsection{Proof of Lemma \ref{lem:lplr-update}}

For simplicity we fix $\delta_0\in(0,1]$ a particular bin $B=B_{k_1\cdots k_h}$. The lemma \ref{lem:lplr-update} holds uniformly over all bins $B$ except for a failure probability of $\delta=M^h\delta_0\leq 2\beta^{-d}\delta_0$.

We first focus on $\hat\psi_B$. The Laplace distribution $X\sim\lap_1(\alpha^{-1})$ is a sub-exponential distribution with parameters $\nu=2\alpha^{-1}$ and $b=2\alpha^{-1}$, meaning that $\mathbb E[e^{\lambda X}]\leq e^{\nu^2\lambda^2/2}$
for all $|\lambda|\leq 1/b$. The sample mean of $n$ such Laplace random variables then follows a centered sub-exponential distribution with parameters $\nu=2\sqrt{n}\alpha^{-1}$ and $b=2\alpha^{-1}$. 
Let $\tilde\psi_B = n^{-1}(\sum_{i=1}^n\vct 1\{\phi_i\in B\})$. By Bernstein's inequality, for any $\epsilon>0$ it holds that
\begin{align}
\Pr[\big|\hat\psi_B-\tilde\psi_B\big|>\epsilon] \leq 2\exp\left\{-\frac{n^2\epsilon^2}{2(4n\alpha^{-2}+2n\alpha^{-1}\epsilon)}\right\}.
\label{eq:proof-lplr-update-1}
\end{align}
Additionally, by Hoeffding's inequality it holds for every $\epsilon>0$ that
\begin{align}
\Pr[\big|\tilde\psi_B-\psi_B\big|>\epsilon]\leq 2\exp\left\{-\frac{2n^2\epsilon^2}{n}\right\}.
\label{eq:proof-lplr-update-2}
\end{align}
where $\psi_B = \Pr_{\mD}[\phi\in B]$.
Equating the right-hand sides of both Eqs.~(\ref{eq:proof-lplr-update-1},\ref{eq:proof-lplr-update-2}) with $\delta_0/6$, we obtain with probability $1-\delta_0/3$ that
\begin{align}
\big|\hat\psi_B-\psi_B\big| \leq \frac{4\ln(12/\delta_0)}{\alpha n} + \frac{\sqrt{2\ln(12/\delta_0)}}{\alpha\sqrt{n}} + \sqrt{\frac{\ln(12/\delta_0)}{2n}} \leq \frac{6.2\ln(12/\delta_0)}{\alpha\sqrt{n}}.
\label{eq:proof-lplr-update-3}
\end{align}

    
    We next turn to the concentration inequality involving $\hat\lambda_B$. We have that
    \begin{align}
    \left\|\frac{\hat\lambda_B}{\hat\psi_B n} - \lambda_B\right\|_2 \;\;=    \left\|\frac{\hat\lambda_B}{\hat\psi_B n} -\frac{\hat\lambda_B}{\psi_B n}\right\|_2+\left\|\frac{\hat\lambda_B}{\psi_B n} - \lambda_B\right\|_2.
    \label{eq:proof-lplr-update-4}
    \end{align}
   To upper bound the first term in Eq.~(\ref{eq:proof-lplr-update-4}), we need to upper bound the norm of $\hat\lambda_B$ with high probability.
   By definition,  $\hat{\lambda}_B=\sum_{i=1}^n \vct 1 \{\phi_i \in B\} y_{ ih} \phi_{ ih} +3 \sqrt{d} \gamma^{-k_h}\sum_{i=1}^n X_i$, where $X_i \sim \lap_d(\alpha^{-1})$.
    Separately calculating these two terms, we have that
    \begin{align}
        \left\|\sum_{i=1}^n \vct 1 \{\phi_i \in B\} y_{ ih} \phi_{ ih} \right\|_2\le \gamma^{-k_h} \sum_{i=1}^n \vct 1 \{\phi_i \in B\} \le 2 \gamma^{-k_h}\psi_B n
    \end{align}
with at least probability $1-e^{-2n}$, where the first inequality comes from $\|y_h \phi_h\|_2 \le \gamma^{-k_h}$, and the second inequality is given by Chernoff-Hoeffding bound. 
With the condition that $n\geq 2 \ln({12}/{\delta_0})$, the probability is at least $1-\frac{\delta_0}{12}$.
As for the second term, denote the $j^{th}$ coordinate of a $d$-dimension vector $X$ as $(X)_j$, we know that 
$\|\sum_{i=1}^n X_i \|_2=\sqrt{\sum_{j=1}^d (\sum_{i=1}^n X_i)_j^2}$.
For each coordinate $(\sum_{i=1}^n X_i)_j$, it is sub-exponential with $\nu=2 \sqrt{n}\alpha^{-1}$ and $b=2\alpha^{-1}$. Using union bound over $d$ coordinates and Bernstein inequality for centered sub-exponential random variables, we have
\begin{align}
     \Pr\left[\max_{j=1 \cdots, d } \bigg|(\sum_{i=1}^n X_i)_j\bigg| \ge  6\alpha^{-1}\sqrt{n}\ln({24d}/{\delta_0})\right] \le \frac{\delta_0}{12}
     \label{eq:proof-lplr-update-4half}
\end{align}
Subsequently,
\begin{align}
   \left\| 3 \sqrt{d} \gamma^{-k_h}\sum_{i=1}^n X_i \right\|_2
    \le 18d \gamma^{-k_h}\alpha^{-1}{\sqrt{n}}\ln({24d}/{\delta_0}).\label{eq:proof-lplr-update-5}
\end{align}
with probability at least $1-\frac{\delta_0}{12}$.
Combining Eqs.~(\ref{eq:proof-lplr-update-4},\ref{eq:proof-lplr-update-5}), we have 
\begin{align}
\|\hat{\lambda}_B\|_2 \le 2 \gamma^{-k_h}\psi_B n +18\alpha^{-1}d \gamma^{-k_h}{\sqrt{n}}\ln({24d}/{\delta_0})
\label{eq:proof-lplr-update-6}
\end{align}
 with probability $1-\frac{\delta_0}{6}$.
With the condition that $\psi_B \ge  \frac{18d}{\alpha\sqrt{n}} \ln(\frac{24d}{\delta_0})$, Eq.~(\ref{eq:proof-lplr-update-6}) is reduced to 
\begin{align}
\|\hat{\lambda}_B\|_2 \le 3 \gamma^{-k_h}\psi_B n.
\label{eq:proof-lplr-update-7}
\end{align}. 
On the other hand, the condition $\psi_B \ge  \frac{18d}{\alpha\sqrt{n}} \ln(\frac{24d}{\delta_0})$ together with Eq.~(\ref{eq:proof-lplr-update-3})
implies that $\hat{\psi}_B \psi_B  \geq 0.65 \psi_B^2$, and subsequently 
\begin{align}
    \left|\frac{1}{\hat{\psi}_B}-\frac{1}{\psi_B}\right| \leq \frac{|\hat\psi_B-\psi_B|}{0.65\psi_B^2}\leq  \frac{{9.54\ln(12/\delta_0)}}{\alpha\psi_B^2\sqrt{n}}.
    \label{eq:proof-lplr-update-8}
\end{align}
Combining Eqs.~(\ref{eq:proof-lplr-update-7},\ref{eq:proof-lplr-update-8}), we have that
     \begin{align}       \left\|\frac{\hat\lambda_B}{\hat\psi_B n} -\frac{\hat\lambda_B}{\psi_B n}\right\|_2 &\leq \left\|\frac{\hat{\lambda}_B}{n}\right\|_2 \left|\frac{1}{\hat{\psi_B}}-\frac{1}{\psi_B}\right|\le \frac{3}{n}\left(\gamma^{-k_h}\psi_B n\right) \frac{{9.54\ln(12/\delta_0)}}{\alpha\psi_B^2\sqrt{n}}\le \frac{29 \gamma^{-k_h}  \ln(12/\delta_0)}{\alpha\psi_B \sqrt{n}} 
     \label{eq:proof-lplr-update-8half}
    \end{align}
with probability $1-\frac{\delta_0}{6}$, provided that 
$\psi_B \ge \frac{18d}{\alpha\sqrt{n}} \ln(\frac{24d}{\delta_0})$ and $n \geq 2 \ln(\frac{12}{\delta_0})$.
For the second term on the right-hand side of Eq.~(\ref{eq:proof-lplr-update-4}), note that $\mathbb E ({\hat{\lambda}_B}/{ n})= \mathbb E \left[ \vct 1(\phi \neq \perp) y_h\phi_h\right]=\lambda_B\psi_B$,
and therefore $\hat\lambda/n$ is an unbiased estimator of$\lambda_B\psi_B$. Denote $\tilde{\lambda}_B:=n^{-1}\sum_{i=1}^n \vct 1(\phi_i\in B) y_{ih} \phi_{ih}$.
Eq.~(\ref{eq:proof-lplr-update-4half}) then implies that
\begin{align}
     \Pr \left[ \left\|\frac{\hat{\lambda}_B}{ n}-\tilde{\lambda}_B \right \|_2 \ge \frac{18d \gamma^{-k_h}}{\alpha\sqrt{n}}\ln\left(\frac{24d}{\delta_0}\right) \right] \le \frac{\delta_0}{12}.
    \label{eq:proof-lplr-update-9}
\end{align}
As for $\|\tilde{\lambda}_B- \lambda_B \psi_B\|_2$, since it's a zero-mean bounded random vector, for each coordinate we can use Hoeffding inequality, then apply union bound for all $d$ coordinate.
This yields
\begin{align}
    \Pr \left[\max_{j=1, \cdots, n}|(\tilde{\lambda}_B- \lambda_B \psi_B)_j|\ge \gamma^{-k_h} \sqrt{\frac{\ln(24d/\delta_0)}{2n}}\right] \le \frac{\delta_0}{12}.\label{eq:proof-lplr-update-10}
    \end{align}
    Subsequently,
    \begin{align}
     \Pr \left[\|(\tilde{\lambda}_B- \lambda_B \psi_B)\|_2\ge \sqrt{d}\gamma^{-k_h} \sqrt{\frac{\ln(24d/\delta_0)}{2n}}\right] \le \frac{\delta_0}{12}.\label{eq:proof-lplr-update-11}
\end{align}
Combing Eqs.~(\ref{eq:proof-lplr-update-8half},\ref{eq:proof-lplr-update-9},\ref{eq:proof-lplr-update-11}), we obtain
\begin{align}
    \mathbb P \left[ \left\|\frac{\hat{\lambda}_B}{ n\hat{\psi}_B}-\lambda_B \right \|_2 \ge \frac{(19d+29) \gamma^{-k_h}  \ln({24d}/{\delta_0})}{\alpha\psi_B \sqrt{n}} \right] \le \frac{\delta_0}{3}.\label{eq:proof-lplr-update-4}
\end{align}
provided that $\psi_B \ge \frac{18d}{\alpha\sqrt{n}} \ln(\frac{24d}{\delta_0})$ and $n \geq 2 \ln(\frac{12}{\delta_0})$. This completes the proof of the concentration inequality upper bound
involving $\hat\lambda_B$ in Lemma \ref{lem:lplr-update}, noting that $\delta_0=\delta/(2\beta^d)$.

Finally, we prove concentration results involving $\hat{\Lambda}_B$.
To facilitate the proof we introduce an equivalent formulation of Wishart random matrices:
consider $d+1$ i.i.d.~random vectors $X_j$, $j=1, \cdots, d+1$, where $X_j \sim N_d(\vct 0, 1.5\alpha^{-1} I)$, and denote $X=\left(X_1, \cdots, X_{d+1}\right) \in R^{d \times (d+1)}$,
then $XX^{\top}-\mathbb E \left[X X^{\top}\right] \sim W_d(d+1, 1.5\alpha^{-1} I)- 1.5(d+1)\alpha^{-1} I$.
Therefore, equivalently, at each time t, we generate such matrix $X_t$ and then update $\hat{\Lambda}_B$ as
\begin{align*}    \hat\Lambda_B\gets\hat\Lambda_B + \vct 1\{\phi\neq\perp\}\phi_h\phi_h^\top + 3\gamma^{-2k_h}(X_t X_t^{T}-1.5(d+1)\alpha^{-1}I)
\end{align*}

Decompose the error as
 \begin{align}
 \left\|\frac{\hat\Lambda_B}{\hat\psi_B n} - \Lambda_B\right\|_2 \;\;=    \left\|\frac{\hat\Lambda_B}{\hat\psi_B n} -\frac{\hat\Lambda_B}{\psi_B n}\right\|_2+\left\|\frac{\hat\Lambda_B}{\psi_B n} - \Lambda_B\right\|_2.
 \label{eq:proof-lplr-update-12}
 \end{align}
     For the first term, note that $\hat{\Lambda}_B=\sum_{i=1}^n \vct 1 \{\phi_i \in B\} \phi_{ ih} \phi_{ ih}^\top +3 \gamma^{-2k_h}\sum_{i=1}^n X_i$, where $X_i \sim W_d(d+1, 1.5\alpha^{-1}I)-1.5(d+1) \alpha^{-1} I$.
    For the $\sum_{i=1}^n\vct 1\{\phi_i\in B\}\phi_{ih}\phi_{ih}^\top$ term, it holds that
    \begin{align}
        \left\|\sum_{i=1}^n \vct 1 \{\phi_i \in B\} \phi_{ ih} \phi_{ ih}^\top \right\|_\op\le \gamma^{-2k_h} \sum_{i=1}^n \vct 1 \{\phi_i \in B\} \le 2 \gamma^{-2k_h}\psi_B n
        \label{eq:proof-lplr-update-13}
    \end{align}
with at least probability $1-e^{-2n}$, where the first inequality comes from $\|\phi_h \phi_h^\top\|_{\op} \leq \|\phi_h\|_2^2\le \gamma^{-2k_h}$, and the second inequality holds by 
Hoeffding's inequality. With the condition that $n \ge 2 \ln({12}/{\delta_0})$, the probability that Eq.~(\ref{eq:proof-lplr-update-13}) holds is at least $1-{\delta_0}/{12}$.
for the $\|\gamma^{-2k_h}\sum_{i=1}^n X_i \|_{\op}$ term, we 
 use concentration results of Wishart distribution (such as Lemma 1 in \cite{jiang2016wishart}).
 Specifically, in our case $C=1.5 \alpha^{-1}(d+1) I$, $\left\| C\right\|=1.5 \alpha^{-1}(d+1)$ and  $r=\text{tr}\left(C\right)/ \|C\|=d$.
Subsequently, for any $\epsilon>0$,
\begin{align}
    {\Pr}\left\{\left\|\frac{1}{n}\sum_{t=1}^n X_t X_t^{\top}-1.5 \alpha^{-1}(d+1)I\right\|_{\op} \geq 
    1.5 \alpha^{-1}(d+1)
\left(\sqrt{\frac{2\epsilon(d+1)}{n}}+ \frac{2\epsilon d}{n}\right)  \right\} \leq d \exp \left(-\epsilon\right).
\end{align}
Set $\epsilon=\ln({12d}/{\delta_0})$; we then have 
\begin{align}
    {\Pr}\left\{\left\|\frac{1}{n}\sum_{t=1}^n X_t X_t^{\top}-1.5 \alpha^{-1}(d+1)I\right\|_{\op} \geq 
    \frac{6 (d+1)^2\ln({12d}/{\delta_0})}{\alpha\sqrt{n}} \right\} \leq \frac{\delta_0}{12}.\label{eq:proof-lplr-update-14}
\end{align}
Combining Eqs.~(\ref{eq:proof-lplr-update-13},\ref{eq:proof-lplr-update-14}), we have
\begin{align}
  \frac{\|\hat{\Lambda}_B\|_{\op} }{n}\le 2 \gamma^{-2k_h}\psi_B  +\frac{18 \gamma^{-2k_h}  (d+1)^2
\ln({12d}/{\delta_0})}{\alpha\sqrt{n}}
\label{eq:proof-lplr-update-15}
\end{align}
with probability at least $1-{\delta_0}/{6}$. With the condition that $\psi_B \ge 18 (d+1)^2
\ln({12d}{\delta_0})/({\alpha\sqrt{n}})$, Eq.~(\ref{eq:proof-lplr-update-15}) is reduced to
\begin{align}
 \frac{\|\hat{\Lambda}_B\|_{\op} }{n} \leq 3\gamma^{-2k_h}\psi_B.
 \label{eq:proof-lplr-update-16}
\end{align}
This together with Eq.~(\ref{eq:proof-lplr-update-8}) yields
     \begin{align}       \left\|\frac{\hat\Lambda_B}{\hat\psi_B n} -\frac{\hat\Lambda_B}{\psi_B n}\right\|_{\op} &=\left\|\frac{\hat{\Lambda}_B}{n}\right\|_{\op} \left|\frac{1}{\hat{\psi_B}}-\frac{1}{\psi_B}\right|\le 3\gamma^{-2k_h}\psi_B\frac{{6.2\ln(12/\delta_0)}}{0.65\alpha\psi_B^2\sqrt{n}}\le \frac{29 \gamma^{-2k_h}  \ln(12/\delta_0)}{\alpha\psi_B \sqrt{n}} 
     \label{eq:proof-lplr-update-16half}
    \end{align}
with probability $1-\frac{\delta_0}{6}$, provided that
$\psi_B \ge 18 (d+1)^2
\ln({12d}/{\delta_0})/({\alpha\sqrt{n}})$ and $n \ge 2 \ln({12}/{\delta_0})$.

To upper bound the second term on the right-hand side of Eq.~(\ref{eq:proof-lplr-update-12}),
note that $\mathbb E ({\hat{\Lambda}_B}/{ n})= \mathbb E \left[ \vct 1(\phi \neq \perp) \phi_h\phi_h^T\right]=\Lambda_B\psi_B$ and therefore $\hat\Lambda_B/n$ is an unbiased estimator.
Denote $\tilde{\Lambda}_B=n^{-1}\sum_{i=1}^n \vct 1(\phi_i\in B) \phi_{ih} \phi_{ih}^T$,
 so that $\frac{\hat{\Lambda}_B}{ n}-\tilde{\Lambda}_B=\frac{1}{n}\sum_{t=1}^n X_t X_t^{T}-1.5 \alpha^{-1}(d+1)I$.
By Eq.~(\ref{eq:proof-lplr-update-14}), it holds that 
\begin{align}
    {\Pr}\left\{\left\|\frac{\hat{\Lambda}_B}{ n}-\tilde{\Lambda}_B\right\|_{\op} \geq   
\frac{ 6(d+1)^2\ln({12d}/{\delta_0})}{\alpha\sqrt{n}}  \right\} \leq \frac{\delta_0}{12}.
\label{eq:proof-lplr-update-17}
\end{align}

The last part uses matrix concentration with bounded operator norm. Specifically, using matrix Hoeffding inequality, we have for any $\epsilon>0$ that
\begin{align}
    {\Pr}\left\{\left\|\frac{1}{n}\sum_{i=1}^n\vct 1(\phi_i\in B) \gamma^{2k_h}\phi_{ih} \phi_{ih}^\top-\gamma^{2k_h} \Lambda_B\psi_B\right\|_{\op}\geq \epsilon\right\} \leq 2 d \exp \left\{-\frac{n \epsilon^2}{2 }\right\},
\end{align}
or equivalently, 
\begin{align}
    {\Pr}\left\{\left\|\frac{1}{n}\sum_{i=1}^n\vct 1(\phi_i\in B) \phi_{ih} \phi_{ih}^\top-\Lambda_B \psi_B\right\|_{\op}\geq \gamma^{-2k_h}\sqrt{\frac{2}{n}\ln({24d}/{\delta_0})}\right\} \leq \frac{\delta_0}{12},
    \label{eq:proof-lplr-update-18}
\end{align}
Combining Eqs.~(\ref{eq:proof-lplr-update-16half},\ref{eq:proof-lplr-update-16},\ref{eq:proof-lplr-update-17}), we obtain
\begin{align}
     \Pr \left[ \left\|\frac{\hat{\Lambda}_B}{ n\hat{\psi}_B}-\Lambda_B \right \|_{\op} \ge \frac{(7(d+1)^2+29) \gamma^{-2k_h}  \ln({24d}/{\delta_0})}{\alpha\psi_B \sqrt{n}} \right] \le \frac{\delta_0}{3},
\label{eq:proof-lplr-update-5}
\end{align}
provided that $\psi_B \ge 18 (d+1)^2
\ln({12d}/{\delta_0})/({\alpha\sqrt{n}})$ and $n \ge 2 \ln({12}/{\delta_0})$.
Lemma \ref{lem:lplr-update} is thus proved, by noting that $\delta_0=\delta/(2\beta^d)$.


\subsection{Proof of Lemma \ref{lem:lplr-pcr}}

Throughout this proof we operate jointly with the success events in Lemma \ref{lem:lplr-update} such that all three inequalities hold for every bin $B$ on level $h$.
For notational simplicity, for a bin $B$ we write $\mathbb E_B[\cdot] = \mathbb E_{\phi\sim\mD}[\cdot|\phi\in B]$.

Fix a particular bin $B=B_{k_1\cdots k_h}$ such that $\hat\psi_B\geq \kappa_1\gamma^2d/\sqrt{n}$. Lemma \ref{lem:lplr-update} together with the choice of parameter $\kappa_1$ imply that
\begin{align}
\psi_B\geq \frac{36(d+1)^3  \gamma^2 \ln(48d/\beta\delta)}{\alpha\sqrt{n}}.
\label{eq:proof-pcr-1}
\end{align} 
Recall the definition that
$
\Lambda_B = \mathbb E_{B}[\phi_h\phi_h^\top],
$
where $\phi_h\in\mathbb R^d$ is obtained through the procedure in Figure \ref{fig:phih} and can also be written as $\phi_h = (I-\hat U_{B^o}\hat U_{B^o}^\top)\phi$.
It is easy to see that $\Lambda_B\in\mathbb S_d^+$ and $\Lambda_B\hat U_{B^o} = 0$.
Subsequently, Line \ref{line:proj-pcr} in Algorithm \ref{alg:lplr-pcr} together with Lemma \ref{lem:lplr-update} and Eq.~(\ref{eq:proof-pcr-1}) yield
\begin{align}
\|\tilde\Lambda_B-\Lambda_B\|_\op  \le \frac{36(d+1)^3\gamma^{-2k_h}  \ln(48d/\beta\delta)}{\alpha\psi_B \sqrt{n}}  .
\label{eq:proof-pcr-2}
\end{align}
Let $s_B,\hat s_B$ be the largest eigenvalues of $\Lambda_B$ and $\tilde\Lambda_B$, respectively. By Weyl's inequality, it holds that
\begin{align}
\big|\hat s_B-s_B\big| \leq \|\tilde\Lambda_B-\Lambda_B\|_\op \leq \frac{36(d+1)^3\gamma^{-2k_h}  \ln(48d/\beta\delta)}{\alpha\psi_B \sqrt{n}}\leq \frac{\gamma^{-2k_h-2}}{2d}, \;\;\;\;\;\;\forall i\in[d],
\label{eq:proof-pcr-3}
\end{align}
where the last inequality holds because $\psi_B$ is sufficiently large as shown in Eq.~(\ref{eq:proof-pcr-1}).
Note that for every $\phi\in B$, by definition $\gamma^{-k_h-1}<\|\phi_h\|_2\leq\gamma^{-k}$. This implies
\begin{align*}
\gamma^{-2k_h-2}\leq \mathbb E_B[\|\phi_h\|_2^2] = \mathbb E_B[\tr(\phi_h\phi_h^\top)] = \tr\left(\mathbb E_B[\phi_h\phi_h^\top]\right) = \tr(\Lambda_B) \leq ds_B.
\end{align*}
Consequently, it holds that
\begin{align}
s_B\geq \frac{\gamma^{-2k_h-2}}{d}.
\label{eq:proof-pcr-4}
\end{align}
Eqs.~(\ref{eq:proof-pcr-3},\ref{eq:proof-pcr-4}) together imply that
\begin{align}
\hat s_B\geq \frac{\gamma^{-2k_h-2}}{2d}.
\label{eq:proof-pcr-4half}
\end{align}
This proves the upper bound on $\hat s_B^{-1}$ in Lemma \ref{lem:lplr-pcr}.

Decompose the estimation error $\hat\theta_B-\theta^*$ as
\begin{align}
\hat\theta_B-\theta_B^*&=\hat u_B\hat s_B^{-1}\hat u_B^\top\frac{\hat\lambda_B}{\hat\psi_B n} - \hat u_B\hat u_B^\top\theta^*\nonumber\\
&= \hat u_B\hat s_B^{-1}\hat u_B^\top\lambda_B - \hat u_B\hat u_B^\top\theta^* + \hat u_B\hat s_B^{-1}\hat u_B^\top\left(\frac{\hat\lambda_B}{\hat\psi_B n}-\lambda_B\right) .\label{eq:proof-pcr-9}
\end{align}
Recall the definitions that $\lambda_B = \mathbb E_{B}[y_h\phi_h]$, $\phi_h = (I-\hat U_{B^o}\hat U_{B^o}^\top)\phi$ and $y_h=y-\phi^\top\hat\theta_{B^o}^\tot$. 
Note also that $\theta_B^\tot = \sum_{\ell=1}^h\hat u_{B_{k_1\cdots k_\ell}}\hat u_{B_{k_1\cdots k_\ell}}^\top\theta^* = \hat U_B\hat U_B^\top\theta^*$.
Let $y = \phi^\top\theta^* + \xi$ such that $\mathbb E[\xi|\phi]=0$.
We then have
\begin{align}
\lambda_B &= \mathbb E_B[y_h\phi_h] = \mathbb E_B\left[(y-\phi^\top\hat\theta_{B^o}^\tot)(I-\hat U_{B^o}\hat U_{B^o}^\top)\phi\right]\nonumber\\
&= \mathbb E_B\left[\phi^\top(\theta_{B^o}^\tot-\hat\theta_{B^o}^\tot)(I-\hat U_{B^o}\hat U_{B^o}^\top)\phi\right] 
+ \mathbb E_B\left[\phi^\top(I-\hat U_{B^o}\hat U_{B^o}^\top)\theta^*(I-\hat U_{B^o}\hat U_{B^o}^\top)\phi\right]\nonumber\\
&= -\mathbb E_B\left[\eta_{B^o}(\phi)\phi_h\right] + \mathbb E_{B}[\phi_h^\top\theta^*\phi_h]=  -\mathbb E_B\left[\eta_{B^o}(\phi)\phi_h\right]  + \Lambda_B\theta^*.\label{eq:proof-pcr-10}
\end{align}
Comparing Eqs.~(\ref{eq:proof-pcr-9},\ref{eq:proof-pcr-10}), we obtain
\begin{align}
&\hat\theta_B-\theta_B^* = -\hat u_B\hat s_B^{-1}\hat u_B^\top\mathbb E_B\left[\eta_{B^o}(\phi)\phi_h\right] + \hat u_B\hat s_B^{-1}\hat u_B^\top\Lambda_B\theta^*  - \hat u_B\hat u_B^\top\theta^* + \hat u_B\hat s_B^{-1}\hat u_B^\top\left(\frac{\hat\lambda_B}{\hat\psi_B n}-\lambda_B\right)\nonumber\\
&= -\hat u_B\hat s_B^{-1}\hat u_B^\top\mathbb E_B\left[\eta_{B^o}(\phi)\phi_h\right] + \hat u_B\hat s_B^{-1}\hat u_B^\top(\Lambda_B-\tilde\Lambda_B)\theta^*+ \hat u_B\hat s_B^{-1}\hat u_B^\top\left(\frac{\hat\lambda_B}{\hat\psi_B n}-\lambda_B\right),\label{eq:proof-pcr-11}
\end{align}
where the last inequality holds because $\hat u_B\hat s_B^{-1}\hat u_B^\top\tilde\Lambda_B = \hat u_B\hat u_B^\top$.
Define
$$
\omega_B := \hat u_B\hat s_B^{-1}\hat u_B^\top(\Lambda_B-\tilde\Lambda_B)\theta^*+ \hat u_B\hat s_B^{-1}\hat u_B^\top\left(\frac{\hat\lambda_B}{\hat\psi_B n}-\lambda_B\right).
$$
Consequently, 
\begin{align}
\|\omega_B\|_2 &\leq \hat s_B^{-1}\left(\|\tilde\Lambda_B-\Lambda_B\|_\op + \left\|\frac{\hat\lambda_B}{\hat\psi_B n}-\lambda_B\right\|_2\right)\nonumber\\
&\leq 2d\gamma^{2k_h+2}\left(\frac{36(d+1)^3\gamma^{-2k_h}  \ln(48d/\beta\delta)}{\alpha\psi_B \sqrt{n}} + \frac{48d^2 \gamma^{-k_h}  \ln(d/\beta\delta)}{\alpha\psi_B \sqrt{n}}\right)\label{eq:proof-pcr-12}\\
&\leq \frac{120(d+1)^4 \gamma^{k_h+2}  \ln(48d/\beta\delta)}{\alpha\psi_B \sqrt{n}}.
\end{align}
Here, Eq.~(\ref{eq:proof-pcr-12}) holds by incorporating Eqs.~(\ref{eq:proof-pcr-2},\ref{eq:proof-pcr-4half}) and Lemma \ref{lem:lplr-update}.
This proves the upper bound on $\|\omega_B\|_2$ in Lemma \ref{lem:lplr-pcr}.

\subsection{Proof of Lemma \ref{lem:lplr-ci}}

We first show that Algorithm \ref{alg:lplr-ci} satisfies $\alpha$ local differential privacy. This is trivially proved by noting that $|\varepsilon_i|\leq \gamma^{-k_h}\hat s_B^{-1/2}$ almost surely
because $\hat\eta(\cdot)\in[0,1]$ and $\|\hat u_B^\top\phi_i\|_2\leq \|\phi_{ih}\|_2\leq \gamma^{-k_h}$. The proof is then essentially the same as the proof of Proposition \ref{prop:ldp}.

To prove the rest of the lemma we will state and prove some support lemmas, which also motivate the construction of $\hat\varepsilon_B$ as well as the confidence intervals
in Algorithm \ref{alg:lplr-ci}. Throughout this section we will work jointly with the success event that all inequalities in Lemmas \ref{lem:lplr-update} and \ref{lem:lplr-pcr} hold for all bins on level $h$,
and that the conclusion in Lemma \ref{lem:lplr-ci} holds for all previous layers $1,2,\cdots,h-1$. All high-probability events and expectations are also conditioned on previous levels
and the first batch of $n$ samples on level $h$, so that $\{\hat u_B\}$ for bin $B$ on level $h$ are conditioned upon and do not interfere with randomness in data received by Algorithm \ref{alg:lplr-ci}.
For every bin $B=B_{k_1\cdots k_h}$ on level $h$, define
\begin{align}
\varepsilon_B &:= \mathbb E_B\left[\hat\eta_{B^o}(\phi) \big|\hat s_B^{-1/2}\hat u_B^\top\phi\big|\right],
\end{align}
where $\mathbb E_B[\cdot] = \mathbb E_{\phi\sim\mD}[\cdot|\phi\in B]$. We then have the following lemma:
\begin{lemma}
Suppose $n \ge 2d\ln(24d/\beta\delta)$. Then with probability $1-\delta$ the following holds uniformly for all bin $B=B_{k_1\cdots k_h}$:
$$
\left|\frac{\hat\varepsilon_B}{\hat\psi_B n} - \varepsilon_B\right|\leq \frac{47 d^{3/2}\gamma\ln(24d/\beta\delta)}{\alpha\psi_B \sqrt{n}}, \;\;\;\;\;\;\; \text{if } \psi_B \ge \frac{8.5d^{3/2}\gamma\ln(12d/\beta\delta)}{\alpha\sqrt{n}}.
$$
\label{lem:epsi-concentration}
\end{lemma}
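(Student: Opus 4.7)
The plan is to follow the same decomposition strategy used for $\hat\lambda_B$ in the proof of Lemma~\ref{lem:lplr-update}. First I would write
$$
\left|\frac{\hat\varepsilon_B}{\hat\psi_B n} - \varepsilon_B\right| \le \left|\frac{\hat\varepsilon_B}{n}\right|\cdot \left|\frac{1}{\hat\psi_B} - \frac{1}{\psi_B}\right| + \left|\frac{\hat\varepsilon_B}{\psi_B n} - \varepsilon_B\right|,
$$
so the analysis reduces to separately controlling $|\hat\varepsilon_B/n|$ in absolute value and the bias of $\hat\varepsilon_B/n$ as an estimator of $\psi_B\varepsilon_B$.

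The critical observation that makes everything work is a uniform sensitivity bound. Although $\varepsilon_i = \hat\eta_{B^o}(\phi_i)|\hat u_B^\top\phi_i|/\sqrt{\hat s_B}$ contains the potentially large factor $\hat s_B^{-1/2}$, Lemma~\ref{lem:lplr-pcr} guarantees $\hat s_B^{-1}\le 2d\gamma^{2k_h+2}$, hence $\hat s_B^{-1/2}\le \sqrt{2d}\,\gamma^{k_h+1}$. Combined with the identity $\hat u_B^\top\phi_i = \hat u_B^\top\phi_{ih}$ (using orthogonality of $\hat u_B$ to $\hat U_{B^o}$) and $\|\phi_{ih}\|_2\le \gamma^{-k_h}$ for $\phi_i\in B$, this yields the uniform bound $|\vct 1\{\phi_i\in B\}\varepsilon_i|\le \sqrt{2d}\,\gamma$, independent of $k_h$. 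By the same calculation, each Laplace noise $\hat s_B^{-1/2}\gamma^{-k_h}\lap_1(0,\alpha^{-1})$ has scale at most $\sqrt{2d}\,\gamma/\alpha$. The cancellation of $\gamma^{\pm k_h}$ in both signal and noise is precisely what the level-adaptive noise calibration in Algorithm~\ref{alg:lplr-ci} is designed to produce.

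Given this sensitivity control, the two terms can be bounded via standard concentration. For $|\hat\varepsilon_B/n|$, Hoeffding on $\sum_i\vct 1\{\phi_i\in B\}\varepsilon_i$ produces a data contribution bounded by $2\sqrt{2d}\,\gamma\psi_B$ (using Chernoff on $\sum_i\vct 1\{\phi_i\in B\}$), while Bernstein's inequality for sub-exponential Laplace sums gives a noise contribution of order $\sqrt{d}\,\gamma\ln(1/\beta\delta)/(\alpha\sqrt{n})$. Under the stated lower bound $\psi_B\ge 8.5 d^{3/2}\gamma\ln(12d/\beta\delta)/(\alpha\sqrt{n})$ the data part dominates, yielding $|\hat\varepsilon_B/n|\le 3\sqrt{2d}\,\gamma\psi_B$. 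Combining with Lemma~\ref{lem:lplr-update}'s concentration $|\hat\psi_B-\psi_B|\le 6.2d\ln(48d/\beta\delta)/(\alpha\sqrt{n})$ and $\hat\psi_B\ge 0.65\psi_B$ gives $|1/\hat\psi_B-1/\psi_B|= O(d\ln(1/\beta\delta)/(\alpha\psi_B^2\sqrt{n}))$, so the first term contributes $O(d^{3/2}\gamma\ln(1/\beta\delta)/(\alpha\psi_B\sqrt{n}))$. For the second term, I would further decompose $\hat\varepsilon_B/n-\psi_B\varepsilon_B$ into a pure-noise piece (Bernstein) and an empirical-process piece $n^{-1}\sum_i\vct 1\{\phi_i\in B\}\varepsilon_i-\psi_B\varepsilon_B$ (Hoeffding on i.i.d.~variables bounded by $\sqrt{2d}\gamma$); both yield $O(\sqrt{d}\,\gamma\ln(1/\beta\delta)/(\alpha\sqrt{n}))$, which after dividing by $\psi_B$ matches the claimed bound. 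A final union bound over the at most $M^h\le 2\beta^{-d}$ bins on level $h$ is absorbed into the $\ln(24d/\beta\delta)$ factor.

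The main obstacle is the cancellation between $\hat s_B^{-1/2}$ and $\gamma^{-k_h}$: without Lemma~\ref{lem:lplr-pcr}'s lower bound on $\hat s_B$ the sensitivity of $\varepsilon_i$ would blow up on small bins (i.e.~large $k_h$), and without the $\gamma^{-k_h}$ scaling in the Laplace noise the privacy-induced error would dominate at small $\psi_B$. Once that cancellation is exploited, the rest of the argument reduces to the standard Bernstein/Hoeffding bookkeeping already carried out for $\hat\psi_B$ and $\hat\lambda_B$ in Lemma~\ref{lem:lplr-update}.
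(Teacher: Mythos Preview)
Your proposal is correct and mirrors the paper's own proof almost exactly: the same two-term decomposition, the same key cancellation $\hat s_B^{-1/2}\gamma^{-k_h}\le\sqrt{2d}\,\gamma$ via Lemma~\ref{lem:lplr-pcr}, and the same Hoeffding/Bernstein bookkeeping for the data and Laplace-noise parts. The only cosmetic difference is that the paper re-derives $|\hat\psi_B-\psi_B|$ with a per-bin failure probability $\delta_0$ (giving the constant $6.2$ without the extra $d$ from Lemma~\ref{lem:lplr-update}'s union bound), whereas you quote Lemma~\ref{lem:lplr-update} directly; either route fits comfortably under the stated $d^{3/2}$ bound.
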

\begin{proof}{Proof of Lemma \ref{lem:epsi-concentration}.}
For simplicity we fix $\delta_0\in(0,1]$ and a particular bin $B=B_{k_1\cdots k_h}$. By union bound, lemma \ref{lem:epsi-concentration} holds uniformly over all bins $B$ except for a failure probability of $\delta=M^h\delta_0\leq (2\beta)^{-d}\delta_0$. 

Jointly with the success event in Lemma \ref{lem:lplr-update}, it holds that 
\begin{align}
\big|\hat\psi_B-\psi_B\big| \leq \frac{6.2\ln(12/\delta_0)}{\alpha\sqrt{n}}.\label{eq:proof-epsic-1}
\end{align}
Decompose the estimation error of $\varepsilon_B$ as follows:
 \begin{align}
 \left\|\frac{\hat\varepsilon_B}{\hat\psi_B n} - \varepsilon_B\right\|_2 \leq     \left\|\frac{\hat\varepsilon_B}{\hat\psi_B n} -\frac{\hat\varepsilon_B}{\psi_B n}\right\|_2+\left\|\frac{\hat\varepsilon_B}{\psi_B n} - \varepsilon_B\right\|_2.
 \label{eq:proof-epsic-2}
 \end{align}
    
  We first upper bound the first term on the right-hand side of Eq.~(\ref{eq:proof-epsic-2}). 
  To do this, we shall first upper bound $|\hat\varepsilon_B|$ with high probability. Note that
   $\hat{\varepsilon}_B=\sum_{i=1}^n \vct 1\{\phi_i \in B\}\varepsilon_i + X_i$, where $X_i \sim  \hat s_{B}^{-1/2}\gamma^{-k_h} \lap_1(0,1/\alpha)$.
  By Hoeffding's inequality and the observation that $|\varepsilon_i|=|\hat\eta_{B^o}(\phi)|\hat u_B^\top\phi_i|/\sqrt{\hat s_B}|\le \hat s_{B}^{-1/2}\gamma^{-k_h}$,
  we have
    \begin{align}
       \left|\sum_{i=1}^n \vct 1 \{\phi_i \in B\}\varepsilon_i \right|\le\hat s_{B}^{-1/2}\gamma^{-k_h} \sum_{i=1}^n \vct 1 \{\phi_i \in B\} \le 2 \hat s_{B}^{-1/2}\gamma^{-k_h}\psi_B n
       \label{eq:proof-epsic-3}
    \end{align}
with at least probability $1-e^{-2n}$. With the condition that $n \ge 2 \ln({4}/{\delta_0})$, the probability that the above inequality holds is at least $1-{\delta_0}/{4}$.
For the $|\sum_{i=1}^n X_i |$ term, note that it is a centered sub-exponential random variable with $\nu=2 \sqrt{n}\alpha^{-1}$ and $b=2\alpha^{-1}$. Using  Bernstein inequality, we have
\begin{align}
     \Pr\left[ \left|\sum_{i=1}^n X_i\right| \ge  6 \alpha^{-1}{\sqrt{n}}\ln({6}/{\delta_0})|\right] \le \frac{\delta_0}{4},
     \label{eq:proof-epsic-4}
\end{align}
which implies with probability $1-\delta_0/4$ that
\begin{align}
   \left|\hat s_{B}^{-1/2}\gamma^{-k_h}\sum_{i=1}^n X_i \right|
    \le 6\hat s_{B}^{-1/2}\gamma^{-k_h}\alpha^{-1}{\sqrt{n}}\ln({6}/{\delta_0}).
    \label{eq:proof-epsic-5}
\end{align}
Combining Eqs.~(\ref{eq:proof-epsic-3},\ref{eq:proof-epsic-5}) and using the union bound, we have with probability $1-\delta_0/2$ that
\begin{align}
\big|\hat{\varepsilon}_B\big| \le 2\hat s_{B}^{-1/2}\gamma^{-k_h}\psi_B n +6\hat s_{B}^{-1/2}\gamma^{-k_h}\alpha^{-1}{\sqrt{n}}\ln({6}/{\delta_0}).
\label{eq:proof-epsic-6}
\end{align}
With the condition that $\psi_B \ge 6\hat s_{B}^{-1/2}\gamma^{-k_h}\alpha^{-1}\ln({6}/{\delta_0})/\sqrt{n}$, Eq.~(\ref{eq:proof-epsic-6}) is further reduced to 
\begin{align}
\big|\hat{\varepsilon}_B\big| \le 3\hat s_{B}^{-1/2}\gamma^{-k_h}\psi_B n.
\label{eq:proof-epsic-7}
\end{align} 
Note additionally that $\hat s_B\geq \gamma^{-2k_h-2}/(2d)$ thanks to Eq.~(\ref{eq:proof-pcr-4half}) in the proof of Lemma \ref{lem:lplr-pcr}.
Consequently, the condition $\psi_B \ge 6\hat s_{B}^{-1/2}\gamma^{-k_h}\alpha^{-1}\ln({6}/{\delta_0})/\sqrt{n}$ is implied by 
$$
\psi_B\geq \frac{8.5\gamma \sqrt{d}\ln(6/\delta_0)}{\alpha\sqrt{n}}.
$$

With the condition on $\psi_B$ and the upper bound on $|\hat\psi_B-\psi_B|$ from Lemma \ref{lem:lplr-update}, 
and noting that $\gamma\geq 2$, it holds that $\hat\psi_B\psi_B\geq 0.74\psi_B^2$. Subsequently,
\begin{align}
    \left|\frac{1}{\hat{\psi}_B}-\frac{1}{\psi_B}\right| \le \frac{6.2\ln(12/\delta_0)}{0.74\psi_B^2\alpha\sqrt{n}} \leq \frac{8.4\ln(12/\delta_0)}{\psi_B^2\alpha\sqrt{n}}.
    \label{eq:proof-epsic-8}
\end{align}
Combining Eqs.~(\ref{eq:proof-epsic-7},\ref{eq:proof-epsic-8}) and noting that $\hat s_B\geq \gamma^{-2k_h-2}/(2d)$, we obtain
     \begin{align}       \left\|\frac{\hat\varepsilon_B}{\hat\psi_B n} -\frac{\hat\varepsilon_B}{\psi_B n}\right\|_2 &\leq \left|\frac{\hat{\varepsilon}_B}{n}\right| \left|\frac{1}{\hat{\psi}_B}-\frac{1}{\psi_B}\right|
    \le \frac{3}{n}\left(\hat{s}_{B}^{-1/2}\gamma^{-k_h}\psi_B n\right)   \frac{8.4\ln(12/\delta_0)}{\psi_B^2\alpha\sqrt{n}}\nonumber\\
    &\le \frac{26 \hat{s}_{B}^{-1/2}\gamma^{-k_h}  \ln(12/\delta_0)}{\alpha\psi_B \sqrt{n}}
    \leq \frac{37\gamma\sqrt{d}\ln(12/\delta_0)}{\alpha\psi_B\sqrt{n}} \label{eq:proof-epsic-9}
    \end{align}
with probability $1-{\delta_0}/{2}$, provided that 
$\psi_B\geq \frac{8.5\gamma \sqrt{d}\ln(6/\delta_0)}{\alpha\sqrt{n}}$ and $n \ge 2 \ln({4}/{\delta_0})$.

We next upper bound the second term on the right-hand side of Eq.~(\ref{eq:proof-epsic-2}). Note that $\mathbb E (\frac{\hat{\varepsilon}_B}{ n})= \mathbb E \left[ \vct 1(\phi \neq \perp) \varepsilon\right]=\varepsilon_B\psi_B$, and therefore $\hat\varepsilon_B/n$ is an unbiased estimator. Denote $\tilde{\varepsilon}_B=n^{-1}\sum_{i=1}^n \vct 1(\phi_i\in B) \varepsilon_i$.
Eq.~(\ref{eq:proof-epsic-3}) then implies 
\begin{align}
     \Pr \left[ \left|\frac{\hat{\varepsilon}_B}{ n}-\tilde{\varepsilon}_B \right| \ge \frac{6\hat s_{B}^{-1/2}\gamma^{-k_h}\ln({6}/{\delta_0})}{\alpha\sqrt{n}}\right] \le \frac{\delta_0}{4}.
     \label{eq:proof-epsic-10}
\end{align}
For the $|\tilde{\varepsilon}_B- \varepsilon_B \psi_B|$, because it is a sum of centered bounded and independent random variables, Hoeffding's inequality yields  
\begin{align}
     \Pr \left[|(\tilde{\varepsilon}_B- \varepsilon_B \psi_B)|\ge \hat{s}_{B}^{-1/2}\gamma^{-k_h} \sqrt{\frac{\ln(8/\delta_0)}{2n}}\right] \le \frac{\delta_0}{4}.
     \label{eq:proof-epsic-11}
\end{align}
Note that $\hat s_B\geq \gamma^{-2k_h-2}/(2d)$ thanks to Eq.~(\ref{eq:proof-pcr-4half}). Eqs.~(\ref{eq:proof-epsic-10},\ref{eq:proof-epsic-11}) then yield
\begin{align}
&\Pr\left[ \left|\frac{\hat{\varepsilon}_B}{ n}-\tilde{\varepsilon}_B \right|\geq \frac{6\sqrt{2d}\gamma\ln(6/\delta_0)}{\alpha\sqrt{n}}\right] \leq \frac{\delta_0}{4};\label{eq:proof-epsic-10half}\\
&\Pr \left[|(\tilde{\varepsilon}_B- \varepsilon_B \psi_B)| \ge \sqrt{2d}\gamma \sqrt{\frac{\ln(8/\delta_0)}{2n}}\right] \le \frac{\delta_0}{4}.
\label{eq:proof-epsic-11half}
\end{align}
Combining Eqs.~(\ref{eq:proof-epsic-9},\ref{eq:proof-epsic-10half},\ref{eq:proof-epsic-11half}), we have that 
\begin{align}
     \Pr \left[ \left|\frac{\hat{\varepsilon}_B}{ n\hat{\psi}_B}-\varepsilon_B \right|\ge \frac{47 \gamma\sqrt{d} \ln({12}/{\delta_0})}{\alpha\psi_B \sqrt{n}} \right] \le \delta_0.
\end{align}
Lemma \ref{lem:epsi-concentration} is then proved, by plugging in $\delta_0=\delta/(2\beta^d)$. $\square$
\end{proof}

\begin{corollary}
Suppose $n\geq 2d\ln(24d/\beta\delta)$ and $\kappa_2$ is chosen as 
$$
\kappa_2\geq 118\alpha^{-1}\ln(24d/\beta\delta).
$$
The inequality in Lemma \ref{lem:epsi-concentration} jointly with the inequalities in Lemma \ref{lem:lplr-update} that 
$$
\varepsilon_B\leq\bar\varepsilon_B\leq\varepsilon_B + \frac{2.5\kappa_2\gamma d^{3/2}}{\psi_B \sqrt{n}}
$$
for all bins $B=B_{k_1\cdots k_h}$ such that $\hat\psi_B\geq \frac{15d^{3/2}\gamma\ln(48d/\beta\delta)}{\alpha\sqrt{n}}$.
\label{cor:bareps}
\end{corollary}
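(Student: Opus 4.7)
The plan is to derive the corollary as a direct consequence of Lemma \ref{lem:epsi-concentration}, combined with the concentration bound on $\hat\psi_B$ from Lemma \ref{lem:lplr-update} and the sizing of the inflation constant $\kappa_2$. First, I would verify that the standing hypothesis $\hat\psi_B \geq \frac{15 d^{3/2}\gamma\ln(48d/\beta\delta)}{\alpha\sqrt{n}}$, together with the bound $|\hat\psi_B - \psi_B| \leq \frac{6.2 d\ln(48d/\beta\delta)}{\alpha\sqrt{n}}$ from Lemma \ref{lem:lplr-update}, implies (i) the threshold condition $\psi_B \geq \frac{8.5 d^{3/2}\gamma\ln(12d/\beta\delta)}{\alpha\sqrt{n}}$ needed to invoke Lemma \ref{lem:epsi-concentration}, and (ii) the two-sided multiplicative comparison $\tfrac{1}{2}\psi_B \leq \hat\psi_B \leq 2\psi_B$. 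Since $\gamma = T^\beta \geq 2$ and $d\geq 1$, the $15d^{3/2}\gamma$ coefficient in the hypothesis dominates the $6.2d$ fluctuation, so both conclusions follow by elementary arithmetic.

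Second, starting from the definition of $\bar\varepsilon_B$ in Algorithm \ref{alg:lplr-ci}, I would write the decomposition
\begin{equation*}
\bar\varepsilon_B - \varepsilon_B \;=\; \left(\frac{\hat\varepsilon_B}{\hat\psi_B n} - \varepsilon_B\right) + \frac{\kappa_2 \gamma d^{3/2}}{\hat\psi_B\sqrt{n}}
\end{equation*}
and apply Lemma \ref{lem:epsi-concentration} to upper bound the first summand in absolute value by $\frac{47 d^{3/2}\gamma\ln(24d/\beta\delta)}{\alpha\psi_B\sqrt{n}}$. The lower bound $\bar\varepsilon_B \geq \varepsilon_B$ then follows from the choice $\kappa_2 \geq 118\alpha^{-1}\ln(24d/\beta\delta)$: combined with $\hat\psi_B \leq 2\psi_B$, the positive correction $\kappa_2\gamma d^{3/2}/(\hat\psi_B\sqrt{n}) \geq \kappa_2\gamma d^{3/2}/(2\psi_B\sqrt{n})$ strictly exceeds the one-sided concentration error, since $\kappa_2/2 \geq 59\alpha^{-1}\ln(24d/\beta\delta) > 47\alpha^{-1}\ln(24d/\beta\delta)$.

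Third, for the upper bound, I would add the concentration error to the correction term, use $\hat\psi_B \geq \psi_B/2$ to replace $1/\hat\psi_B$ by $2/\psi_B$ in the inflation term, and use $47\alpha^{-1}\ln(24d/\beta\delta) \leq \kappa_2/2.5$ to absorb the concentration error as at most a $0.4$ fraction of $\kappa_2\gamma d^{3/2}/(\psi_B\sqrt{n})$. Summing then gives
\begin{equation*}
\bar\varepsilon_B - \varepsilon_B \;\leq\; (2 + 0.4)\cdot\frac{\kappa_2\gamma d^{3/2}}{\psi_B\sqrt{n}} \;\leq\; \frac{2.5\,\kappa_2\gamma d^{3/2}}{\psi_B\sqrt{n}},
\end{equation*}
which is the claimed upper bound.

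The only real friction in this argument is constant tracking: there is no conceptual obstacle beyond chaining Lemmas \ref{lem:lplr-update} and \ref{lem:epsi-concentration} and verifying that the sizing $\kappa_2 \geq 118\alpha^{-1}\ln(24d/\beta\delta)$ is large enough to make the deterministic inflation simultaneously (a) dominate the one-sided sampling error of $\hat\varepsilon_B/(\hat\psi_B n)$ around $\varepsilon_B$ and (b) absorb the slack introduced by passing from $1/\hat\psi_B$ to $1/\psi_B$ via the multiplicative comparison of the previous step.
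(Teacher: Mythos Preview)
Your proposal is correct and follows essentially the same route as the paper: verify that the lower bound on $\hat\psi_B$ triggers the hypothesis of Lemma \ref{lem:epsi-concentration}, extract a two-sided multiplicative comparison between $\hat\psi_B$ and $\psi_B$ from Lemma \ref{lem:lplr-update}, and then combine the concentration error with the deterministic inflation $\kappa_2\gamma d^{3/2}/(\hat\psi_B\sqrt{n})$. The only cosmetic difference is that you work with the sandwich $\tfrac{1}{2}\psi_B\leq\hat\psi_B\leq 2\psi_B$ while the paper records the looser $0.4\psi_B\leq\hat\psi_B\leq 2.5\psi_B$; your version is slightly more explicit in showing how the two pieces add to at most the stated $2.5$ multiple, but the argument is the same.
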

\begin{proof}{Proof of Corollary \ref{cor:bareps}.}
The upper bound on $|\hat\psi_B-\psi_B|$ from Lemma \ref{lem:lplr-update} implies that, if 
$$
\hat\psi_B\geq \frac{15d^{3/2}\gamma\ln(48d/\beta\delta)}{\alpha\sqrt{n}}
$$
then the condition in Lemma \ref{lem:epsi-concentration}
$$
\psi_B  \ge \frac{8.5d^{3/2}\gamma\ln(12d/\beta\delta)}{\alpha\sqrt{n}}
$$
holds, which leads to all inequalities in Lemma \ref{lem:epsi-concentration} with high probability.
This condition (lower bound on $\hat\psi_B$) also implies via Lemma \ref{lem:lplr-update} that
$$
0.4\psi_B\leq \hat\psi_B \leq 2.5\psi_B
$$
With the condition on $\kappa_2$ and the above sandwiching inequality on $\hat\psi_B$, Lemma \ref{lem:epsi-concentration} implies that 
\begin{align*}
\left|\frac{\hat\varepsilon_B}{\hat\psi_B n}-\varepsilon_B\right| &\leq \frac{\kappa_2\gamma d^{3/2}}{2.5\psi_B\sqrt{n}}  \leq \frac{\kappa_2\gamma d^{3/2}}{\hat\psi_B\sqrt{n}}\leq \frac{2.5\kappa_2\gamma d^{3/2}}{\psi_B\sqrt{n}},
\end{align*}
which proves Corollary \ref{cor:bareps}. $\square$
\end{proof}

\begin{lemma}
For every active bin $B=B_{k_1\cdots k_h}$ and $\phi\in B$, 
$$
\big|\phi^\top(\hat\theta_B-\theta_B^*)\big| \leq \varepsilon_B\big|\hat s_B^{-1/2}\hat u_B^\top\phi\big| + \frac{120(d+1)^4 \gamma^{2}  \ln(48d/\beta\delta)}{\alpha\psi_B \sqrt{n}}.
$$
\label{lem:epsi-ub}
\end{lemma}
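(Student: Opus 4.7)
The plan is to combine the error decomposition from Lemma \ref{lem:lplr-pcr} with the key geometric fact that $\hat u_B$ is orthogonal to the columns of $\hat U_{B^o}$ by construction, and that $\omega_B$ lives in the one-dimensional subspace spanned by $\hat u_B$. Starting from Lemma \ref{lem:lplr-pcr},
\[
\hat\theta_B - \theta_B^* = -\hat u_B\hat s_B^{-1}\hat u_B^\top\mathbb E_B[\eta_{B^o}(\phi')\phi'_h] + \omega_B,
\]
I would left-multiply by $\phi^\top$ for $\phi\in B$ and analyze the two resulting terms separately.

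For the first term, observe that since $\hat u_B\perp \hat U_{B^o}$, we have $\hat u_B^\top(I-\hat U_{B^o}\hat U_{B^o}^\top)=\hat u_B^\top$, and therefore $\hat u_B^\top\phi'_h=\hat u_B^\top\phi'$ for any $\phi'\in B$. Consequently,
\[
\phi^\top\hat u_B\hat s_B^{-1}\hat u_B^\top\mathbb E_B[\eta_{B^o}(\phi')\phi'_h] = (\hat u_B^\top\phi)\,\hat s_B^{-1}\,\mathbb E_B[\eta_{B^o}(\phi')(\hat u_B^\top\phi')].
\]
Using the inductive hypothesis $|\eta_{B^o}(\phi')|\le\hat\eta_{B^o}(\phi')$ assumed in Lemma \ref{lem:lplr-ci}, the magnitude of the inner expectation is at most $\mathbb E_B[\hat\eta_{B^o}(\phi')|\hat u_B^\top\phi'|] = \sqrt{\hat s_B}\,\varepsilon_B$ by the very definition of $\varepsilon_B$. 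This yields exactly the first term $\varepsilon_B\,|\hat s_B^{-1/2}\hat u_B^\top\phi|$ in the claimed bound.

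For the second term, the crucial observation (visible from the explicit formula for $\omega_B$ in the proof of Lemma \ref{lem:lplr-pcr}) is that $\omega_B=\hat u_B(\hat u_B^\top\omega_B)$ lies in $\mathrm{span}(\hat u_B)$, so $|\phi^\top\omega_B|=|\hat u_B^\top\phi|\cdot\|\omega_B\|_2$. Since $\phi\in B$ means $\|\phi_h\|_2\le\gamma^{-k_h}$, and using again $\hat u_B^\top\phi=\hat u_B^\top\phi_h$, we get $|\hat u_B^\top\phi|\le\gamma^{-k_h}$. Combined with the bound $\|\omega_B\|_2\le \frac{120(d+1)^4\gamma^{k_h+2}\ln(48d/\beta\delta)}{\alpha\psi_B\sqrt n}$ from Lemma \ref{lem:lplr-pcr}, the factor $\gamma^{-k_h}$ cancels the $\gamma^{k_h}$ in the numerator, leaving exactly $\frac{120(d+1)^4\gamma^2\ln(48d/\beta\delta)}{\alpha\psi_B\sqrt n}$.

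The main (minor) subtlety, rather than an obstacle, is keeping track of the two uses of the identity $\hat u_B^\top\phi=\hat u_B^\top\phi_h$: once to reduce the expectation to a quantity matching $\varepsilon_B$, and once to extract the $\gamma^{-k_h}$ factor that cancels the $\gamma^{k_h}$ hidden in $\|\omega_B\|_2$. Once these are noticed, the argument is essentially an algebraic rearrangement of previously established inequalities, with no new probabilistic content needed beyond the high-probability events already assumed in Lemmas \ref{lem:lplr-update} and \ref{lem:lplr-pcr}.
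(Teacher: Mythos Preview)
Your proposal is correct and follows essentially the same approach as the paper: start from the decomposition in Lemma \ref{lem:lplr-pcr}, handle the bias term via the orthogonality $\hat u_B^\top\phi_h=\hat u_B^\top\phi$ together with the inductive bound $|\eta_{B^o}|\le\hat\eta_{B^o}$ to recover $\varepsilon_B$, and control $\phi^\top\omega_B$ by extracting a factor $\gamma^{-k_h}$ that cancels the $\gamma^{k_h}$ in $\|\omega_B\|_2$. The only cosmetic difference is that the paper bounds $|\phi_h^\top\omega_B|$ by Cauchy--Schwarz with $\|\phi_h\|_2\le\gamma^{-k_h}$, whereas you use the sharper observation that $\omega_B\in\mathrm{span}(\hat u_B)$ to write $|\phi^\top\omega_B|=|\hat u_B^\top\phi|\,\|\omega_B\|_2$; both yield the same final bound.
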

\begin{proof}{Proof of Lemma \ref{lem:epsi-ub}.}
We start with the error decomposition established in Lemma \ref{lem:lplr-pcr}.
Because both $\hat\theta_B$ and $\theta_B^*$ are with the same direction of $\hat u_B$, it holds that $\phi^\top(\hat\theta_B-\theta_B^*) = \phi_h^\top(\hat\theta_B-\theta_B^*)$
where $\phi_h=(I-\hat U_{B^o}\hat U_{B^o}^\top)\phi$ is the vector calculated by Figure \ref{fig:phih} as input to level $h$.
Subsequently, $|\phi_h^\top\omega_B|\leq \|\phi_h\|_2\|\omega_B\|_2\leq \gamma^{-k_h}\|\omega_B\|_2$ which together with the upper bound on $\|\omega_B\|_2$
from Lemma \ref{lem:lplr-pcr} gives rises to the second term in the stated inequality.

We next analyze the first term as follows: for every $\varphi\in B$,
\begin{align}
\big|\varphi^\top&\hat u_B\hat s_B^{-1}\hat u_B^\top\mathbb E_B[\eta_{B^o}(\phi)\phi_h]\big|
\leq \big|\hat s_B^{-1/2}\hat u_B^\top\varphi\big|\times \big|\mathbb E_B[\eta_{B^o}(\phi)\hat s_B^{-1/2}\hat u_B^\top\phi_h]\big|\nonumber\\
&= \big|\hat s_B^{-1/2}\hat u_B^\top\varphi\big|\times \big|\mathbb E_B[\eta_{B^o}(\phi)\hat s_B^{-1/2}\hat u_B^\top\phi]\big|\label{eq:proof-epsi-1}\\
&\leq\big|\hat s_B^{-1/2}\hat u_B^\top\varphi\big|\times \mathbb E_B\left[\big|\eta_{B^o}(\phi)\big|\cdot \big|\hat s_B^{-1/2}\hat u_B^\top\phi\big|\right]\label{eq:proof-epsi-2}\\
&\leq\big|\hat s_B^{-1/2}\hat u_B^\top\varphi\big|\times \mathbb E_B\left[\hat\eta_{B^o}(\phi)\big|\hat s_B^{-1/2}\hat u_B^\top\phi\big|\right] \label{eq:proof-epsi-3}\\
&\leq \varepsilon_B\big|\hat s_B^{-1/2}\hat u_B^\top\varphi\big|.\nonumber
\end{align}
Here, Eq.~(\ref{eq:proof-epsi-1}) holds because $\phi_h=(I-\hat U_{B^o}\hat U_{B^o}^\top)\phi$ and $\hat u_B^\top\hat U_{B^o}=0$;
Eq.~(\ref{eq:proof-epsi-2}) holds by the convexity of $|\cdot|$;
Eq.~(\ref{eq:proof-epsi-3}) holds because $|\eta_{B^o}(\phi)|\leq\hat\eta_{B^o}\phi$ for every $\phi\in B\subseteq B^o$ as assumed Lemma \ref{lem:lplr-ci}.
This completes the proof of Lemma \ref{lem:epsi-concentration}. $\square$
\end{proof}

\begin{corollary}
Suppose the conditions on $n$ and $\kappa_2$ hold in Corollary \ref{cor:bareps}.
For every bin $B=B_{k_1\cdots k_h}$ whose $\hat\psi_B$ satisfies the condition in Corollary \ref{cor:bareps} and every $\phi\in B$,
$$
\big|\phi^\top(\hat\theta_B-\theta_B^*)\big| \leq \bar\varepsilon_B\big|\hat s_B^{-1/2}\hat u_B^\top\phi\big| + \frac{300(d+1)^4 \gamma^{2}  \ln(48d/\beta\delta)}{\alpha\hat\psi_B \sqrt{n}}.
$$
\label{cor:epsi-ub}
\end{corollary}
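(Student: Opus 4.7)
The plan is to derive Corollary \ref{cor:epsi-ub} as a direct consequence of Lemma \ref{lem:epsi-ub} combined with Corollary \ref{cor:bareps}. Lemma \ref{lem:epsi-ub} already provides the pointwise decomposition
\[
\big|\phi^\top(\hat\theta_B-\theta_B^*)\big| \leq \varepsilon_B\big|\hat s_B^{-1/2}\hat u_B^\top\phi\big| + \frac{120(d+1)^4 \gamma^{2} \ln(48d/\beta\delta)}{\alpha\psi_B \sqrt{n}},
\]
so the task reduces to replacing the population quantities $\varepsilon_B$ and $\psi_B$ on the right-hand side by their empirical counterparts $\bar\varepsilon_B$ and $\hat\psi_B$, absorbing the resulting constant factors into the final numerical constant $300$.

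For the first term, Corollary \ref{cor:bareps} guarantees that $\varepsilon_B \le \bar\varepsilon_B$ under the stated conditions on $n$, $\kappa_2$, and $\hat\psi_B$, so the replacement $\varepsilon_B \mapsto \bar\varepsilon_B$ is immediate with no constant loss. For the second term, I would invoke the concentration bound $|\hat\psi_B-\psi_B|\leq 6.2d\ln(48d/\beta\delta)/(\alpha\sqrt{n})$ from Lemma \ref{lem:lplr-update}, which together with the hypothesized lower bound $\hat\psi_B\geq 15d^{3/2}\gamma\ln(48d/\beta\delta)/(\alpha\sqrt n)$ from Corollary \ref{cor:bareps} implies the sandwich $0.4\psi_B \leq \hat\psi_B \leq 2.5\psi_B$ (the same sandwich already extracted in the proof of Corollary \ref{cor:bareps}). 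Inverting gives $1/\psi_B \leq 2.5/\hat\psi_B$, and substituting into the second term yields a leading constant $120\times 2.5 = 300$, matching exactly the form of the stated bound.

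Putting these two substitutions together produces the claimed inequality. There is essentially no obstacle: the only thing to check carefully is that the conditions under which Corollary \ref{cor:bareps} holds (which are assumed in the statement of Corollary \ref{cor:epsi-ub}) are precisely sufficient to invoke both $\varepsilon_B\leq\bar\varepsilon_B$ and the two-sided comparison between $\psi_B$ and $\hat\psi_B$, and that all events we condition on (success of Lemma \ref{lem:lplr-update} and Lemma \ref{lem:lplr-pcr} on the current layer, which underlies Lemma \ref{lem:epsi-ub}) are already operative under the joint success event in the working conditional throughout this subsection.
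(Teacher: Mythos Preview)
Your proposal is correct and follows essentially the same approach as the paper: the paper's own proof simply says the corollary ``immediately holds by using $\varepsilon_B\leq\bar\varepsilon_B$ and $0.4\psi_B\leq\hat\psi_B\leq 2.5\psi_B$ proved in the proof of Corollary~\ref{cor:bareps},'' which is exactly the two substitutions you describe, with $120\times 2.5=300$ giving the final constant.
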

\begin{proof}{Proof of Corollary \ref{cor:epsi-ub}.}
The corollary immediately holds by using $\varepsilon_B\leq\bar\varepsilon_B$ and $0.4\psi_B\leq\hat\psi_B\leq 2.5\psi_B$ proved in the proof of Corollary \ref{cor:bareps}. $\square$
\end{proof}

Note the value and scaling of algorithmic parameter $\kappa_3$ in the definition of $\hat\chi_B(\cdot)$.
Lemma \ref{lem:epsi-ub} and Corollary \ref{cor:bareps} immediately establish that, for every $\phi\in B$,
\begin{align*}
\big|\chi_B(\phi)\big| &= \big|\phi^\top(\hat\theta_B-\theta_B^*)\big| \leq \hat\chi_B(\phi);\\
\big|\eta_B(\phi)\big| &= \big|\phi^\top(\hat\theta_B^\tot-\theta_B^\tot)\big| \leq \big|\phi^\top(\hat\theta_{B^o}^\tot-\theta_{B^o}^\tot)\big| + \big|\phi^\top(\hat\theta_B-\theta_B^*)\big|\\
&\leq \hat\eta_{B^o}(\phi) + \hat\chi_B(\phi) = \hat\eta_B(\phi),
\end{align*}
which prove the validity of $\hat\eta_B(\cdot)$ and $\hat\chi_B(\cdot)$.
Furthermore, for $\phi\in B$ where $B$ is inactive, it holds that
$$
\big|\phi^\top(\hat\theta_B^\top-\theta^*)\big| \leq \|\phi\|_2 \leq \gamma^{-k_h} \leq \hat\eta_B(\phi).
$$

We next focus on the recursive inequality in Lemma \ref{lem:lplr-ci}. Again the following analysis holds for active bins in level $h$.
By Cauchy-Schwarz inequality, we have that
\begin{align}
\varepsilon_B &\leq \sqrt{\mathbb E_B[\hat\eta_{B^o}(\phi)^2]}\sqrt{\hat s_B^{-1}\mathbb E_B[|\hat u_B^\top\phi|^2]}.
\label{eq:proof-epsi-4}
\end{align}
Recall that $s_B$ is the largest eigenvalue of $\Lambda_B=\mathbb E_B[\phi_h\phi_h^\top]$. Subsequently,
\begin{align}
\mathbb E_B[|\hat u_B^\top\phi|^2] = \mathbb E_B[|\hat u_B^\top\phi_h|^2] =\hat u_B^\top\Lambda_B\hat u_B\leq s_B.
\label{eq:proof-epsi-4quart}
\end{align}
Plugging the above inequality and Eqs.~(\ref{eq:proof-pcr-3},\ref{eq:proof-pcr-4},\ref{eq:proof-pcr-4half}) in the proof of Lemma \ref{lem:lplr-pcr} into Eq.~(\ref{eq:proof-epsi-4}), we obtain
\begin{align}
\varepsilon_B &\leq \sqrt{s_B/\hat s_B}\times \sqrt{\mathbb E_B[\hat\eta_{B^o}(\phi)^2]}\leq \sqrt{2}\sqrt{\mathbb E_B[\hat\eta_{B^o}(\phi)^2]}.
\label{eq:proof-epsi-4half}
\end{align}
We then have for every $\varphi\in B$ that 
\begin{align}
&\hat\eta_B(\varphi)^2 = (\hat\eta_{B^o}(\varphi)+\hat\chi_B(\varphi))^2 = \left(\hat\eta_{B^o}(\varphi)+\frac{\kappa_3\gamma^2(d+1)^4}{\hat\psi_B\sqrt{n}} + \frac{\bar\varepsilon_B|\hat u_B^\top\varphi|}{\sqrt{\hat s_B}}\right)^2\nonumber\\
&\leq \left[\hat\eta_{B^o}(\varphi)+\frac{\kappa_3\gamma^2(d+1)^4}{\hat\psi_B\sqrt{n}}  + \frac{2.5\kappa_2\gamma d^{3/2}}{\psi_B\sqrt{n}}\frac{|\hat u_B^\top\varphi|}{\sqrt{\hat s_B}} + \varepsilon_B\frac{|\hat u_B^\top\varphi|}{\sqrt{\hat s_B}}\right]^2\label{eq:proof-epsi-5}\\
&\leq 1.1\left[\hat\eta_{B^o}(\varphi)+\varepsilon_B\frac{|\hat u_B^\top\varphi|}{\sqrt{\hat s_B}}\right]^2 + 11\left[\frac{\kappa_3\gamma^2(d+1)^4}{\hat\psi_B\sqrt{n}}  + \frac{2.5\kappa_2\gamma d^{3/2}}{\psi_B\sqrt{n}}\frac{|\hat u_B^\top\varphi|}{\sqrt{\hat s_B}}\right]^2\nonumber\\
&\leq 1.1\hat\eta_{B^o}(\varphi)^2+2.2\varepsilon_B\frac{\hat\eta_{B^o}(\varphi)|\hat u_B^\top\varphi|}{\sqrt{\hat s_B}} + 1.1\frac{\varepsilon_B^2|\hat u_B^\top\varphi|^2}{\hat s_B}
+\frac{22\kappa_3^2\gamma^4(d+1)^8}{\hat\psi_B^2 n} +\frac{138\kappa_2^2\gamma^2d^3|\hat u_B^\top\varphi|^2}{\psi_B^2\hat s_B n}\nonumber\\
&\leq  1.1\hat\eta_{B^o}(\varphi)^2+2.2\varepsilon_B\frac{\hat\eta_{B^o}(\varphi)|\hat u_B^\top\varphi|}{\sqrt{\hat s_B}} + 1.1\frac{\varepsilon_B^2|\hat u_B^\top\varphi|^2}{\hat s_B}
+138\left(\frac{\kappa_3^2\gamma^4(d+1)^8}{\psi_B^2 n} +\frac{\kappa_2^2\gamma^2d^4}{\psi_B^2n}\right)\label{eq:proof-epsi-6}.
\end{align}
Here, Eq.~(\ref{eq:proof-epsi-5}) holds by invoking the upper bound on $\bar\varepsilon_B$ in Corollary \ref{cor:bareps};
Eq.~(\ref{eq:proof-epsi-6}) holds because $\hat\psi_B\geq 0.4\psi_B$ proved in the proof of Corollary \ref{cor:bareps}, and that $|\hat u_B^\top\varphi|\leq \sqrt{d}\hat s_B$
for every $\varphi\in B$.
Taking expectations with respect to $\mathbb E_B[\cdot]=\mathbb E_{\varphi\sim\mD}[\cdot|\varphi\in B]$ on both sides of Eq.~(\ref{eq:proof-epsi-6}), we obtain
\begin{align}
 \mathbb E_B[\hat\eta_B(\varphi)^2] &\leq 1.1\mathbb E_B[\hat\eta_{B^o}(\varphi)^2] + 2.2\varepsilon_B\frac{\mathbb E_B[\hat\eta_{B^o}(\varphi)|\hat u_B^\top\varphi|]}{\sqrt{\hat s_B}} + 1.1\varepsilon_B^2\frac{\mathbb E_B[|\hat u_B^\top\varphi|^2]}{\hat s_B}\nonumber\\
&\;\;\;\; + 138\left(\frac{\kappa_3^2\gamma^4(d+1)^8}{\psi_B^2 n} +\frac{\kappa_2^2\gamma^2d^4}{\psi_B^2n}\right)\nonumber\\
&\leq 1.1\mathbb E_B[\hat\eta_{B^o}(\varphi)^2] + 2.2\varepsilon_B \frac{\sqrt{\mathbb E_B[\hat\eta_{B^o}(\varphi)^2]}\sqrt{\mathbb E_B[|\hat u_B^\top\varphi|^2]}}{\sqrt{\hat s_B}} + 1.1\varepsilon_B^2\frac{\mathbb E_B[|\hat u_B^\top\varphi|^2]}{\hat s_B}\nonumber\\
&\;\;\;\; + 138\left(\frac{\kappa_3^2\gamma^4(d+1)^8}{\psi_B^2 n} +\frac{\kappa_2^2\gamma^2d^4}{\psi_B^2n}\right)\label{eq:proof-epsi-7}\\
&\leq 1.1\mathbb E_B[\hat\eta_{B^o}(\varphi)^2] + 2.2\mathbb E_B[\hat\eta_{B^o}(\varphi)^2]\frac{\sqrt{\mathbb E_B[|\hat u_B^\top\varphi|^2]}}{\sqrt{\hat s_B}} + 1.1\mathbb E_B[\hat\eta_{B^o}(\varphi)^2]\frac{\mathbb E_B[|\hat u_B^\top\varphi|^2]}{\hat s_B}\nonumber\\
&\;\;\;\;+ 138\left(\frac{\kappa_3^2\gamma^4(d+1)^8}{\psi_B^2 n} +\frac{\kappa_2^2\gamma^2d^4}{\psi_B^2n}\right)\label{eq:proof-epsi-8}\\
&\leq 6.42\mathbb E_B[\hat\eta_{B^o}(\varphi)^2]  + 138\left(\frac{\kappa_3^2\gamma^4(d+1)^8}{\psi_B^2 n} +\frac{\kappa_2^2\gamma^2d^4}{\psi_B^2n}\right).\label{eq:proof-epsi-9}
\end{align}
Here, Eq.~(\ref{eq:proof-epsi-7}) holds by Cauchy-Schwarz inequality; 
Eq.~(\ref{eq:proof-epsi-8}) holds by invoking Eq.~(\ref{eq:proof-epsi-4half});
Eq.~(\ref{eq:proof-epsi-9}) holds by invoking Eq.~(\ref{eq:proof-epsi-4quart}) and noting that $s_B/\hat s_B\leq 2$.
Note also that, for every inactive bin $B$, we have
$\hat{\eta}_B(\phi)=\hat{\eta}_{B^o}(\phi)+\gamma^{-k_h}$, so $\sqrt{\mathbb E_B [\hat{\eta}_B(\phi)^2]} \le 2\sqrt{\mathbb E_B [\hat{\eta}_{B^o}(\phi)^2]}+2 \gamma^{-k_h}$.

We now consider an arbitrary active $B=B_{k_1\cdots k_{h-1}}$ on level $h-1$ and let $B^{+0},\cdots,B^{+(M-1)}$ be the child bins of $B$ on level $h$, where $M=\lceil1/(2\beta)\rceil$.
For $\psi(B^{+k}|B) := \psi_{B^{+k}}/\psi_B$ such that $\sum_{k=0}^{M-1}\psi(B^{+k}|B)=1$.
Then
\begin{align}
&\qquad\sum_{k=0}^{M-1}\psi_{B^{+k}}\sqrt{\mathbb E_{B^{+k}}[\hat\eta_{B^{+k}}(\phi)^2]}\\
&\leq \sum_{B^{+k} \text{active}}\psi_{B^{+k}}\left(2.54\sqrt{\mathbb E_{B^{+k}}[\hat\eta_{B}(\phi)^2]} + \frac{12\kappa_3\gamma^2(d+1)^4}{\psi_B\sqrt{n}} + \frac{12\kappa_2\gamma d^2}{\psi_B\sqrt{n}}\right)\nonumber \\
& \qquad \qquad +\sum_{B^{+k} \text{inactive}} \psi_{B^{+k}}\left(\sqrt{\mathbb E [\hat{\eta}_{B^o}(\phi)^2]}+2 \gamma^{-k_h}\right)\\
&\leq \frac{12M(\kappa_3\gamma^2(d+1)^4+\kappa_2\gamma d^2+\max\{\kappa_1\gamma^2(d+1)^3,\kappa_1'\gamma d^{3/2}\})}{\sqrt{n}}\nonumber\\
&\qquad \qquad + 2.54\psi_B\sum_{k=0}^{M-1}\psi(B^{+k}|B)\sqrt{\mathbb E_{B^{+k}}[\hat\eta_{B}(\phi)^2]}\label{eq:proof-epsi-4half}\\
&\leq \frac{12M(\kappa_3\gamma^2(d+1)^4+\kappa_2\gamma d^2+\max\{\kappa_1\gamma^2(d+1)^3,\kappa_1'\gamma d^{3/2}\})}{\sqrt{n}}\nonumber\\
&\qquad\qquad +2.54\psi_B \sqrt{\sum_{k=0}^{M-1}\psi(B^{+k}|B)}\sqrt{\sum_{k=0}^{M-1}\psi(B^{+k}|B)\mathbb E_{B^{+k}}[\hat\eta_B(\phi)^2]}\label{eq:proof-epsi-5}\\
&\leq\frac{12M(\kappa_3\gamma^2(d+1)^4+\kappa_2\gamma d^2+\max\{\kappa_1\gamma^2(d+1)^3,\kappa_1'\gamma d^{3/2}\})}{\sqrt{n}}\nonumber\\
&\qquad\qquad+ 2.54\psi_B\sqrt{\mathbb E_B[\hat\eta_B(\phi)^2]}.\label{eq:proof-epsi-6}
\end{align}
Here, Eq.~(\ref{eq:proof-epsi-4half}) holds by noting that a bin $B$ is inactive only if $\hat\psi_B\leq\max\{\kappa_1\gamma^2(d+1)^3,\kappa_1'\gamma d^{3/2}\}/\sqrt{n}$
 which implies $\psi_B\leq 2\max\{\kappa_1\gamma^2(d+1)^3,\kappa_1'\gamma d^{3/2}\}/\sqrt{n}$ thanks to Lemma \ref{lem:lplr-update}, or that  $k_h=M$;
Eq.~(\ref{eq:proof-epsi-5}) holds by the Cauchy-Schwarz inequality.
Summing both sides of Eq.~(\ref{eq:proof-epsi-6}) over $B\in\mB_{h-1}$ and noting that there are $M^{h-1}$ bins on level $h-1$, we complete the proof of Lemma \ref{lem:lplr-ci}.

\subsection{Proof of Lemma \ref{thm:lplr}}

For the first statement, note that if a bin $B$ is inactive then $|\phi^\top(\hat\theta_B^{\tot})-\theta^*)|\leq\hat\eta_B(\phi)$ automatically holds from Lemma \ref{lem:lplr-ci}.
If bin $B$ remains active all the way until that last epoch, then $\theta_B^*=\theta^*$ because the path from the first layer to the last layer consists of $d$ orthonormal vectors $\hat u$
and therefore the cumulative projection is the identity matrix. This proves the first property of Lemma \ref{thm:lplr}.


Next, we focus on proving the second claim.
When all the high probability events hold with probability $1-2d\delta$, by lemma \ref{lem:lplr-ci} we know for every layer $h$ and the collection of all bins $B\in\mB_h$ that
\begin{align}
    &\sum_{B\in\mB_h}\psi_B\sqrt{\mathbb E_\mD[\hat\eta_B(\phi)^2|\phi\in B]} \leq 2.54 \left[\sum_{B'\in\mB_{h-1}}\psi_{B'}\sqrt{\mathbb E_\mD[\hat\eta_{B'}(\phi)^2|\phi\in B']}\right]\nonumber\\
    &\;\;\;\;+ \frac{12M^h(\kappa_3\gamma^2(d+1)^4+\kappa_2\gamma d^2+\max\{\kappa_1\gamma^2(d+1)^3,\kappa_1'\gamma d^{3/2}\})}{\sqrt{n}}.
\end{align}
Iterating over all $h=d,d-1,\cdots,1$ we obtain
\begin{align}
\sum_{B\in\mB_d}\psi_B\sqrt{\mathbb E_\mD[\hat\eta_B(\phi)^2|\phi\in B]}
\leq (2.54M)^d\times \frac{12(\kappa_3\gamma^2(d+1)^4+\kappa_2\gamma d^2+\max\{\kappa_1\gamma^2(d+1)^3,\kappa_1'\gamma d^{3/2}\})}{\sqrt{n}}.
\end{align}
Finally, note that the left-hand side of the above inequality upper bounds $\sum_{B\in\mB_d}\psi_B\mathbb E_{\mD}[\hat\eta_B(\phi)|\phi\in B]=\mathbb E_{\mD}[\Delta(\phi)]$
by Cauchy-Schwarz inequality. This completes the proof of Lemma \ref{thm:lplr}.

\end{document}